\theoremstyle{plain}
\newcommand\tikzmark[1]{%
  \tikz[remember picture,overlay]\node[inner xsep=0pt] (#1) {};}
\newcommandtwoopt\Textbox[5][14cm][2cm]{%
\begin{tikzpicture}[remember picture,overlay]
  \coordinate (aux) at ([xshift=#1]#4);
  \node[inner ysep=3pt,yshift=0.5ex,draw=pink,thick,
    fit=(#3) (aux),baseline] 
    (box) {};
  \node[text width=#2,anchor=north east,
    font=\sffamily\footnotesize,
  align=right
    ] 
    at (box.north east) {#5};
\end{tikzpicture}%
}
\newcommand{\policy}{\pi_{\bm{\theta}}}
\newcommand{\policyy}{\pi_{{\bm\theta}^{'}}}
\newcommand{\calA}{\mathcal{A}}
\newcommand{\calC}{\mathcal{C}}
\newcommand{\calD}{\mathcal{D}}
\newcommand{\calL}{\mathcal{L}}
\newcommand{\calS}{\mathcal{S}}
\newcommand{\calM}{\mathcal{M}}
\newcommand{\bA}{\mathbf{A}}
\newcommand{\bG}{\mathbf{G}}
\newcommand{\bD}{\mathbf{D}}
\newcommand{\bH}{\mathbf{H}}
\newcommand{\bI}{\mathbf{I}}
\newcommand{\bL}{\mathbf{L}}
\newcommand{\bP}{\mathbf{P}}
\newcommand{\Pro}{\mathbb{P}}
\newcommand{\ba}{\mathbf{a}}
\newcommand{\bd}{\mathbf{d}}
\newcommand{\bg}{\mathbf{g}}
\newcommand{\bv}{\mathbf{v}}
\newcommand{\mathbr}[1]{\bm{\mathbf{#1}}}
\newcommand{\vrho}{\mathbr{\rho}}
\newcommand{\hyscoreprime}[1][\vtheta]{\nabla_{\vrho'}\log\nu_{\vrho'}}
\begin{document}
\title{CUP: A Conservative Update Policy Algorithm for Safe Reinforcement Learning \footnote{L.Yang do partial work of this submission when studying at Zhejiang University, and L.Yang now is with Peking University.}}

 \author[1,2]{Long Yang}
  \author[2]{Jiaming Ji}
  \author[2]{Juntao Dai}
      \author[3]{Linrui Zhang}
   \author[4]{Yu Zhang}
   \author[2]{Pengfei Li}
\author[2]{Gang Pan}
\affil[1]{School of Artificial Intelligence, Peking University, Beijing, China}
\affil[2]{College of Computer Science and Technology, Zhejiang University, China}
\affil[3]{Tsinghua University,Beijing,China}
\affil[4]{Netease Games AI Lab, HangZhou, China}
\affil[ ]{\textsuperscript{1}\texttt{\{yanglong001\}@pku.edu.cn}}
\affil[ ]{\textsuperscript{2}\texttt{\{juntaodai,gpan\}@zju.edu.cn}}
\date{\today}

\maketitle
\begin{abstract}
Safe reinforcement learning (RL) is still very challenging since it requires the agent to consider both return maximization and safe exploration.
In this paper, we propose CUP, a \textbf{C}onservative \textbf{U}pdate \textbf{P}olicy algorithm with a theoretical safety guarantee.
We derive the CUP based on the new proposed performance bounds and surrogate functions.
Although using bounds as surrogate functions to design safe RL algorithms have appeared in some existing works, we develop them at least three aspects: 
\textbf{(i)} We provide a rigorous theoretical analysis to extend the surrogate functions to generalized advantage estimator (GAE).
GAE significantly reduces variance empirically while maintaining a tolerable level of bias, which is an efficient step for us to design CUP;
\textbf{(ii)} The proposed bounds are tighter than existing works, i.e., using the proposed bounds as surrogate functions are better local approximations to the objective and safety constraints.
\textbf{(iii)} The proposed CUP provides a non-convex implementation via first-order optimizers, which does not depend on any convex approximation.
Finally, extensive experiments show the effectiveness of CUP where the agent satisfies safe constraints. We have opened the source code of CUP at \url{https://github.com/RL-boxes/Safe-RL}.
\end{abstract}

\section{Introduction}

Reinforcement learning (RL) \cite{sutton1998reinforcement} has achieved significant successes in many fields \cite{mnih2015human,silver2017mastering,openaifive2019,afsar2021reinforcement},Dota \cite{openaifive2019},
robotics \cite{deisenroth2013survey}, playing Go \cite{silver2016mastering,silver2017mastering}, 
recommendation system \cite{afsar2021reinforcement},
and Starcraft \cite{vinyals2019alphastar}.
However, most RL algorithms improve the performance under the assumption that an agent is free to explore any behaviors.
In real-world applications, only considering return maximization is not enough, and we also need to consider safe behaviors.
For example, a robot agent should avoid playing actions that irrevocably harm its hardware, and a recommender system should avoid presenting offending items to users.
Thus, it is crucial to consider \emph{safe exploration} for RL, which is usually formulated as constrained Markov decision processes (CMDP) \cite{altman1999constrained}.

It is challenging to solve CMDP since traditional approaches (e.g., Q-learning \cite{watkins1989learning} \& policy gradient \cite{williams1992simple}) usually violate the safe exploration constraints, which is undesirable for safe RL.
Recently, \cite{AchiamHTA17,yang2020projection,bharadhwaj2021conservative} suggest to use some surrogate functions to replace the objective and constraints.
However, their implementations involve some convex approximations to the non-convex objective and safe constraints,
which leads to many error sources and troubles.
Concretely, \cite{AchiamHTA17,yang2020projection,bharadhwaj2021conservative} approximate the non-convex objective (or constraints) with first-order or second Taylor expansion,
but their implementations still lack a theory to show the error difference between the original objective (or constraints) and its convex approximations.
Besides, their approaches involve the inverse of a high-dimension Fisher information matrix, which causes their algorithms to require a costly computation for each update when solving high-dimensional RL problems.

\subsection{Our Main Work}
To address above problems, we propose the \emph{conservative update policy} (CUP) algorithm with a theoretical safety guarantee.
We derive the CUP bases on new surrogate functions, and provide a practical implementation of CUP that does not depend on any convex approximation to adapt high-dimensional safe RL.

Concretely, in Section \ref{sec:generalized-bound}, Theorem \ref{them:general-performance-difference} shows generalized difference bounds between two arbitrary policies for the objective and constraints.
Those bounds provide principled approximations to the objective and constraints, which are theoretical foundations for us to use those bounds as surrogate functions to replace 
objective and constraints to design algorithms.

Although using difference bound to replace objective or constraints has appeared in some existing works (e.g., \cite{kakade2002approximately,schulman2015trust,AchiamHTA17}), Theorem \ref{them:general-performance-difference} improves their bounds at least two aspects:
\textbf{(i)} Firstly, our rigorous theoretical analysis extends the bound w.r.t. generalized advantage estimator (GAE) \cite{schulman2016high}.
GAE significantly reduces variance while maintains a tolerable level of bias, which is one of the critical steps for us to design efficient algorithms in the later section.
Although \cite{zhang2020first,kang2021learning} have applied GAE to solve safe RL problems, their approaches are empirical and lack a theoretical analysis w.r.t. GAE.
Thus, our result provides a theory to illustrate the effectiveness of the work \cite{zhang2020first,kang2021learning}.
\textbf{(ii)} Our new bounds in refine classic difference bounds.
For example, our bounds are more compact than \cite{AchiamHTA17}, i,e., using our new bounds as surrogate functions are better local approximations to the objective and constraints.
Besides, the surrogate functions w.r.t. our new bounds are more accessible to be estimated from the samples than the approaches appears in \cite{kakade2002approximately,schulman2015trust}), for more discussions, please see Remark \ref{remark:Comparison-with-kakade}.

In Section \ref{sec:algorithm}, we provide the necessary details of the proposed CUP.
The CUP contains two steps: it performs a policy improvement at first, then it projects the policy back onto the safe region to reconcile the constraint violation.
Theorem \ref{them-re-cost} shows a lower bound on policy improvement and an upper bound on constraint violation for CPU at each update.
Notably, the result in Theorem \ref{them-re-cost} shows the bound of CUP is more compact than state-of-the-art safe RL algorithms: CPO \cite[Proposition 1-2]{AchiamHTA17}, PCPO \cite[Theorem 1]{yang2020projection} and FOCOPS \cite{zhang2020first}, which provides a partial explanation for why CUP is so good in practice.
For more discussions, please refer to Remark \ref{remark:Comparison-with-soda}.
Finally, we provide a practical implementation of sample-based CUP. Such an implementation allows us to use deep neural networks to train a model.
Mainly, CUP does not depend on any convex approximation for objective and constraints, and it optimizes the objective according to the first-order optimizer.
Extensive high-dimensional experiments on continuous control tasks show the effectiveness of CUP where the agent satisfies safe constraints.

\section{Preliminaries}
\label{Background and Notations}

Reinforcement learning (RL) \cite{sutton1998reinforcement} is often formulated as 
a \emph{Markov decision process} (MDP) \cite{puterman2014markov} that is a tuple $\mathcal{M}=(\mathcal{S},\mathcal{A},\mathbb{P},{r},\rho_0,\gamma)$.
Here $\mathcal{S}$ is state space, $\mathcal{A}$ is action space.
$\mathbb{P}(s^{'}|s,a)$ is probability of state transition from $s$ to $s^{'}$ after playing $a$.
$r(\cdot):\mathcal{S}\times\mathcal{S}\times\mathcal{A}\rightarrow \R$,
and $r(s'|s,a)$ denotes the reward that the agent observes when state transition from $s$ to $s^{'}$ after it plays $a$.
$\rho_{0}(\cdot):\mathcal{S}\rightarrow[0,1]$ is the initial state distribution and $\gamma\in(0,1)$.

A stationary parameterized policy $\policy$ is a probability distribution defined on $\mathcal{S}\times\mathcal{A}$, $\policy(a|s)$ denotes the probability of playing $a$ in state $s$.
We use $\Pi_{\bm{\theta}}$ to denote the set of all stationary policies, where $\Pi_{{{\bm{\theta}}}}=\{\pi_{{{\bm{\theta}}}}:{{\bm{\theta}}}\in\R^{p}\}$, and ${{\bm{\theta}}}$ is a parameter needed to be learned.
Let $\mathbf{P}_{\pi_{\bm \theta}}\in\R^{|\calS|\times|\calS|}$ be a state transition probability matrix, and their components are:
$
\mathbf{P}_{\pi_{\bm \theta}}[s,s'] =\sum_{a\in\mathcal{A}}\pi_{\bm{\theta}}(a|s)\mathbb{P}(s'|s,a)=:\Pro_{\policy}(s^{'}|s),
$
which denotes one-step state transformation probability from $s$ to $s^{'}$ by executing $\policy$.
Let $\tau=\{s_{t}, a_{t}, r_{t+1}\}_{t\ge0}$ be a trajectory generated by $\policy$, 
where $s_{0}\sim\rho_{0}(\cdot)$, $a_{t}\sim\policy(\cdot|s_t)$, $s_{t+1}\sim \mathbb{P}(\cdot|s_{t},a_{t})$, and $r_{t+1}=r(s_{t+1}|s_t,a_t)$.
We use $\mathbb{P}_{\policy}(s_t=s^{'}|s)$ to denote the probability of visiting the state $s^{'}$ after $t$
time steps from the state $s$ by executing $\policy$.
Due to the Markov property, $\mathbb{P}_{\policy}(s_t=s^{'}|s)$ is $(s,s^{'})$-th component of the matrix $\mathbf{P}^{t}_{\pi_{\bm \theta}}$, i.e.,
$
\mathbb{P}_{\policy}(s_t=s^{'}|s)=\mathbf{P}^{t}_{\pi_{\bm \theta}}[s,s^{'}].
$
Finally, let $d_{\policy}^{s_0}(s)=(1-\gamma)\sum_{t=0}^{\infty}\gamma^{t}\mathbb{P}_{\policy}(s_t=s|s_0)$ be the stationary state distribution of the Markov chain (starting at $s_0$) induced by policy $\policy$.
We define
$
d_{\policy}^{\rho_0}(s)=\mathbb{E}_{s_0\sim\rho_{0}(\cdot)}[d_{\policy}^{s_0}(s)]
$
as the discounted state visitation distribution on initial distribution $\rho_0 (\cdot)$.

The \emph{state value function} of $\policy$ is defined as $V_{\policy}(s) = \mathbb{E}_{\policy}[\sum_{t=0}^{\infty}\gamma^{t}r_{t+1}|s_{0} = s],$
where $\mathbb{E}_{\policy}[\cdot|\cdot]$ denotes a conditional expectation on actions which are selected by $\policy$.
Its \emph{state-action value function} is $Q_{\policy}(s,a) = \mathbb{E}_{\policy}[\sum_{t=0}^{\infty}\gamma^{t}r_{t+1}|s_{0} = s,a_{0}=a]$, 
and advantage function is $A_{\policy}(s,a)=Q_{\policy}(s,a) -V_{\policy}(s)$.
The goal of reinforcement learning is to maximize $J(\policy)$:
\begin{flalign}
  \label{J-objectiove}
  J(\policy)=\E_{s\sim d^{\rho_0}_{\policy}(\cdot)}[V_{\policy}(s)].
\end{flalign}

\subsection{Policy Gradient and GAE}

Policy gradient \cite{williams1992simple,sutton2000policy} is widely used to solve policy optimization, which maximizes the expected total reward by repeatedly estimating the gradient $\nabla J(\policy)$.
The work
\cite{schulman2016high} summarize several different related expressions for the policy gradient:
\begin{flalign}
\nabla J(\policy)=\E\left[
\sum_{t=0}^{\infty}\Psi_{t}\nabla\log\policy(a_t|s_t)
\right],
\end{flalign}
where $\Psi_{t}$ can be total discounted reward of the trajectory, value function, advantage function or temporal difference (TD) error.
As stated by \cite{schulman2016high}, the choice $\Psi_{t}=A(s_t,a_t)$ yields almost the lowest possible variance, which is consistent with the theoretical analysis \cite{greensmith2004variance,wu2018variance}.
Furthermore, \cite{schulman2016high} propose generalized advantage estimator (GAE) $\hat{A}^{\text{GAE}(\gamma,\lambda)}_{t}(s_t,a_t)$ to replace $\Psi_{t}$: for any $\lambda\in[0,1]$,
\begin{flalign}
\hat{A}^{\text{GAE}(\gamma,\lambda)}_{t}(s_t,a_t)=\sum_{\ell=0}^{\infty}(\gamma\lambda)^{\ell}\delta^{V}_{t+\ell},
\end{flalign}
where $\delta^{V}_{t}=r_{t+1}+\gamma V(s_{t+1})-V(s_{t})$ is TD error, and $V(\cdot)$ is an estimator of value function.
GAE is an efficient technique for data efficiency and reliable performance of reinforcement learning.

\subsection{Safe Reinforcement Learning}

Safe RL is often formulated as 
a constrained MDP (CMDP) $\calM\cup\calC$ \cite{altman1999constrained},
which is a standard MDP $\calM$ augmented with an additional constraint set $\calC$.
The set $\calC=\{(c_i,b_i)\}_{i=1}^{m}$,
where $c_i$ are cost functions: $c_i : \calS\times\calA \rightarrow \R$, and limits are $b_i$, $i = 1,\cdot,m$. 
The \emph{cost-return} is defined as:
$J^{c_i}(\policy)=\E_{\policy}\left[\sum_{t=0}^{\infty}\gamma^{t}c_{i}(s_{t},a_{t})\right]$, 
the feasible policy set $\Pi_{\calC}$ is defined as:
\[
\Pi_{\calC}=
\bigcap_{i=1}^{m}
\left\{
\policy\in\Pi_{\bm{\theta}}~~\text{and}~~J^{c_i}(\policy)_\leq b_i
\right\}.
\]
The goal of safe RL is to search the optimal policy $\pi_{\star}$ s.t.
\begin{flalign}
\label{def:problem-setting}
\pi_{\star}=\arg\max_{\policy\in\Pi_{\calC}} J(\policy).
\end{flalign}
Furthermore, we define value functions, action-value functions, and advantage functions for the auxiliary costs in analogy to $V_{\policy}, Q_{\policy}$, and $A_{\policy}$, 
with $c_i$ replacing $r$ respectively, we denote them as $V^{c_i}_{\policy}, Q^{c_i}_{\policy}$, and $A^{c_i}_{\policy}$.
For example, $V^{c_i}_{\policy}(s) = \mathbb{E}_{\policy}\left[\sum_{t=0}^{\infty}\gamma^{t}c_i(s_{t},a_{t})|s_{0} = s\right]$.
Without loss of generality, we will restrict our discussion to the case of one constraint with a cost function $c$ and upper bound $b$.
Finally, we extend the GAE w.r.t. auxiliary cost function $c$:
\begin{flalign}
\label{def:gae-cost}
\hat{A}^{\text{GAE}(\gamma,\lambda)}_{C,t}(s_t,a_t)=\sum_{\ell=0}^{\infty}(\gamma\lambda)^{\ell}\delta^{C}_{t+\ell},
\end{flalign}
where $\delta^{C}_{t}=r_{t+1}+\gamma C(s_{t+1})-C(s_{t})$ is TD error, and $C(\cdot)$ is an estimator of cost function $c$.

\section{Generalized Policy Performance Difference Bounds}

\label{sec:generalized-bound}

In this section, we show some generalized policy optimization performance bounds for $J(\policy)$ and $J^{c}(\policy)$. 
The proposed bounds provide some new certain surrogate functions w.r.t. the objective and cost function, which are theoretical foundations for us to design efficient algorithms to improve policy performance and satisfy constraints.
Additionally, those bounds refine or extend some existing works (e.g., \cite{kakade2002approximately,schulman2015trust,AchiamHTA17}) to GAE case that significantly
reduces variance while maintains a tolerable level of bias, which is one of the key steps for us to propose efficient algorithms in the later section.

Before we present our new bounds, let us revisit a classic result about policy performance difference from \cite{kakade2002approximately}, i.e., the next Eq.(\ref{performance-difference-2002}),
\begin{flalign}
\label{performance-difference-2002}
J(\pi_{{\bm{\theta}}})-J(\policyy)
=(1-\gamma)^{-1}\E_{s\sim d_{\policy}^{\rho_0}(\cdot),a\sim\policy (\cdot|s)}\left[A_{\policyy}(s,a)\right].
\end{flalign}
Eq.(\ref{performance-difference-2002}) shows a difference between two arbitrary policies $\pi_{\bm\theta}$ and $\pi_{{\bm\theta}^{'}}$ with different parameters $\bm{\theta}$ and $\bm{\theta}^{'}$.
According to (\ref{performance-difference-2002}), we rewrite the policy optimization (\ref{def:problem-setting}) as follows
\begin{flalign}
\label{def:rewriting-problem-setting}
\pi_{\star}=\arg\max_{\policy\in\Pi_{\calC}} \E_{s\sim d_{\policy}^{\rho_0}(\cdot),a\sim\policy (\cdot|s)}\left[A_{\policyy}(s,a)\right].
\end{flalign}
However, Eq.(\ref{performance-difference-2002}) or (\ref{def:rewriting-problem-setting}) is very intractable for sampling-based policy optimization since it requires the data comes from the (unknown) policy $\policy$ that needed to be learned.

In this section, our new bound refines the result (\ref{performance-difference-2002}), which provide the sights for surrogate functions to solve safe RL problem (\ref{def:problem-setting}).
For more discussions about the difference between our new bound and Eq.(\ref{performance-difference-2002}), see Remark \ref{remark:Comparison-with-kakade}.

\subsection{Some Additional Notations}

We use a bold lowercase letter to denote a vector, e.g., $\ba=(a_1,a_2,\cdots,a_n)$, and its $i$-th element $\ba[i]=:a_{i}$.
Let $\varphi(\cdot):\calS\rightarrow\R$ be a function defined on $\calS$, $\delta_t^{\varphi}=r(s_{t+1}|s_t,a_t)+\gamma\varphi(s_{t+1})-\varphi(s_{t})$ is TD error w.r.t. $\varphi(\cdot)$. 
For two arbitrary policies $\pi_{\bm\theta}$ and $\pi_{{\bm\theta}^{'}}$, we denote $\delta^{\varphi}_{\policy,t}(s)$ as the expectation of TD error, and define $ \Delta_{t}^{\varphi}(\policy,\policyy,s)$ as the difference between $\delta^{\varphi}_{\policy,t}(s)$ and $\delta^{\varphi}_{\policyy,t}(s)$: $\forall s\in\calS$, 
\begin{flalign}
\nonumber
\delta^{\varphi}_{\policy,t}(s)=\underset{\begin{subarray}{c} s_t \sim \Pro_{\policyy}(\cdot|s)\\ a_{t}\sim{\policyy}(\cdot|s_t)\\ s_{t+1}\sim\Pro(\cdot|s_t,a_t) \end{subarray}}
\E\left[\delta_t^{\varphi}\right],~~~~~~
\Delta_{t}^{\varphi}(\policy,\policyy,s)=\underset{\begin{subarray}{c} s_t \sim \Pro_{\policyy}(\cdot|s)\\ a_{t}\sim{\policyy}(\cdot|s_t)\\ s_{t+1}\sim\Pro(\cdot|s_t,a_t) \end{subarray}}
\E\left[\left(\dfrac{\policy(a_t|s_t)}{\policyy(a_t|s_t)}-1\right)\delta_t^{\varphi}\right].
\end{flalign}
Furthermore, we introduce  two vectors $\bm{\delta}^{\varphi}_{\policy,t},\bm{\Delta}_{t}^{\varphi}(\policy,\policyy)\in\R^{|\calS|}$, and their corresponding components are:
\begin{flalign}
\nonumber
\bm{\delta}^{\varphi}_{\policy,t}[s]=\delta^{\varphi}_{\policy,t}(s),~~\bm{\Delta}_{t}^{\varphi}(\policy,\policyy)[s]=\Delta_{t}^{\varphi}(\policy,\policyy,s).
\end{flalign}
Let matrix $\mathbf{P}^{(\lambda)}_{\pi_{\bm \theta}}=(1-\gamma\lambda)\sum_{{t}=0}^{\infty}(\gamma\lambda)^{{t}}\bP^{{t}+1}_{\policy}$, 
where $\lambda\in[0,1]$.
It is similar to the normalized discounted distribution $d_{\pi_{\bm {\theta}}}^{\rho_0}(s)$, we extend it to $\lambda$-version and denote it as ${d}_{\pi_{\bm {\theta}}}^{\lambda}(s)$:
\begin{flalign}
\nonumber
{d}_{\pi_{\bm {\theta}}}^{\lambda}(s)&=\E_{s_0\sim\rho_{0}(\cdot)}
\left[
(1-\tilde\gamma)\sum_{t=0}^{\infty}{\tilde\gamma}^{t}{\mathbb{P}}^{(\lambda)}_{\pi_{\bm {\theta}}}(s_t=s|s_0)
\right]\\
\nonumber
&=\dfrac{1-\gamma}{1-\gamma\lambda}\E_{s_0\sim\rho_{0}(\cdot)}
\left[
\sum_{t=0}^{\infty}\left(\dfrac{\gamma(1-\lambda)}{1-\gamma\lambda}\right)^{t}{\mathbb{P}}^{(\lambda)}_{\pi_{\bm {\theta}}}(s_t=s|s_0)
\right],
\end{flalign}
where $\tilde{\gamma}=\frac{\gamma(1-\lambda)}{1-\gamma\lambda}$, the probability $\mathbb{P}^{(\lambda)}_{\pi_{\bm {\theta}}}(s_t=s|s_0)$ is the $(s_0,s)$-th component of the matrix product 
\[\left(\mathbf{P}^{(\lambda)}_{\pi_{\bm \theta}}\right)^{t}=\prod_{i=1}^{t} \mathbf{P}^{(\lambda)}_{\pi_{\bm \theta}}=\underbrace{\mathbf{P}^{(\lambda)}_{\pi_{\bm \theta}}\cdot\mathbf{P}^{(\lambda)}_{\pi_{\bm \theta}}\cdots\mathbf{P}^{(\lambda)}_{\pi_{\bm \theta}}}_{t ~~\text{times}}.\]
Finally, we introduce a vector $\bd_{\pi_{\bm {\theta}}}^{\lambda}\in\R^{|\calS|}$, and its components are: $\bd_{\pi_{\bm {\theta}}}^{\lambda}[s]=d_{\pi_{\bm {\theta}}}^{\lambda}(s).$

\subsection{Main Results}

\begin{theorem}
[Generalized Policy Performance Difference]
\label{them:general-performance-difference}
For any function $\varphi(\cdot):\calS\rightarrow\R$, for two arbitrary policies $\pi_{\bm\theta}$ and $\pi_{{\bm\theta}^{'}}$,  
for any $p,q\in[1,\infty)$ such that $\frac{1}{p}+\frac{1}{q}=1$, we define two error terms:
\begin{flalign}
\label{error-01}
&\epsilon^{\varphi,(\lambda)}_{p,q,t}(\policy,\policyy)=:\|\bd_{\pi_{\bm {\theta}}}^{\lambda}-\bd_{\policyy}^{\lambda}\|_{p}\|{\bm{\delta}}^{\varphi}_{\policy,t}\|_{q},\\
\label{error-02}
L^{\varphi, \pm}_{p,q}(\policy,\policyy)&=:
\dfrac{1}{1-\tilde\gamma}
\sum_{t=0}^{\infty}\gamma^t\lambda^{t}\E_{s\sim{d}_{\policyy}^{\lambda}(\cdot)} \left[
\Delta_{t}^{\varphi}(\policy,\policyy,s) \pm\epsilon^{\varphi,(\lambda)}_{p,q,t}(\policy,\policyy)\right].
\end{flalign}
Then, the following bound w.r.t. policy performance difference $J(\pi_{\bm \theta})-J(\pi_{{\bm \theta}^{'}})$ holds:
\begin{flalign} 
\label{bound-diff-01}
L^{\varphi,-}_{p,q,}(\policy,\policyy)
\leq J(\pi_{\bm \theta})-J(\pi_{{\bm \theta}^{'}})
\leq
L^{\varphi,+}_{p,q,}(\policy,\policyy)
.
\end{flalign}
\end{theorem}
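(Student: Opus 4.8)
The plan is to recast everything in vector/matrix form over $\calS$, writing $J(\policy)$, $J(\policyy)$, $\bd^{\lambda}_{\policy}$, $\bd^{\lambda}_{\policyy}$ in terms of $\rho_0$, the one-step transition matrices $\bP_{\policy},\bP_{\policyy}$, and the resolvents $(\bI-\gamma\bP_{\policy})^{-1}$. The first step is a handful of purely algebraic identities for the $\lambda$-objects: starting from $\bP^{(\lambda)}_{\policy}=(1-\gamma\lambda)\bP_{\policy}(\bI-\gamma\lambda\bP_{\policy})^{-1}$ and $\tilde\gamma=\gamma(1-\lambda)/(1-\gamma\lambda)$, one verifies $\tfrac{1}{1-\tilde\gamma}=\tfrac{1-\gamma\lambda}{1-\gamma}$, then $\bI-\tilde\gamma\bP^{(\lambda)}_{\policy}=(\bI-\gamma\lambda\bP_{\policy})^{-1}(\bI-\gamma\bP_{\policy})$, and hence $(\bd^{\lambda}_{\policy})^{\top}=(1-\tilde\gamma)\,\rho_0^{\top}(\bI-\gamma\bP_{\policy})^{-1}(\bI-\gamma\lambda\bP_{\policy})$. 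Together with $\sum_{t\ge0}(\gamma\lambda)^{t}\bP^{t}_{\policy}=(\bI-\gamma\lambda\bP_{\policy})^{-1}$, these let one pass freely between the $\lambda$-weighted sums appearing in the theorem and the ordinary discounted state-visitation objects.

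The second step is a GAE-weighted policy-performance-difference identity that plays the role the Kakade--Langford identity \eqref{performance-difference-2002} plays when $\lambda=0$. The engine is the telescoping property of TD errors: for any $\varphi$ and any policy $\pi$, $(\bI-\gamma\bP_{\pi})^{-1}(\mathbf{r}_{\pi}+\gamma\bP_{\pi}\bm\varphi-\bm\varphi)=\bm{V}_{\pi}-\bm\varphi$, since the factor $(\gamma\bP_{\pi}-\bI)$ produced by the TD error cancels the resolvent. Combining this with the Step-1 identities (equivalently, running the performance-difference argument inside the auxiliary $\tilde\gamma$-discounted chain with kernel $\bP^{(\lambda)}_{\pi}$) gives, for any policy $\pi$ with TD-error expectations taken along $\pi$'s own trajectory,
\[
\frac{1}{1-\tilde\gamma}\sum_{t=0}^{\infty}(\gamma\lambda)^{t}\,\E_{s\sim d^{\lambda}_{\pi}(\cdot)}\!\left[\delta^{\varphi}_{\pi,t}(s)\right]=J(\pi)-\E_{s_0\sim\rho_0}\!\left[\varphi(s_0)\right].
\]
Writing this for both $\policy$ and $\policyy$ and subtracting makes the $\varphi$-boundary term cancel (this is precisely why $\varphi$ may be an arbitrary function), and rewriting the $\policyy$-piece through the importance ratio $\policy(a|s)/\policyy(a|s)$ reorganizes the right-hand side as $\tfrac{1}{1-\tilde\gamma}\sum_{t}(\gamma\lambda)^{t}\E_{s\sim d^{\lambda}_{\policyy}(\cdot)}[\Delta^{\varphi}_{t}(\policy,\policyy,s)]$ plus a residual that is a signed pairing of $\bd^{\lambda}_{\policy}-\bd^{\lambda}_{\policyy}$ against the vectors $\bm\delta^{\varphi}_{\policy,t}$.

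The third step bounds that residual by Hölder's inequality: for each $t$, $|\langle\bd^{\lambda}_{\policy}-\bd^{\lambda}_{\policyy},\,\bm\delta^{\varphi}_{\policy,t}\rangle|\le\|\bd^{\lambda}_{\policy}-\bd^{\lambda}_{\policyy}\|_{p}\,\|\bm\delta^{\varphi}_{\policy,t}\|_{q}=\epsilon^{\varphi,(\lambda)}_{p,q,t}(\policy,\policyy)$, and taking the two signs produces exactly the $\pm\epsilon^{\varphi,(\lambda)}_{p,q,t}$ terms of $L^{\varphi,\pm}_{p,q}$; since $d^{\lambda}_{\policyy}$ is a probability distribution, the constant $\pm\epsilon^{\varphi,(\lambda)}_{p,q,t}$ can be absorbed under $\E_{s\sim d^{\lambda}_{\policyy}(\cdot)}$ to match the stated form. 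Summing the per-$t$ estimates against the weights $(\gamma\lambda)^{t}/(1-\tilde\gamma)$ and invoking the identity above yields $L^{\varphi,-}_{p,q}(\policy,\policyy)\le J(\policy)-J(\policyy)\le L^{\varphi,+}_{p,q}(\policy,\policyy)$. Absolute convergence of all the series (finite $\calS$, $\gamma\lambda<1$ and $\tilde\gamma<1$, so $\bm\delta^{\varphi}_{\pi,t}$ is uniformly bounded and every resolvent exists) justifies the interchanges of summation and expectation; as sanity checks, $\lambda=0$ collapses the sum to its $t=0$ term and recovers a CPO-type one-step bound, while $\policy=\policyy$ forces $\Delta^{\varphi}_{t}\equiv0$ and $\epsilon^{\varphi,(\lambda)}_{p,q,t}\equiv0$, giving the trivial $0\le0\le0$.

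The main obstacle is the second step: pinning down the GAE-weighted difference identity so that the $(\gamma\lambda)^{t}$ weights and the $\lambda$-visitation distributions sit in exactly the right places, and in particular verifying that after extracting the $\Delta^{\varphi}_{t}$ terms the leftover is precisely a pairing against $\bd^{\lambda}_{\policy}-\bd^{\lambda}_{\policyy}$ --- so that the Hölder step applies cleanly --- rather than a messier mixture of transition-matrix differences. This bookkeeping with the $\lambda$-smoothed operators is exactly what is new relative to the $\lambda=0$ (CPO-style) argument.
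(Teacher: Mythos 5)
Your proposal is correct and follows essentially the same route as the paper: the key identity $J(\pi)=\E_{s_0\sim\rho_0}[\varphi(s_0)]+\frac{1}{1-\tilde\gamma}\sum_{t}(\gamma\lambda)^t\E_{s\sim d^{\lambda}_{\pi}(\cdot)}[\delta^{\varphi}_{\pi,t}(s)]$ is exactly the paper's Proposition \ref{objective-td-error-version}, and the subsequent subtraction, importance-sampling rewrite yielding $\Delta^{\varphi}_{t}(\policy,\policyy,s)$, and H\"older bound on $\langle\bd^{\lambda}_{\policy}-\bd^{\lambda}_{\policyy},\bm{\delta}^{\varphi}_{\policy,t}\rangle$ reproduce the paper's three-step argument. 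The only cosmetic difference is that you derive the key identity from the resolvent/telescoping relation on the value-function side, while the paper pairs the stationarity equation of $\bd^{\lambda}_{\policy}$ with the vector $\bm{\phi}$ and unrolls; these are the same computation.
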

\begin{proof}
See Appendix \ref{sec:proof-them-01}.
\end{proof}
The bound (\ref{bound-diff-01}) is \emph{tight}, i.e., if $\policy=\policyy$, all the three terms in Eq.(\ref{bound-diff-01}) are zero identically.
From Eq.(\ref{error-02}), we know the performance difference bound
$L^{\varphi, \pm}_{p,q}(\policy,\policyy)$ (\ref{bound-diff-01}) can be interpreted by two distinct difference parts:
\textbf{(i)} the first difference part, i.e., the expectation $\Delta_{t}^{\varphi}(\policy,\policyy,s)$, which is determined by the difference between TD errors of $\policy$ and $\policyy$;
\textbf{(ii)} the second difference part, i.e., the discounted distribution difference $\epsilon^{\varphi,(\lambda)}_{p,q,t}(\policy,\policyy)$, which is determined by the gap between the normalized discounted distribution of $\policy$ and $\policyy$.
Thus, the difference of both TD errors and discounted distribution determine the policy difference $J(\policy)-J(\policyy)$.

The different choices of $p$ and $q$ lead Eq.(\ref{bound-diff-01}) to be different bounds.
If $p=1,q=\infty$, we denote \[\epsilon^{\varphi}_{\policy,t}=:\|{\bm{\delta}}^{\varphi}_{\policy,t}\|_{q}=\max_{s_{t}\in\calS}\E_{a_t\sim\policy(\cdot|s_t),s_{t+1}\sim\Pro(\cdot|s_t,a_t)}[|\delta_{t}^{\varphi}|],\]
then,
according to Lemma \ref{lem:difference-distri} (see Appendix \ref{sec:difference-distri}), when $p=1,q=\infty$, then error $\epsilon^{\varphi,(\lambda)}_{p,q,t}(\policy,\policyy)$ is reduced to:
\begin{flalign}
\nonumber
\epsilon^{\varphi,(\lambda)}_{p,q,t}(\policy,\policyy)\big|_{p=1,q=\infty}
\leq\dfrac{1}{1-\tilde\gamma}\cdot
\dfrac{\gamma(1-\lambda)\epsilon^{\varphi}_{\policy,t}}{\left|1-2\gamma\lambda|\calS||\calA|\right|}
\E_{s\sim d_{\policyy}^{\lambda}(\cdot)}
\left[2D_{\text{TV}}(\policyy,\policy)[s]\right],
\end{flalign}
where $D_{\text{TV}}(\policyy,\policy)[s]$ is the total variational divergence between action distributions at state $s$, i.e.,
\[
2D_{\text{TV}}(\policyy,\policy)[s]=\sum_{a\in\calA}\left|{{\policyy}}(a|s)-{{\policy}}(a|s)\right|.
\]
Finally, let $\varphi=V_{\policyy}$, the left side of (\ref{bound-diff-01}) in Theorem \ref{them:general-performance-difference} implies a lower bound of performance difference, which illustrates the worse case of approximation error, we present it in Proposition \ref{propo-01}.
\begin{proposition}[Worse case approximation error]
\label{propo-01}
For any two policies $\pi_{\bm\theta}$ and $\pi_{{\bm\theta}^{'}}$, let $\epsilon^{V}_{\policy}(\policyy)=:\sup_{t\in\N^{+}}\{\epsilon^{\varphi}_{\policy,t}: \varphi=V_{\policyy}\}$, then the following bound holds
\begin{flalign}
\label{pro1-bound-01}
J(\policy)-J(\policyy)
\ge\dfrac{1}{1-\tilde\gamma}\E_{s\sim{d}_{\policyy}^{\lambda}(\cdot),a\sim\policy(\cdot|s)}
\left[
A^{\emph{GAE}(\gamma,\lambda)}_{\policyy}(s,a)
-
\frac{2\gamma(1-\lambda)\epsilon^{V}_{\policy}(\policyy)}{(1-\gamma\lambda)\left|1-2\gamma\lambda|\calS||\calA|\right|}
D_{\emph{TV}}(\policyy,\policy)[s]
\right].
\end{flalign}
\end{proposition}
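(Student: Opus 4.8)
\emph{Proof plan.} The plan is to derive~(\ref{pro1-bound-01}) by specializing Theorem~\ref{them:general-performance-difference}: take $\varphi=V_{\policyy}$ (the value function of the baseline policy) together with the conjugate pair $(p,q)=(1,\infty)$, and keep only the left-hand inequality of~(\ref{bound-diff-01}), i.e.\ $J(\policy)-J(\policyy)\ge L^{V_{\policyy},-}_{1,\infty}(\policy,\policyy)$. Since $\epsilon^{\varphi,(\lambda)}_{p,q,t}(\policy,\policyy)=\|\bd_{\policy}^{\lambda}-\bd_{\policyy}^{\lambda}\|_p\,\|\bm{\delta}^{\varphi}_{\policy,t}\|_q$ carries no dependence on $s$, definition~(\ref{error-02}) lets me write $L^{V_{\policyy},-}_{1,\infty}(\policy,\policyy)=\frac{1}{1-\tilde\gamma}\sum_{t=0}^{\infty}(\gamma\lambda)^{t}\big(\mathbb{E}_{s\sim d_{\policyy}^{\lambda}(\cdot)}[\Delta_{t}^{V_{\policyy}}(\policy,\policyy,s)]-\epsilon^{V_{\policyy},(\lambda)}_{1,\infty,t}(\policy,\policyy)\big)$. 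It then remains to (a) rewrite the ``center'' term as a GAE advantage expectation and (b) lower-bound the ``$-\epsilon$'' term.

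For (a): because $\varphi=V_{\policyy}$ is the value function of $\policyy$, the Bellman relation gives $\mathbb{E}_{a_t\sim\policyy(\cdot|s_t),\,s_{t+1}\sim\Pro(\cdot|s_t,a_t)}[\delta_{t}^{V_{\policyy}}]=0$, so the ``$-1$'' in the definition of $\Delta_{t}^{V_{\policyy}}(\policy,\policyy,s)$ contributes nothing and, after the importance-weighting, $\Delta_{t}^{V_{\policyy}}(\policy,\policyy,s)$ collapses to the one-step TD expectation with the action drawn from $\policy$ in place of $\policyy$. Re-summing $\sum_{t\ge0}(\gamma\lambda)^{t}\,\mathbb{E}_{s\sim d_{\policyy}^{\lambda}(\cdot)}[\Delta_{t}^{V_{\policyy}}(\policy,\policyy,s)]$ and regrouping the geometric factors exactly as in the proof of Theorem~\ref{them:general-performance-difference} — using the definitions of $\mathbf{P}^{(\lambda)}_{\policyy}$, of $d_{\policyy}^{\lambda}$, and of the generalized advantage estimator — identifies $\frac{1}{1-\tilde\gamma}\sum_{t\ge0}(\gamma\lambda)^{t}\,\mathbb{E}_{s\sim d_{\policyy}^{\lambda}(\cdot)}[\Delta_{t}^{V_{\policyy}}(\policy,\policyy,s)]$ with $\frac{1}{1-\tilde\gamma}\,\mathbb{E}_{s\sim d_{\policyy}^{\lambda}(\cdot),\,a\sim\policy(\cdot|s)}[A^{\text{GAE}(\gamma,\lambda)}_{\policyy}(s,a)]$. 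This is simply the $\varphi=V_{\policyy}$ reading of the performance-difference identity established in proving Theorem~\ref{them:general-performance-difference}, so no new computation is needed here beyond tracking the constants.

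For (b): since we want a \emph{lower} bound on $J(\policy)-J(\policyy)$, we need an \emph{upper} bound on each $\epsilon^{V_{\policyy},(\lambda)}_{1,\infty,t}(\policy,\policyy)$, and this is exactly the reduction stated just before the proposition (obtained from Lemma~\ref{lem:difference-distri}): $\epsilon^{V_{\policyy},(\lambda)}_{1,\infty,t}(\policy,\policyy)\le\frac{1}{1-\tilde\gamma}\cdot\frac{\gamma(1-\lambda)\epsilon^{V_{\policyy}}_{\policy,t}}{\left|1-2\gamma\lambda|\calS||\calA|\right|}\,\mathbb{E}_{s\sim d_{\policyy}^{\lambda}(\cdot)}[2D_{\text{TV}}(\policyy,\policy)[s]]$. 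I then bound $\epsilon^{V_{\policyy}}_{\policy,t}\le\epsilon^{V}_{\policy}(\policyy)$ for every $t$ by definition of the supremum, pull the resulting constant out of the $t$-sum, evaluate $\sum_{t\ge0}(\gamma\lambda)^{t}=(1-\gamma\lambda)^{-1}$, and absorb the explicit factor $2$; the two nested factors $\frac{1}{1-\tilde\gamma}$ and the extra $\frac{1}{1-\gamma\lambda}$ then combine through $\frac{1}{1-\tilde\gamma}=\frac{1-\gamma\lambda}{1-\gamma}$, so that the ``$-\epsilon$'' contribution becomes $-\frac{1}{1-\gamma}\cdot\frac{2\gamma(1-\lambda)\epsilon^{V}_{\policy}(\policyy)}{\left|1-2\gamma\lambda|\calS||\calA|\right|}\,\mathbb{E}_{s\sim d_{\policyy}^{\lambda}(\cdot)}[D_{\text{TV}}(\policyy,\policy)[s]]$, which equals $-\frac{1}{1-\tilde\gamma}$ times the second term inside the bracket of~(\ref{pro1-bound-01}) (the factor $D_{\text{TV}}(\policyy,\policy)[s]$ does not depend on $a$, so it can be placed under $\mathbb{E}_{a\sim\policy(\cdot|s)}$). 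Adding the center term from (a) to this error bound yields~(\ref{pro1-bound-01}).

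The only step above that is more than routine bookkeeping is the re-summation in (a): arranging the telescoping among $d_{\policyy}^{\lambda}$, the outer $(\gamma\lambda)^{t}$ weights, and the inner GAE series so that the center term equals \emph{exactly} $\frac{1}{1-\tilde\gamma}\,\mathbb{E}_{s\sim d_{\policyy}^{\lambda}(\cdot),\,a\sim\policy(\cdot|s)}[A^{\text{GAE}(\gamma,\lambda)}_{\policyy}(s,a)]$. Because that identity is already carried out while proving Theorem~\ref{them:general-performance-difference}, it can be invoked directly here; what remains is purely clerical, chiefly keeping the sign and the two $\frac{1}{1-\tilde\gamma}$ factors straight so that the error enters~(\ref{pro1-bound-01}) with the correct negative orientation.
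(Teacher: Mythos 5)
Your overall route is the same one the paper intends: Proposition \ref{propo-01} has no separate proof in the appendix and is meant to follow by specializing Theorem \ref{them:general-performance-difference} to $\varphi=V_{\policyy}$ and $(p,q)=(1,\infty)$, identifying the $\Delta$-part of $L^{\varphi,-}_{1,\infty}$ with the GAE advantage expectation, and controlling the $\epsilon$-part through the displayed reduction obtained from Lemma \ref{lem:difference-distri}. Your step (a), including the observation that the Bellman relation for $V_{\policyy}$ kills the ``$-1$'' inside $\Delta_t^{V_{\policyy}}$, is exactly the intended argument, and you correctly flag the resummation into $A^{\text{GAE}(\gamma,\lambda)}_{\policyy}$ as the one step that is more than bookkeeping.

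The concrete problem is the constant accounting in step (b). After bounding $\epsilon^{V_{\policyy}}_{\policy,t}\le\epsilon^{V}_{\policy}(\policyy)$ and summing the geometric series, the error contribution carries \emph{three} factors: one $\frac{1}{1-\tilde\gamma}$ from the definition (\ref{error-02}) of $L^{\varphi,-}_{p,q}$, a second $\frac{1}{1-\tilde\gamma}$ from the displayed reduction of $\epsilon^{\varphi,(\lambda)}_{1,\infty,t}$, and a $\frac{1}{1-\gamma\lambda}$ from $\sum_{t\ge0}(\gamma\lambda)^{t}$. Their product is $\frac{1-\gamma\lambda}{(1-\gamma)^2}$, not the $\frac{1}{1-\gamma}$ you assert; the two differ by exactly $\frac{1}{1-\tilde\gamma}=\frac{1-\gamma\lambda}{1-\gamma}\ge1$. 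So the derivation as you describe it produces a strictly larger error term than the one appearing in (\ref{pro1-bound-01}), i.e.\ a weaker lower bound, and the claim that the factors ``combine through $\frac{1}{1-\tilde\gamma}=\frac{1-\gamma\lambda}{1-\gamma}$'' to give $\frac{1}{1-\gamma}$ silently discards one of the two $\frac{1}{1-\tilde\gamma}$ factors. To be fair, the same unaccounted factor is needed to reconcile the paper's own statement of (\ref{pro1-bound-01}) with Theorem \ref{them:general-performance-difference} plus the reduction displayed just above the proposition (and it is precisely the factor by which the paper claims to improve on the bound (\ref{bound-achiam17-icml})), so you have reproduced the paper's derivation faithfully, slip included; but as a self-contained proof the arithmetic in (b) does not close, and you should either exhibit the cancellation explicitly or concede that the bound you can actually establish carries the extra $\frac{1}{1-\tilde\gamma}$.
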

If $\lambda\rightarrow0$, then the distribution ${d}_{\policyy}^{\lambda}(\cdot)$ is reduced to ${d}_{\policyy}^{\rho_0}(\cdot)$ and the bound (\ref{pro1-bound-01}) is reduced to 
\begin{flalign}
\label{bound-lam-0}
J(\policy)-J(\policyy)
\ge
\frac{1}{1-\gamma}\E_{s\sim{d}_{\policyy}^{\rho_0}(\cdot),a\sim\policy(\cdot|s)}
\left[
A_{\policyy}(s,a)
-2\gamma\epsilon^{V}_{\policy}(\policyy)
D_{\text{TV}}(\policyy,\policy)[s]
\right].
\end{flalign}
Let us review \cite[Corollary 1]{AchiamHTA17}, which shows 
\begin{flalign}
\label{bound-achiam17-icml}
J(\policy)-J(\policyy)
\ge
\frac{1}{1-\gamma}\E_{s\sim{d}_{\policyy}^{\rho_0}(\cdot),a\sim\policy(\cdot|s)}
\left[
A_{\policyy}(s,a)
-2\frac{\gamma\epsilon^{V}_{\policy}(\policyy)}{1-\gamma}
D_{\text{TV}}(\policyy,\policy)[s]
\right].
\end{flalign}
Comparing (\ref{bound-lam-0}) to (\ref{bound-achiam17-icml}), our new bound (\ref{bound-lam-0}) is slightly tighter than the bound shown by the work \cite{AchiamHTA17}, concretely, our result improves the bound (\ref{bound-achiam17-icml}) by a factor $\dfrac{1}{1-\gamma}$.
The bound (\ref{bound-achiam17-icml}) has been used as a surrogate function for $J(\policy)-J(\policyy)$, and this idea has been developed as \emph{constrained policy optimization} by extensive works (e.g., \cite{koller2018learning,zhang2020first,yang2020accelerating,yang2020projection,zanger2021safe}).
Since the refined bound (\ref{pro1-bound-01}) contains GAE technique that significantly reduces variance while maintains a tolerable level of bias \cite{schulman2016high}, which implies using the bound (\ref{pro1-bound-01}) as a surrogate function could improve performance potentially.

\begin{remark}[Comparison with \cite{kakade2002approximately}]
\label{remark:Comparison-with-kakade}
The result (\ref{pro1-bound-01}) develops the classic performance difference (\ref{performance-difference-2002}) at least two aspects.
Firstly, the bound (\ref{pro1-bound-01}) extends from the advantage $A_{\policyy}$ (\ref{performance-difference-2002}) to GAE function $A_{\policyy}^{\emph{GAE}(\gamma,\lambda)}$.
Secondly, the following term in Eq.(\ref{pro1-bound-01}):
\begin{flalign}
\label{gap-01}
\dfrac{1}{1-\tilde\gamma}\E_{s\sim{d}_{\policyy}^{\lambda}(\cdot),a\sim\policy(\cdot|s)}\left[A^{\emph{GAE}(\gamma,\lambda)}_{\policyy}(s,a)\right]
\end{flalign} 
is an approximation for the difference $J(\policy)-J(\policyy)$,
while Eq.(\ref{performance-difference-2002}) shows an identity for difference $J(\policy)-J(\policyy)$.
It seems that Eq.(\ref{performance-difference-2002}) is a natural objective for return maximization, however, Eq.(\ref{performance-difference-2002}) is very intractable for sampling-based policy optimization since Eq.(\ref{performance-difference-2002}) requires the data comes from a the policy $\policy$ that needed to be learned.
The approximation (\ref{gap-01}) solves this problem by the expectation over the state distribution $d_{\policyy}^{\lambda}(\cdot)$ w.r.t. policy $\policyy$ and action distribution of another policy $\policy$.
Thus, bound (\ref{pro1-bound-01}) provides a tractable objective for sample-based optimization.
\end{remark}

Let $\varphi=V^{c}_{\policyy}$, Theorem \ref{them:general-performance-difference} implies an upper bound of cost function as presented in the next Proposition \ref{pro-02}, we will use it to make guarantee for safe policy optimization.

\begin{proposition}
\label{pro-02}
For any two policies $\pi_{\bm\theta}$ and $\pi_{{\bm\theta}^{'}}$, let $\epsilon^{C}_{\policy}(\policyy)=:\sup_{t\in\N^{+}}\{\epsilon^{\varphi}_{\policy,t}: \varphi=V^{c}_{\policyy}\}$, then
 the following bound holds
\begin{flalign}
\label{pro2-bound-02}
J^{c}(\policy)-J^{c}(\policyy)
\leq
\dfrac{1}{1-\tilde\gamma}\E_{s\sim{d}_{\policyy}^{\lambda}(\cdot),a\sim\policy(\cdot|s)}
\left[
A^{\emph{GAE}(\gamma,\lambda)}_{\policyy,C}(s,a)
+
\frac{2\gamma(1-\lambda)\epsilon^{C}_{\policy}(\policyy)}{(1-\gamma\lambda)\left|1-2\gamma\lambda|\calS||\calA|\right|}
D_{\emph{TV}}(\policyy,\policy)[s]
\right],
\end{flalign}
where we calculate $A^{\emph{GAE}(\gamma,\lambda)}_{\policyy,C}(s,a)$ according to the data sampled from $\policyy$ and (\ref{def:gae-cost}).
\end{proposition}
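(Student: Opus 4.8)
The plan is to obtain Proposition \ref{pro-02} as a specialization of Theorem \ref{them:general-performance-difference} applied to the cost, combined with the $p=1,q=\infty$ simplification already used for the reward objective. First I would note that the cost-return $J^c$ is precisely the objective $J$ of the MDP obtained from $\calM$ by replacing $r$ with $c$, so that $V^c_{\policy},Q^c_{\policy},A^c_{\policy}$ play the roles of $V_{\policy},Q_{\policy},A_{\policy}$; since the proof of Theorem \ref{them:general-performance-difference} uses no property of $r$ beyond its being a reward function, the theorem applies verbatim with $r\mapsto c$, giving $J^c(\policy)-J^c(\policyy)\leq L^{\varphi,+}_{p,q}(\policy,\policyy)$ for every $\varphi:\calS\to\R$ and conjugate exponents $p,q$, where now the TD error inside $L^{\varphi,+}_{p,q}$ is $\delta^{\varphi}_t=c(s_{t+1}\mid s_t,a_t)+\gamma\varphi(s_{t+1})-\varphi(s_t)$. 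I then take $p=1$ (so $q=\infty$) and $\varphi=V^c_{\policyy}$.

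It remains to simplify the two ingredients of $L^{\varphi,+}_{1,\infty}$ at $\varphi=V^c_{\policyy}$. For the distribution-gap part, $\epsilon^{\varphi,(\lambda)}_{1,\infty,t}(\policy,\policyy)=\|\bd^\lambda_{\policy}-\bd^\lambda_{\policyy}\|_1\,\|\bm{\delta}^{\varphi}_{\policy,t}\|_\infty$; the first factor does not depend on $t$ and is controlled by the $p=1,q=\infty$ reduction displayed right after Theorem \ref{them:general-performance-difference} (a consequence of Lemma \ref{lem:difference-distri}) in terms of $\E_{s\sim d^\lambda_{\policyy}(\cdot)}[2D_{\text{TV}}(\policyy,\policy)[s]]$, while the second factor equals $\epsilon^{\varphi}_{\policy,t}$ and is bounded uniformly in $t$ by $\epsilon^C_{\policy}(\policyy)=\sup_{t\in\N^{+}}\{\epsilon^{\varphi}_{\policy,t}:\varphi=V^c_{\policyy}\}$; this pulls the bound out of $\sum_{t}(\gamma\lambda)^t$, collapses the surviving geometric series to $\tfrac{1}{1-\gamma\lambda}$, and --- after the overall prefactor $\tfrac{1}{1-\tilde\gamma}$ and the identity $1-\tilde\gamma=\tfrac{1-\gamma}{1-\gamma\lambda}$ --- reproduces the $\epsilon^C_{\policy}(\policyy)\,D_{\text{TV}}$ term of (\ref{pro2-bound-02}). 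For the TD-error-difference part $\sum_{t}(\gamma\lambda)^t\E_{s\sim d^\lambda_{\policyy}(\cdot)}[\Delta^{\varphi}_t(\policy,\policyy,s)]$, a change of measure in $a_t$ rewrites $\big(\tfrac{\policy(a_t\mid s_t)}{\policyy(a_t\mid s_t)}-1\big)\delta^{\varphi}_t$ as the gap between the $a_t\sim\policy$ and the $a_t\sim\policyy$ conditional expectations of $\delta^{V^c_{\policyy}}_t$; the $a_t\sim\policyy$ expectation vanishes because $\E_{a\sim\policyy}[A^c_{\policyy}(s,a)]=0$, and the remaining $\lambda$-discounted sum is repackaged, just as in the reward computation behind Proposition \ref{propo-01}, into $\E_{s\sim d^\lambda_{\policyy}(\cdot),\,a\sim\policy(\cdot|s)}\big[A^{\text{GAE}(\gamma,\lambda)}_{\policyy,C}(s,a)\big]$. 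Collecting the two pieces under the common prefactor $\tfrac{1}{1-\tilde\gamma}$ gives (\ref{pro2-bound-02}).

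I expect the main obstacle to be this last repackaging: one has to justify interchanging the infinite $\lambda$-sum with the expectations and verify that the $\lambda$-weighted sum of per-timestep cost-advantage terms, propagated by $\bP^{(\lambda)}_{\policyy}$ and averaged against $d^\lambda_{\policyy}$, telescopes exactly to the cost GAE advantage $A^{\text{GAE}(\gamma,\lambda)}_{\policyy,C}$ --- this is the cost-analogue of the identity already proved for the reward objective, so concretely it amounts to rerunning that argument with $c$ in place of $r$. A secondary, purely bookkeeping obstacle is carrying the $\tilde\gamma$, $\gamma\lambda$ and $\left|1-2\gamma\lambda|\calS||\calA|\right|$ factors through Lemma \ref{lem:difference-distri} so that the constants of (\ref{pro2-bound-02}) come out exactly. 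Beyond Theorem \ref{them:general-performance-difference}, Proposition \ref{propo-01} and Lemma \ref{lem:difference-distri}, nothing new is needed: the whole content is that the $r\leftrightarrow c$ substitution turns the upper half of the generalized difference bound into the desired upper bound on $J^c(\policy)-J^c(\policyy)$.
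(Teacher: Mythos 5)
Your proposal follows essentially the same route as the paper: Proposition \ref{pro-02} is obtained there as an immediate specialization of Theorem \ref{them:general-performance-difference} to the cost MDP (replacing $r$ by $c$) with $\varphi=V^{c}_{\policyy}$, $p=1$, $q=\infty$, using Lemma \ref{lem:difference-distri} to control the distribution-gap term and the vanishing of $\E_{a\sim\policyy}[A^{c}_{\policyy}(s,a)]$ to reduce $\Delta^{\varphi}_{t}$ to the cost GAE advantage, exactly as you describe. The repackaging step you flag as the main obstacle is indeed the part the paper leaves implicit, but your treatment of it mirrors the argument behind Proposition \ref{propo-01}.
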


All above bound results (\ref{pro1-bound-01}) and (\ref{pro2-bound-02}) can be extended for a total variational divergence to KL-divergence between policies, which are desirable for policy optimization.
We obtain
\begin{flalign}
\label{inequlities}
\E_{s\sim{d}_{\policyy}^{\lambda}(\cdot)}\left[D_{\text{TV}}(\policyy,\policy)[s]\right]
\leq&
\E_{s\sim{d}_{\policyy}^{\lambda}(\cdot)}\left[\sqrt{\frac{1}{2}\text{KL}(\policyy,\policy)[s]}\right]
\leq
\sqrt{\frac{1}{2}\E_{s\sim{d}_{\policyy}^{\lambda}(\cdot)}\left[\text{KL}(\policyy,\policy)[s]\right]},
\end{flalign}
where $\text{KL}(\cdot,\cdot)$ is KL-divergence, and \[\text{KL}(\policyy,\policy)[s]=\text{KL}(\policyy(\cdot|s),\policy(\cdot|s));\] the first inequality follows Pinsker's inequality \cite{csiszar2011information} and the second inequality follows Jensen's inequality.
According to (\ref{inequlities}), we obtain the next Proposition \ref{propo-03}.
\begin{proposition}
\label{propo-03}
All the bounds in (\ref{pro1-bound-01}) and (\ref{pro2-bound-02}) hold if we make the following substitution:
\[
\E_{s\sim{d}_{\policyy}^{\lambda}(\cdot)}\left[D_{\emph{TV}}(\policyy,\policy)[s]\right]
\leftarrow
\sqrt{\frac{1}{2}\E_{s\sim{d}_{\policyy}^{\lambda}(\cdot)}\left[\emph{KL}(\policyy,\policy)[s]\right]}.
\]
\end{proposition}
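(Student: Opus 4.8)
The plan is to obtain Proposition \ref{propo-03} as an immediate corollary of the chain of inequalities (\ref{inequlities}) together with an elementary monotonicity observation about how the total-variation term enters the bounds (\ref{pro1-bound-01}) and (\ref{pro2-bound-02}). First I would isolate the contribution of $D_{\text{TV}}(\policyy,\policy)[s]$ to each bound. In (\ref{pro1-bound-01}) this term sits inside the expectation $\E_{s\sim d_{\policyy}^{\lambda}(\cdot),\,a\sim\policy(\cdot|s)}[\,\cdot\,]$, but it does not depend on $a$, so the action expectation acts trivially on it and its net contribution to the right-hand side is $-\frac{1}{1-\tilde\gamma}\cdot\frac{2\gamma(1-\lambda)\epsilon^{V}_{\policy}(\policyy)}{(1-\gamma\lambda)|1-2\gamma\lambda|\calS||\calA||}\,\E_{s\sim d_{\policyy}^{\lambda}(\cdot)}[D_{\text{TV}}(\policyy,\policy)[s]]$; in (\ref{pro2-bound-02}) the same quantity appears with a plus sign and with $\epsilon^{C}_{\policy}(\policyy)$ in place of $\epsilon^{V}_{\policy}(\policyy)$. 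I would then check that the multiplicative coefficient is nonnegative: $\epsilon^{V}_{\policy}(\policyy)\ge0$ (resp.\ $\epsilon^{C}_{\policy}(\policyy)\ge0$) by definition as a supremum of absolute TD errors, $1-\lambda\ge0$ and $1-\gamma\lambda>0$ because $\gamma\in(0,1)$ and $\lambda\in[0,1]$, the factor $|1-2\gamma\lambda|\calS||\calA||$ is an absolute value, and $1-\tilde\gamma>0$ since $\tilde\gamma=\frac{\gamma(1-\lambda)}{1-\gamma\lambda}<1$ is equivalent to $\gamma<1$.

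Granted the nonnegativity of the coefficient, the right-hand side of (\ref{pro1-bound-01}) is a nonincreasing function of $\E_{s\sim d_{\policyy}^{\lambda}(\cdot)}[D_{\text{TV}}(\policyy,\policy)[s]]$, while the right-hand side of (\ref{pro2-bound-02}) is a nondecreasing function of it. Consequently, replacing $\E_{s\sim d_{\policyy}^{\lambda}(\cdot)}[D_{\text{TV}}(\policyy,\policy)[s]]$ by any upper bound of it makes the lower bound (\ref{pro1-bound-01}) only smaller and the upper bound (\ref{pro2-bound-02}) only larger, so both inequalities are preserved. It therefore suffices to supply the estimate $\E_{s\sim d_{\policyy}^{\lambda}(\cdot)}[D_{\text{TV}}(\policyy,\policy)[s]]\le\sqrt{\tfrac{1}{2}\,\E_{s\sim d_{\policyy}^{\lambda}(\cdot)}[\text{KL}(\policyy,\policy)[s]]}$, which is exactly the two-step bound (\ref{inequlities}): apply Pinsker's inequality pointwise in $s$, namely $D_{\text{TV}}(\policyy,\policy)[s]\le\sqrt{\tfrac{1}{2}\text{KL}(\policyy,\policy)[s]}$; take $\E_{s\sim d_{\policyy}^{\lambda}(\cdot)}$ of both sides; then apply Jensen's inequality using concavity of $x\mapsto\sqrt{x}$ on $[0,\infty)$ to pull the expectation inside the square root. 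Substituting this estimate into (\ref{pro1-bound-01}) and (\ref{pro2-bound-02}) yields the claim.

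The argument is essentially mechanical, and I do not anticipate a genuine obstacle: the substantive work was already carried out in Theorem \ref{them:general-performance-difference} and Propositions \ref{propo-01} and \ref{pro-02}. The only points requiring care are bookkeeping the opposite signs with which $D_{\text{TV}}$ enters (\ref{pro1-bound-01}) versus (\ref{pro2-bound-02}) and confirming that its coefficient never changes sign, so that enlarging the expectation $\E_{s\sim d_{\policyy}^{\lambda}(\cdot)}[D_{\text{TV}}(\policyy,\policy)[s]]$ is legitimate in both directions; I would also note in passing that if $\E_{s\sim d_{\policyy}^{\lambda}(\cdot)}[\text{KL}(\policyy,\policy)[s]]=+\infty$ the substituted bounds are vacuous but still formally valid, so no additional integrability hypothesis is needed.
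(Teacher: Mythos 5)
Your proposal is correct and follows essentially the same route as the paper: the paper derives the chain $\E[D_{\text{TV}}]\leq\E[\sqrt{\tfrac12\text{KL}}]\leq\sqrt{\tfrac12\E[\text{KL}]}$ via Pinsker's and Jensen's inequalities in (\ref{inequlities}) and then declares the proposition to follow by substitution. Your additional bookkeeping (verifying that the coefficient of the total-variation term is nonnegative and that the bounds are monotone in it, so that enlarging the term preserves both inequalities) is exactly the step the paper leaves implicit, and it is carried out correctly.
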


\section{CUP: Conservative Update Policy}
\label{sec:algorithm}

According to the bounds in Proposition \ref{propo-01}-\ref{propo-03}, we develop new surrogate functions to replace the objective and constraints.
Inspired by two recent works \cite{yang2020projection,zhang2020first}, 
we propose the CUP (conservative update policy) algorithm that is a two-step approach contains \emph{performance improvement} and \emph{projection}.
Theorem \ref{them-re-cost} shows the proposed CUP guarantees the policy improvement and safe constraints.

\subsection{Methodology}

\textbf{Step 1: Performance Improvement.}

According to Proposition \ref{propo-01} and Proposition \ref{propo-03}, for an appropriate coefficient $\alpha_k$, we update policy as follows,
\begin{flalign}
\label{performance-improvement-01}
\pi_{{\bm{\theta}}_{k+\frac{1}{2}}}&=\arg\max_{\pi_{{\bm{\theta}}}\in\Pi_{{\bm{\theta}}}}
\left\{
\E_{s\sim{d}_{\pi_{\bm{\theta}_k}}^{\lambda}(\cdot),a\sim\policy(\cdot|s)}[
A^{\text{GAE}(\gamma,\lambda)}_{{\pi_{\bm{\theta}_k}}}(s,a)]-\alpha_k\sqrt{
\E_{s\sim{d}_{{\pi_{\bm{\theta}_k}}}^{\lambda}(\cdot)}\left[\text{KL}(\pi_{\bm{\theta}_k},\policy)[s]\right]}
\right\}\\
\label{performance-improvement}
&=\arg\max_{\pi_{{\bm{\theta}}}\in\Pi_{{\bm{\theta}}}}
\left\{
\underset{\begin{subarray}{c}s \sim{d}_{\pi_{\bm{\theta}_k}}^{\lambda}(\cdot)\\a\sim\pi_{\bm{\theta}_k}(\cdot|s)\end{subarray}}
\E\left[\frac{\pi_{\bm{\theta}}(a|s)}{\pi_{{\bm{\theta}}_k}(a|s)}
A^{\text{GAE}(\gamma,\lambda)}_{{\pi_{\bm{\theta}_k}}}(s,a)\right]-\alpha_k
\sqrt{
\E_{s\sim{d}_{{\pi_{\bm{\theta}_k}}}^{\lambda}(\cdot)}\left[\text{KL}(\pi_{\bm{\theta}_k},\policy)[s]\right]}
\right\}.
\end{flalign}
We replace (\ref{performance-improvement-01}) with an importance sampling with respect to $\pi_{\bm{\theta}_k}$ to obtain the expectation (\ref{performance-improvement}), and all the remains is to replace the expectation (\ref{performance-improvement}) by sample averages according to the trajectories collected by $\pi_{\bm{\theta}_k}$.

\textbf{Step 2: Projection.}

According to Proposition \ref{pro-02} and Proposition \ref{propo-03}, for an appropriate coefficient $\beta_k$, we project the policy $\pi_{{\bm{\theta}}_{k+\frac{1}{2}}}$ onto the safe constraint set.
Concretely, we use a measure $D(\cdot,\cdot)$ (e.g., KL divergence or $\ell_2$-norm) to minimize distance between $\pi_{{\bm{\theta}}_{k+\frac{1}{2}}}$ and $\policy$, and require the new policy satisfies the safe constraint:
\begin{flalign}
\label{projection}
&~~~~~~~~~~~~~~~~~~~~~~~~~~~~~~~~~~~~~~~~~\pi_{{\bm{\theta}}_{k+1}}=\arg\min_{\pi_{{\bm{\theta}}}\in\Pi_{{\bm{\theta}}}}~D\left(\pi_{{\bm{\theta}}},\pi_{{\bm{\theta}}_{k+\frac{1}{2}}}\right),\\
\nonumber
&\text{s.t.}~J^{c}(\pi_{{\bm{\theta}}_k})+\dfrac{1}{1-\tilde\gamma}\E_{s\sim{d}_{\pi_{\bm{\theta}_k}}^{\lambda}(\cdot),a\sim\policy(\cdot|s)}
\left[
A^{\text{GAE}(\gamma,\lambda)}_{{\pi_{\bm{\theta}_k}},C}(s,a)\right]+\beta_k
\sqrt{
\E_{s\sim{d}_{{\pi_{\bm{\theta}_k}}}^{\lambda}(\cdot)}\left[\text{KL}(\pi_{\bm{\theta}_k},\policy)[s]\right]}\leq b.
\end{flalign}
Until now, the particular choice of surrogate function is heuristically motivated, we show the policy and safe constraint guarantee of the proposed CUP  in Theorem \ref{them-re-cost}, and its proof shown in Appendix \ref{sec:app-them2}.
\begin{theorem}
\label{them-re-cost}
Let $\chi_k=\E_{s\sim{d}_{{\pi_{\bm{\theta}_k}}}^{\lambda}(\cdot)}\left[\emph{KL}\left(\pi_{\bm{\theta}_k},\pi_{\bm{\theta}_{k+\frac{1}{2}}}\right)[s]\right]$, if $\pi_{\bm{\theta}_k}$ and 
$\pi_{\bm{\theta}_{k+1}}$ are related to (\ref{performance-improvement})-(\ref{projection}),
then the lower bound on policy improvement, and upper bound on constraint violation are
\begin{flalign}
\nonumber
J(\pi_{\bm{\theta}_{k+1}})-J(\pi_{\bm{\theta}_{k}})\ge-\frac{\gamma(1-\lambda)\alpha_k\sqrt{2\chi_k}\epsilon^{V}_{\policy}(\policyy)}{(1-\gamma)\left|1-2\gamma\lambda|\calS||\calA|\right|},
J^{c}(\pi_{\bm{\theta}_{k+1}})\leq b+\frac{\gamma(1-\lambda)\beta_k\sqrt{2\chi_k}\epsilon^{C}_{\policy}(\policyy)}{(1-\gamma)\left|1-2\gamma\lambda|\calS||\calA|\right|}.
\end{flalign}
\end{theorem}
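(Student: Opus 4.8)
The plan is to prove the two inequalities in Theorem~\ref{them-re-cost} by combining the surrogate-function bounds from Proposition~\ref{propo-01}--\ref{propo-03} with the near-feasibility enforced by the projection step, and then controlling the KL-divergence term against the new iterate $\pi_{\bm{\theta}_{k+1}}$ via a triangle-type argument through the intermediate policy $\pi_{\bm{\theta}_{k+\frac12}}$. First I would instantiate Proposition~\ref{propo-01} together with the KL-substitution of Proposition~\ref{propo-03} with $\policy = \pi_{\bm{\theta}_{k+1}}$ and $\policyy = \pi_{\bm{\theta}_k}$, so that
\begin{flalign}
\nonumber
J(\pi_{\bm{\theta}_{k+1}})-J(\pi_{\bm{\theta}_{k}})\ge
\frac{1}{1-\tilde\gamma}\E_{s\sim{d}_{\pi_{\bm{\theta}_k}}^{\lambda}(\cdot),a\sim\pi_{\bm{\theta}_{k+1}}(\cdot|s)}\!\left[A^{\text{GAE}(\gamma,\lambda)}_{\pi_{\bm{\theta}_k}}(s,a)\right]
-\frac{\gamma(1-\lambda)\epsilon^{V}_{\policy}(\policyy)}{(1-\gamma\lambda)(1-\tilde\gamma)\left|1-2\gamma\lambda|\calS||\calA|\right|}\sqrt{2\,\E_{s\sim{d}_{\pi_{\bm{\theta}_k}}^{\lambda}(\cdot)}\!\left[\text{KL}(\pi_{\bm{\theta}_k},\pi_{\bm{\theta}_{k+1}})[s]\right]}.
\end{flalign}
The first term is exactly the objective being maximized in Step~1 evaluated at the feasible point $\pi_{\bm{\theta}_{k+1}}$; since $\pi_{\bm{\theta}_k}$ itself is a candidate in \eqref{performance-improvement} with objective value $0$, and the projection in Step~2 only moves within the surrogate-feasible set, I would argue the relevant surrogate quantity is nonnegative (or can be lower-bounded by $0$ after accounting for the projection), so the dominant negative contribution is the KL term.

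Next, the key maneuver is to replace $\E[\text{KL}(\pi_{\bm{\theta}_k},\pi_{\bm{\theta}_{k+1}})]$ with $\chi_k = \E[\text{KL}(\pi_{\bm{\theta}_k},\pi_{\bm{\theta}_{k+\frac12}})]$. The projection step \eqref{projection} minimizes $D(\pi_{\bm{\theta}},\pi_{\bm{\theta}_{k+\frac12}})$ subject to the safety surrogate, and when $\pi_{\bm{\theta}_{k+\frac12}}$ is itself surrogate-feasible the projection returns $\pi_{\bm{\theta}_{k+1}}=\pi_{\bm{\theta}_{k+\frac12}}$, giving $\E[\text{KL}(\pi_{\bm{\theta}_k},\pi_{\bm{\theta}_{k+1}})]=\chi_k$ directly; in the general case one bounds the displacement induced by the projection so that $\E[\text{KL}(\pi_{\bm{\theta}_k},\pi_{\bm{\theta}_{k+1}})]\le\chi_k$ (this is the standard ``projection does not increase distance to the reference'' property exploited in FOCOPS/PCPO-style analyses, using that $\pi_{\bm{\theta}_k}$ lies in the surrogate-feasible set when $J^c(\pi_{\bm{\theta}_k})\le b$). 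Substituting this bound and simplifying the constant $\frac{1}{(1-\gamma\lambda)(1-\tilde\gamma)} = \frac{1}{1-\gamma}$ (which follows from $1-\tilde\gamma = \frac{1-\gamma}{1-\gamma\lambda}$) yields precisely the stated lower bound $-\dfrac{\gamma(1-\lambda)\alpha_k\sqrt{2\chi_k}\,\epsilon^{V}_{\policy}(\policyy)}{(1-\gamma)\left|1-2\gamma\lambda|\calS||\calA|\right|}$, after identifying $\alpha_k$ with the trade-off coefficient so that the surrogate objective at $\pi_{\bm{\theta}_{k+1}}$ dominates $-\alpha_k\sqrt{\chi_k}$-type slack.

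For the cost bound I would run the parallel argument with Proposition~\ref{pro-02} and Proposition~\ref{propo-03}, writing
\begin{flalign}
\nonumber
J^{c}(\pi_{\bm{\theta}_{k+1}})\le J^{c}(\pi_{\bm{\theta}_k}) + \frac{1}{1-\tilde\gamma}\E_{s\sim{d}_{\pi_{\bm{\theta}_k}}^{\lambda}(\cdot),a\sim\pi_{\bm{\theta}_{k+1}}(\cdot|s)}\!\left[A^{\text{GAE}(\gamma,\lambda)}_{\pi_{\bm{\theta}_k},C}(s,a)\right] + \frac{\gamma(1-\lambda)\epsilon^{C}_{\policy}(\policyy)}{(1-\gamma)\left|1-2\gamma\lambda|\calS||\calA|\right|}\sqrt{2\,\E_{s\sim{d}_{\pi_{\bm{\theta}_k}}^{\lambda}(\cdot)}\!\left[\text{KL}(\pi_{\bm{\theta}_k},\pi_{\bm{\theta}_{k+1}})[s]\right]},
\end{flalign}
then invoke the constraint in \eqref{projection}, which is satisfied by $\pi_{\bm{\theta}_{k+1}}$, to bound $J^{c}(\pi_{\bm{\theta}_k}) + \frac{1}{1-\tilde\gamma}\E[A^{\text{GAE}}_{C}] \le b - \beta_k\sqrt{\E[\text{KL}(\pi_{\bm{\theta}_k},\pi_{\bm{\theta}_{k+1}})]}$, and finally use $\E[\text{KL}(\pi_{\bm{\theta}_k},\pi_{\bm{\theta}_{k+1}})]\le\chi_k$ once more. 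The main obstacle I anticipate is the step controlling $\E[\text{KL}(\pi_{\bm{\theta}_k},\pi_{\bm{\theta}_{k+1}})]$ by $\chi_k$: this requires a careful statement of how the projection behaves (whether it is an exact KL/$\ell_2$ projection onto a convex-in-output set, and whether $\pi_{\bm{\theta}_k}$ is guaranteed feasible for the surrogate constraint), and the role of the coefficients $\alpha_k,\beta_k$ in absorbing the surrogate objective/constraint slack into the final constants must be made precise rather than hand-waved — I expect the appendix to fix $\alpha_k$ and $\beta_k$ as exactly the penalty weights appearing in \eqref{performance-improvement-01} and \eqref{projection} so that the surrogate terms telescope cleanly.
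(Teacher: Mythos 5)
Your proposal matches the paper's own proof in all essentials: both instantiate Propositions \ref{propo-01}--\ref{propo-03} (and Proposition \ref{pro-02} for the cost) at $\policy=\pi_{\bm{\theta}_{k+1}}$, $\policyy=\pi_{\bm{\theta}_k}$, invoke the constraint in (\ref{projection}) satisfied by $\pi_{\bm{\theta}_{k+1}}$, and control $\E[\text{KL}(\pi_{\bm{\theta}_k},\pi_{\bm{\theta}_{k+1}})]$ by $\chi_k$ via the Bregman/Pythagorean three-point inequality for the projection (the paper states this as $\text{KL}(\pi_{\bm{\theta}_k},\pi_{\bm{\theta}_{k+\frac{1}{2}}})\ge\text{KL}(\pi_{\bm{\theta}_k},\pi_{\bm{\theta}_{k+1}})+\text{KL}(\pi_{\bm{\theta}_{k+1}},\pi_{\bm{\theta}_{k+\frac{1}{2}}})$, which is exactly the ``projection does not increase distance to the reference'' step you describe). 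The residual looseness you flag (nonnegativity of the surrogate term and the precise role of $\alpha_k,\beta_k$) is present in the paper's proof as well, so there is no gap relative to what the paper actually establishes.
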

\begin{remark}
\label{remark:Comparison-with-soda}
Let $\lambda\rightarrow0$, according to Theorem \ref{them-re-cost}, the performance and cost constraint of CUP satisfies
\begin{flalign}
\label{bound-cost-performance}
J(\pi_{\bm{\theta}_{k+1}})-J(\pi_{\bm{\theta}_{k}})\ge-\frac{\gamma\alpha_k\sqrt{2\chi_k}\epsilon^{V}_{\policy}(\policyy)}{(1-\gamma)},
J^{c}(\pi_{\bm{\theta}_{k+1}})\leq b+\frac{\gamma\beta_k\sqrt{2\chi_k}\epsilon^{C}_{\policy}(\policyy)}{(1-\gamma)}.
\end{flalign}
The bounds of CUP in (\ref{bound-cost-performance}) achieves at $\mathcal{O}(\frac{\alpha_k\gamma}{1-\gamma})$ or $\mathcal{O}(\frac{\beta_k\gamma}{1-\gamma})$,
which is more tight than the bounds of CPO \cite[Proposition 1-2]{AchiamHTA17}, PCPO \cite[Theorem 1]{yang2020projection} and FOCOPS \cite{zhang2020first} where their bounds achieve at $\mathcal{O}(\frac{\gamma}{(1-\gamma)^2})$.
\end{remark}

\textbf{Practical Implementation}

Now, we present our sample-based implementation for CUP (\ref{performance-improvement})-(\ref{projection}).
Our main idea is to estimate the objective and constraints in (\ref{performance-improvement})-(\ref{projection}) with samples collected by current policy $\pi_{\bm{\theta}_k}$, then solving its optimization problem via first-order optimizer.

We denote the empirical KL-divergence w.r.t $\policy$  and $\policyy$ as follows,
\[\hat{D}_{\text{KL}}(\policy,\policyy)=\frac{1}{T}\sum_{t=1}^{T}\text{KL}(\policy(a_t|s_t),\policyy(a_t|s_t)).\]
Let $\{(s_t,a_t,r_{t+1},c_{t+1})\}_{t=1}^{T}\sim\pi_{\bm{\theta}_k}$, we update performance improvement (\ref{performance-improvement}) step as follows,
\begin{flalign}
\nonumber
\pi_{{\bm{\theta}}_{k+\frac{1}{2}}}&=\arg\max_{\pi_{\bm{\theta}}\in\Pi_{\theta}}\left\{\hat{\mathcal{L}}_{\text{R}}(\policy,\pi_{{\bm{\theta}}_k})\right\}
\\
\nonumber
\hat{\mathcal{L}}_{\text{R}}(\policy,\pi_{{\bm{\theta}}_k})&=\frac{1}{T}\sum_{t=1}^{N}\dfrac{\pi_{\bm{\theta}}(a_t|s_t)}{\pi_{{\bm{\theta}}_k}(a_t|s_t)}\hat{A}_t-\alpha_k\sqrt{\hat{D}_{\text{KL}}(\pi_{{\bm{\theta}}_k},\pi_{{\bm{\theta}}})},
\end{flalign}
where $\hat{A}_t$ is an estimator of $A^{\text{GAE}(\gamma,\lambda)}_{\pi_{{\bm{\theta}}_k}}(s,a)$.

Then we update projection step by replacing the distance function $D$ by KL-divergence, and we solve the constraint problem (\ref{projection}) by the following primal-dual approach, 
\begin{flalign}
\nonumber
(\pi_{{\bm{\theta}}_{k+1}},\nu_{k+1})=\arg\min_{\pi_{\bm{\theta}}\in\Pi_{\bm{\theta}}}\max_{\nu\ge0}
\left\{
\hat{\mathcal{L}}_{\text{c}}\left(\policy,\pi_{{\bm{\theta}}_k},{\bm{\theta}}_{k+\frac{1}{2}}\right)
\right\}\\
\nonumber
\hat{\mathcal{L}}_{\text{c}}\left(\policy,\pi_{{\bm{\theta}}_k},{\bm{\theta}}_{k+\frac{1}{2}}\right)=
\hat{D}_{\text{KL}}(\pi_{{\bm{\theta}}_{k+\frac{1}{2}}},\pi_{{\bm{\theta}}})
+\nu \hat{C}(\policy,\pi_{{\bm{\theta}}_k}),
\end{flalign}
where the empirical constraint function
\[\hat{C}(\policy,\pi_{{\bm{\theta}}_k})=\hat{J}^{C}+\frac{1}{1-\tilde\gamma}\cdot\frac{1}{T}\sum_{t=1}^{T}\frac{\pi_{\bm{\theta}}(a_{t}|s_{t})}{\pi_{{\bm{\theta}}_k}(a_{t}|s_{t})}\hat{A}^{C}_{t}+\beta_k\sqrt{\hat{D}_{\text{KL}}(\pi_{{\bm{\theta}}_k},\pi_{{\bm{\theta}}})}-b,\] $\hat{J}^{C}$and $\hat{A}^{C}_{t}$ are estimators for cost-return and cost-advantage correspondingly.

Due to the limitation of space, we have presented all the details for the implementation of CUP in Appendix \ref{sec-app-cpu} and Algorithm \ref{alg-app-cpu}.

\section{Related Work}

This section reviews some typical ways to solve safe reinforcement learning: local policy search, Lagrangian approach, and constrained policy optimization (CPO).
We mainly focus on CPO since those algorithms also use surrogate functions to replace the objective and constraints, which resembles the proposed CUP.
We provide more comparisons and discussion in Appendix \ref{app-related-work} and Table \ref{app-table-com}.

\textbf{Local Policy Search and Lagrangian Approach}.
A direct way to solve CMDP (\ref{def:problem-setting}) is to apply \emph{local policy search} \cite{peter2008Reinforcement,2013Safepolicy} over the policy space $\Pi_{\calC}$, i.e.,
\begin{flalign}
\label{local-policy-search}
\pi_{{{\bm{\theta}}}_{k+1}}=\arg\max_{\pi_{{\bm{\theta}}}\in\Pi_{{\bm{\theta}}}}J(\pi_{{\bm{\theta}}}),~\text{s.t}.~J^{c}(\pi_{{\bm{\theta}}})\leq b,~\text{and}~D(\pi_{{\bm{\theta}}},\pi_{{{\bm{\theta}}}_{k}})<\delta,
\end{flalign}
where $\delta$ is a positive scalar, $D(\cdot,\cdot)$ is some distance measure.
For practice, the local policy search (\ref{local-policy-search}) is challenging to implement because it requires evaluation of the constraint function $c$ to determine whether a proposed point $\pi$ is feasible \cite{zhang2020first}. 
Besides, when updating policy according to samples, local policy search (\ref{local-policy-search}) requires off-policy evaluation \cite{AchiamHTA17}, which is very challenging for high-dimension control problem \cite{duan2016benchmarking,yang-ijcai2018-414,yang2020convergence}.
Thus, local policy search (\ref{local-policy-search}) looks simple, but it is impractical for high-dimension policy optimization.

The standard way to solve CMDP (\ref{def:problem-setting}) is Lagrangian approach \cite{chow2017risk} that is also known as primal-dual policy optimization \cite{chen2021primal}:
\begin{flalign}
\label{min-max-search}
(\pi_{\star},\lambda_{\star})=\arg\min_{\lambda\ge0}\max_{\policy\in\Pi_{\bm{\theta}}}
\left\{
J(\policy)-\lambda(J^{c}(\policy)-b)
\right\}.
\end{flalign}
Although extensive canonical algorithms are proposed to solve problem (\ref{min-max-search}), e.g., \cite{liang2018accelerated,tessler2019reward,paternain2019constrained,le2019batch,russel2020robust,xu2020primal,satija2020constrained,chen2021primal},
the policy updated by Lagrangian approach may be infeasible w.r.t. CMDP (\ref{def:problem-setting}).
This is hazardous in reinforcement learning when one needs to execute the intermediate policy (which may be unsafe) during training \cite{chow2018lyapunov}.

\textbf{Constrained Policy Optimization (CPO)}.
Recently, CPO \cite{AchiamHTA17} suggests to replace the cost constraint with a surrogate cost function which evaluates the constraint $J^{c}(\pi_{{\bm{\theta}}})$ according to the samples collected from the current policy $\pi_{{\bm{\theta}}_k}$:
\begin{flalign}
\label{cpo-objective}
&\pi_{{\bm{\theta}}_{k+1}}=\arg\max_{\pi_{{\bm{\theta}}}\in\Pi_{{\bm{\theta}}}}~~~~\E_{s\sim d^{\rho_0}_{\pi_{{\bm{\theta}}_k}}(\cdot),a\sim\pi_{{\bm{\theta}}}(\cdot|s)}\left[A_{\pi_{{\bm{\theta}}_k}}(s,a)\right]\\
\label{cost-constraint}
&\text{s.t.}~~J^{c}(\pi_{{\bm{\theta}}_k})+\frac{1}{1-\gamma}\E_{s\sim d^{\rho_0}_{\pi_{{\bm{\theta}}_k}}(\cdot),a\sim\pi_{{\bm{\theta}}}(\cdot|s)}\left[A^{c}_{\pi_{{\bm{\theta}}_k}}(s,a)\right]\leq b,\\
\label{trust-region}
&\bar{D}_{\text{KL}}(\pi_{{\bm{\theta}}},\pi_{{\bm{\theta}}_k})=\E_{s\sim d^{\rho_0}_{\pi_{{\bm{\theta}}_k}}(\cdot)}[\text{KL}(\pi_{{\bm{\theta}}},\pi_{{\bm{\theta}}_k})[s]]\leq\delta.
\end{flalign}
Existing recent works (e.g., \cite{AchiamHTA17,vuong2019supervised,yang2020projection,hanreinforcementl2020,ijcai2020-632,bharadhwaj2021conservative}) try to find some convex approximations to replace the term $A_{\pi_{{\bm{\theta}}_k}}(s,a)$ and $\bar{D}_{\text{KL}}(\pi_{{\bm{\theta}}},\pi_{{\bm{\theta}}_k})$ Eq.(\ref{cpo-objective})-(\ref{trust-region}).
Concretely, according to (\ref{performance-difference-2002}), \cite{AchiamHTA17} suggest to use first-order Taylor expansion to replace (\ref{cpo-objective})-(\ref{cost-constraint}), use second-oder approximation to replace (\ref{trust-region}).
Such first-order and second-order approximations turn a non-convex problem (\ref{cpo-objective})-(\ref{trust-region}) to be a convex problem, 
it seems to make a simple solution, but this approach results in many error sources and troubles in practice.
Firstly, it still lacks a theory analysis to show the difference between the non-convex problem (\ref{cpo-objective})-(\ref{trust-region}) and its convex approximation.
Policy optimization is a typical non-convex problem \cite{yang2021sample}; its convex approximation may introduce some error for its original issue.
Secondly, CPO updates parameters according to conjugate gradient \cite{suli2003introduction}, and its solution involves the inverse Fisher information matrix,
which requires expensive computation for each update.
Later, \cite{yang2020projection} propose projected-based constrained policy optimization (PCPO) that also uses second-order approximation, which also results in an expensive computation.

Instead of using a convex approximation for the objective function, the proposed CUP algorithm improves CPO and PCPO at least two aspects.
Firstly, the CUP directly optimizes the surrogate objective function via the first-order method, and it does not depend on any convex approximation.
Thus, the CUP effectively avoids the expensive computation for the inverse Fisher information matrix.
Secondly, CUP extends the surrogate objective function to GAE. Although \cite{zhang2020first} has used the GAE technique in experiments, to the best of our knowledge, it still lacks a rigorous theoretical analysis involved GAE before we propose CUP.

\begin{figure*}[t]
    \centering
    \includegraphics[width=17cm, height=5.5cm]{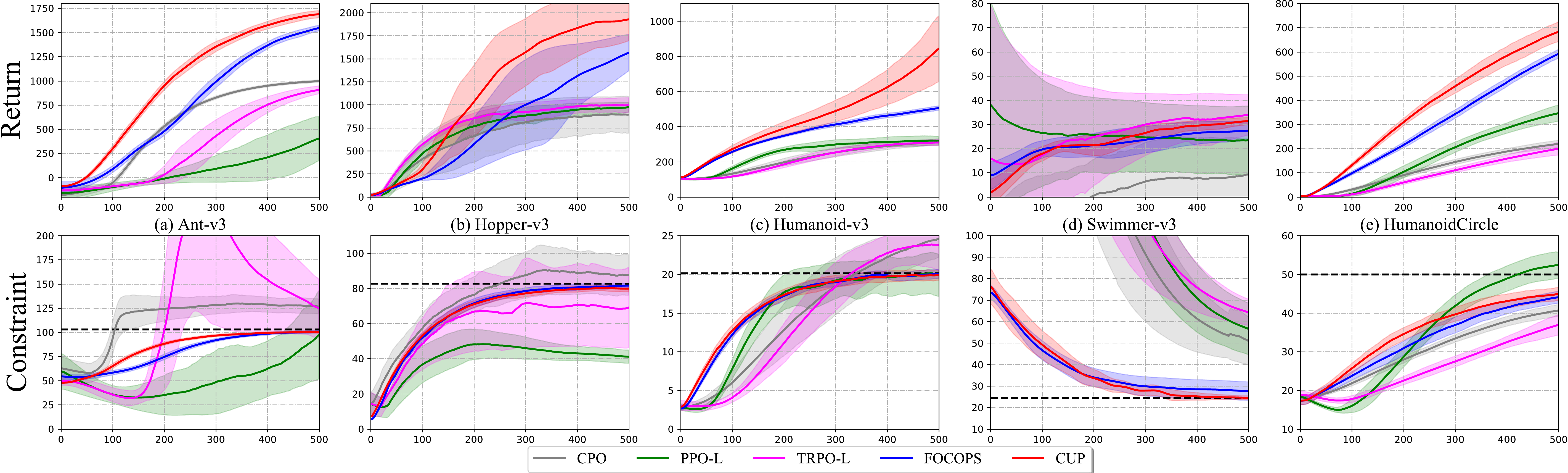}
    \caption{Performance for CPO, PPO-L, TRPO-L, FOCOPS and CUP over 10 seeds. The first row shows the learning curves of objective return, and the second row shows the curves of constraint return. The x-axis is training iteration. CUP quickly stabilizes the constraint return around the limit value while converging the objective return to higher values faster.} 
    \label{fig:comparison}
\end{figure*}
\begin{table*}[t]
  \centering
  
  \vskip 0.1in
  \begin{adjustbox}{width=0.95\textwidth}
  \begin{tabular}{*7c}
  \toprule
  Environment         & {}              & CPO & TRPO-L & PPO-L &FOCOPS & \textbf{CUP} \\
  \midrule
  Ant-v3                   &Return     & $1030.17\pm8.15$ & $480.86\pm161.05$ & $1012.02\pm17.26$ & $1662.53\pm17.40$ & $\boldsymbol{1743.66\pm40.5}$ \\
  (103.12)                &Constraint          & ${\color{black}120.76\pm4.80}$ & ${\color{black}131.07\pm67.9}$ & ${\color{black}112.45\pm15.48}$& $101.31\pm0.41$ & $99.11\pm0.93$ \\
  \midrule
  Hopper-v3             &Return     & $875.89\pm285.17$ & $1025.49\pm10.68$ & $1010.2\pm61.48$& $1687.72\pm24.38$ & $\boldsymbol{2025.56\pm122.35}$ \\
  (82.75)                  &Constraint          & $76.6\pm10.62$ & $40.36\pm4.75$ & ${\color{black}83.28\pm31.19}$ & ${\color{black}102.3\pm1.455}$& $79.98\pm2.306$ \\
  \midrule
   Swimmer-v3         &Return    & $18.77\pm6.56$ & $27.35\pm10.07$ & $35.58\pm5.68$& $28.15\pm4.30$ & $\boldsymbol{33.38\pm0.54}$ \\
  (24.52)                  &Constraint          & ${\color{black}42.07\pm3.31}$ & ${\color{black}49.58\pm7.46}$ & ${\color{black}{54.91\pm3.93}}$ & ${\color{black}26.54\pm4.16}$& $23.31\pm0.052$ \\
  \midrule
  Humanoid-v0        &Return     & $326.95\pm16.00$ & $307.71\pm24.71$& $322.11\pm25.54$ & $542.5\pm4.76$ & $\boldsymbol{1066.83\pm266.12}$ \\
  (20.14)                  &Constraint          & ${\color{black}26.13\pm2.13}$ & $18.22\pm3.04$ & ${\color{black}22.94\pm4.54}$ & $20.04\pm0.19$& $19.91\pm0.36$  \\
    \midrule
  Humanoid-Circle   &Return     & $237.54\pm23.20$ & $384.45\pm47.66$ & $243.35\pm37.90$& $713.04\pm9.25$ & $\boldsymbol{768.65\pm63.70}$ \\
  (50.00)                   &Constraint          & $43.64\pm1.91$ & ${\color{black}53.77\pm1.48}$ & $41.17\pm3.98$ & $47.73\pm0.64$& $48.23\pm0.65$ \\
  \bottomrule
  \end{tabular}
  \end{adjustbox}
  
    \caption{Average results for CPO, PPO-L, TRPO-L, FOCOPS and CUP over 10 seeds after 500 iterations. The agent interacts with the environment 5000 times per iteration. Constraint limit are in brackets under the environment names.}
  \label{tab:mujoco}
\end{table*}

\section{Experiment}
In this section, we verify the effectiveness and stability of CUP in terms of policy improvement while satisfying safety.
We aim to answer the following three issues:

\textbf{(I)}. Does CUP satisfy the safety constraints in different environments? For the same environment with different cost limit, Does CUP also performs well?
   
\textbf{(II)}. How does CUP compare to the state-of-the-art safe RL algorithms? Does CUP achieve higher rewards under the constraint of cost threshold?

\textbf{(III)}. Does CUP play a sensibility during the hyper-parameters (e.g., step-size $\nu$, the coefficient $\alpha$ with respect KL-divergence) tuning processing?

\textbf{Environments}: We train different robotic agents using five MuJoCo physical simulators \cite{todorov2012mujoco} which are open by OpenAI Gym API \cite{brockman2016openai}. 

 \begin{figure*}[t!]
  \centering
  \subfigure{\includegraphics[width=16.5cm,height=2.5cm]{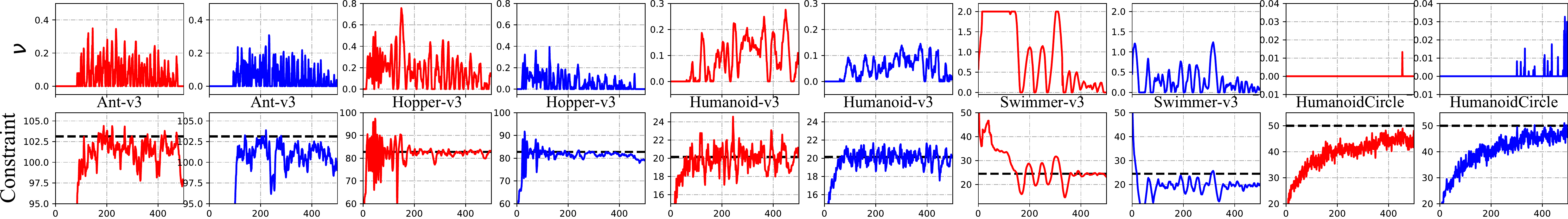}} 
  \caption{Cost constraint with respect to hyper-parameter $\nu$ (defined in Projection step).}
  \label{differntlim-01}
\end{figure*}

\textbf{Baseline Algorithms}: Baselines includes CPO\cite{AchiamHTA17}, TRPO Lagrangian (TRPO-L), PPO Lagrangian (PPO-L) and FOCOPS\cite{zhang2020first}. 
TRPO-L and PPO-L are improved by \cite{chow2018lyapunov,Ray2019}, which are based on TRPO \cite{schulman2015trust} and PPO \cite{schulman2017proximal}. 
These two algorithms use the Lagrangian method \cite{bertsekas1997nonlinear}, which applies adaptive penalty coefficients to satisfy the constraint.

\subsection{Evaluation CUP and Comparison Analysis}

We have shown the Learning curves for CUP, and other baselines in Figure \ref{fig:comparison}, and Table \ref{tab:mujoco} summarizes the performance of all algorithms.
We find that CUP and FOCOPS successfully enforce the constraints in all experiments.
In most cases, the traces of their constraint almost coincide with the dashed black line of the limit. 
By contrast, the others frequently suffer from over or under the correction. 
Although it seems that being below the limit also satisfies the constraint, this usually results in a poor return.
The initial policy is not guaranteed to be feasible, such as in the Swimmer-v3 environment. 
We observed that CUP brings the policy back to the feasible range faster than other baselines.
From Table \ref{tab:mujoco}, we know although PPO-L achieves a reward of $35.58\pm 5.68$ outperforms CUP in Swimmer-v3, PPO-L obtain a cost with $54.91 \pm 3.93$ that violates the cost limit of $24.52$ significantly, which implies PPO-L learns a dangerous policy under this setting.

On the other hand, Figure \ref{fig:comparison} shows that CUP generally gains higher returns than different baselines while enforcing the cost constraint. 

In contrast, after equal iterations, CUP has a greater speed of stabilizing the constraint return around to the limit value and is quicker to find feasible policies to gain a more significant objective return.

\subsection{Sensitivity Analysis for Hyper-Parameters Tuning}
Hyper-parameters tuning is necessary to achieve efficient and stable policy improvement and enforce constraints. 
Now, we investigate the performance with respect to the parameters: $\nu$, step-size $\alpha$, and cost limit $b$.
In Figure \ref{differntlim-01}-\ref{differntlim-03}, after exploring the influence of hyper-parameters introduced by CUP methods under different experimental settings, we discover that the performance of CUP is robust to hyper-parameters tuning.

We have visualized the changes of cost constraint with respect to $\nu$ over different MuJoCo in Figure \ref{differntlim-01}.
From Figure \ref{differntlim-01} we know if the estimated cost under the target threshold $b$, then $\nu$ keeps calm, which implies $\nu$ is not activated.
Such an empirical phenomenon gives significant expression on the Humanoid environment.
While if the estimated cost exceeds the target threshold $b$, $\nu$ will be activated, which requires the agent to play a policy on the safe region.
Those empirical result shown in Figure \ref{differntlim-01} is consistent with the update rule of $\nu$: $\nu_{k+1}=\{\nu_{k}+\eta(\hat{J}_{k}^{C}-b)\}_{+}$, which implies the projection of CUP plays an important role for the agent to learning a safe policy.

Furthermore, we investigate the influence of reward performance and cost constraint with respect to the penalty factor $\alpha$, see Figure \ref{differntlim-02}.
Results show that the performance of CUP is still very stable for different settings of $\alpha$, where we run $\alpha$ among $\{0.1,0.15,0.2,0.25,0.3\}$.
Additionally, the constraint value of CUP also still fluctuates around the target value. The different value achieved by CUP in different setting $\alpha$ is affected by the simulated environment and constraint thresholds, which are easy to control.

Finally, we compare different cost limit $b$ to verify the sensitiveness of CUP. We have shown the results in Figure \ref{differntlim-03}.
We compare policy performance and cost under different cost limit settings. For example, in the Swimmer-v3, we set cost limit $b$ among $\{10,15,20,25\}$.
As illustrated in Figure \ref{differntlim-03}. we demonstrate that the CUP algorithm continuously improves the policy under the constrain.
This experiment implies CUP is robust to different cost limit settings, and CUP is scalable to various safe RL tasks.

 \begin{figure}[t!]
  \centering
  \subfigure{\includegraphics[width=16.5cm,height=4.5cm]{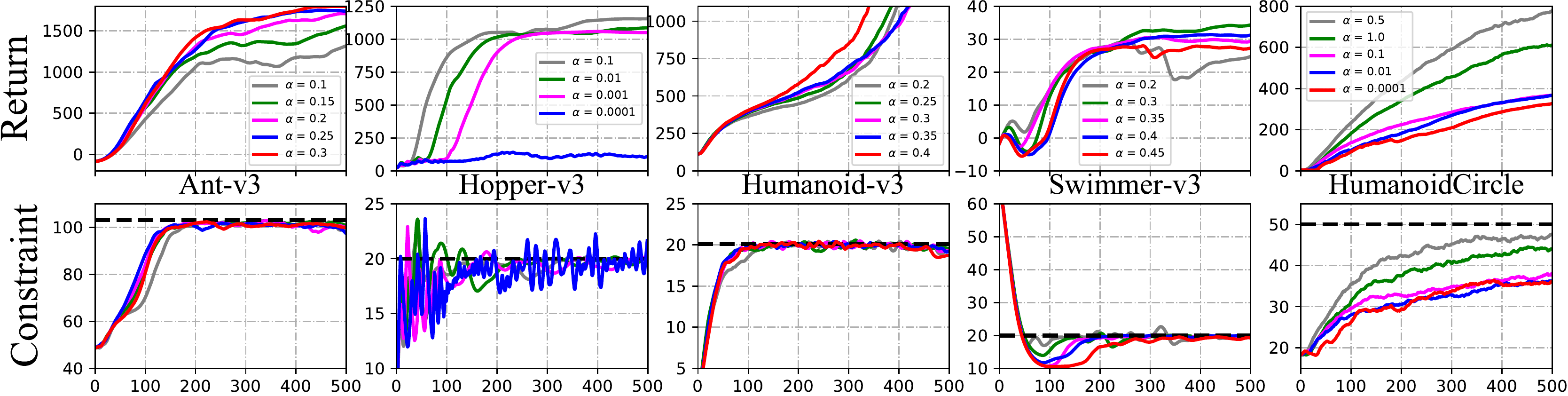}} 
  \caption{Performance with respect to penalty factor $\alpha$.}
  \label{differntlim-02}
\end{figure}
\begin{figure}[t!]
  \centering
  \subfigure{\includegraphics[width=16.5cm,height=4.5cm]{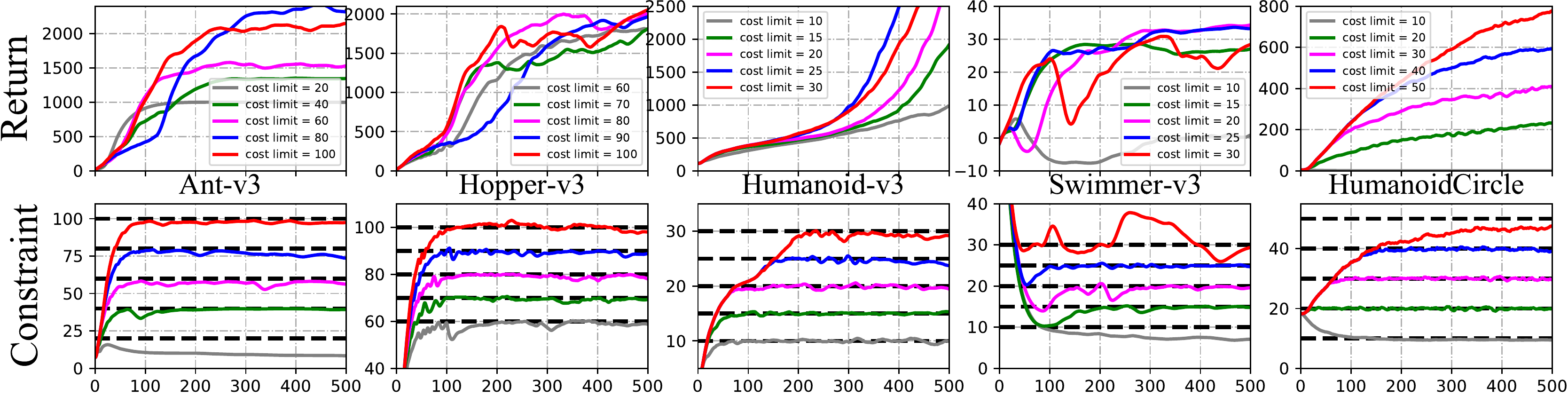}} 
  \caption{Performance with respect to cost limit.}
  \label{differntlim-03}
\end{figure}

\section{Conclusion}
This paper proposes the CUP algorithm with a theoretical safety guarantee.
We derive the CUP based on some new proposed surrogate functions with respect to objective and constraints, and the practical implementation of CUP does not depend on any convex approximation.
Extensive experiments on continuous control tasks show the effectiveness of CUP where the agent satisfies safe constraints.

\clearpage

\bibliographystyle{apalike}
\bibliography{reference}

\clearpage

\appendix

\section{Key Notations}

\subsection{Matrix Index}
In this paper,
we use a bold capital letter to denote matrix, e.g., $\bA=(a_{i,j})\in\R^{m\times n}$, and its $(i,j)$-th element denoted as $\bA[i,j]=:a_{i,j},$ where $1\leq i\leq m,1\leq j\leq n$.
Similarly, a bold  lowercase letter denotes a vector, e.g., $\ba=(a_1,a_2,\cdots,a_n)\in\R^{n}$, and  its $i$-th element denoted as $\ba[i]=:a_{i},$ where $1\leq i\leq n$.

\subsection{Key Notations of Reinforcement Learning}

For convenience of reference, we list key notations that have be used in this paper.

\subsection{Value Function and Dynamic System of MDP.}
\begin{tabular}{r c p{13cm}}
   \hline
   $\mathbf{r}_{\pi_{\bm \theta}},~R_{\policy}(s),$ &: & $\mathbf{r}_{\pi_{\bm \theta}}\in\R^{|\calS|}$ is the expected vector reward according to $\pi_{\bm \theta}$, i.e., their components are:
     \vspace{-5pt}
    \[\mathbf{r}_{\pi_{\bm \theta}}[s] =\sum_{a\in\mathcal{A}}\sum_{s^{'}\in\mathcal{S}}\pi_{\bm{\theta}}(a|s)r(s'|s,a)=:R_{\policy}(s),~s\in\calS.\]\\
     \vspace{-15pt}
   $\bv_{\pi_{\bm{\theta}}},~V_{\policy}(s),$ &: & $\bv_{\pi_{\bm{\theta}}}\in\R^{|\calS|}$ is the vector that stores all the state value functions, and its components are:
     \vspace{-10pt}
\[
\bv_{\pi_{\bm{\theta}}}[s]=V_{\pi_{\bm{\theta}}}(s),~s\in\calS.
\]
\\
 \hline
$\rho(\cdot),\bm{\rho}$ &: & $\rho(s)$: the initial state distribution of state $s$; $\bm{\rho}\in\R^{|\calS|}$, and $\bm{\rho}[s]=\rho(s)$. \\
   \hline
        $\mathbf{P}_{\pi_{\bm \theta}}$ &: & Single-step state transition matrix by executing $\policy$. \\
        $\Pro_{\policy}(s^{'}|s)$ &: & Single-step state transition probability from $s$ to $s^{'}$ by executing $\policy$, and it is the $(s,s^{'})$-th component of the matrix $\mathbf{P}_{\pi_{\bm \theta}}$, i.e., $\mathbf{P}_{\pi_{\bm \theta}}[s,s^{'}]=\Pro_{\policy}(s^{'}|s)$.\\
        $\mathbb{P}_{\policy}(s_t=s^{'}|s)$&: & The probability of visiting the state $s^{'}$ after $t$
time steps from the state $s$ by executing $\policy$, and it is the $(s,s^{'})$-th component of the matrix $\mathbf{P}_{\pi_{\bm \theta}}$, i.e., $\mathbf{P}^{t}_{\pi_{\bm \theta}}[s,s^{'}]=\mathbb{P}_{\policy}(s_t=s^{'}|s)$.\\
  \hline
$d_{\pi_{\bm {\theta}}}^{s_0}(s),~d_{\pi_{\bm {\theta}}}^{\rho_0}(s)$  &: & The normalized discounted distribution of the future state $s$ starting at $s_0$ by executing $\pi_{\bm {\theta}}$:
\[d_{\pi_{\bm {\theta}}}^{s_0}(s)=:(1-\gamma)\sum_{t=0}^{\infty}\gamma^{t}\mathbb{P}_{\pi_{\bm {\theta}}}(s_t=s|s_0).\] Since $s_0\sim\rho(\cdot)$, we define $d_{\pi_{\bm {\theta}}}^{\rho_0}(s)=:\E_{s_0\sim\rho(\cdot)}[d_{\pi_{\bm {\theta}}}^{s_0}(s)]$.\\
$\bd_{\pi_{\bm {\theta}}}^{\rho_0}$&: &  It stores all the normalized discounted state distributions $d_{\pi_{\bm {\theta}}}^{\rho_0}(s)$, $\in\calS$, i.e., $\bd_{\pi_{\bm {\theta}}}^{\rho_0}\in\R^{|\calS|}$, and its components are:
$
\bd_{\pi_{\bm {\theta}}}^{\rho_0}[s]=d_{\pi_{\bm {\theta}}}^{\rho_0}(s).
$\\
   \hline
\end{tabular}

\subsection{Extend them to $\lambda$-version. }

\begin{tabular}{r c p{13cm}}
 \hline
    $\mathbf{P}^{(\lambda)}_{\pi_{\bm \theta}}$&: &$\mathbf{P}^{(\lambda)}_{\pi_{\bm \theta}}=(1-\gamma\lambda)\sum_{{t}=0}^{\infty}(\gamma\lambda)^{{t}}\bP^{{t}+1}_{\policy}.$
    \\
    $\Pro_{\policy}^{(\lambda)}(s^{'}|s)$&: &$\Pro_{\policy}^{(\lambda)}(s^{'}|s)=:\mathbf{P}^{(\lambda)}_{\pi_{\bm \theta}}[s,s^{'}]=(1-\gamma\lambda)\sum_{{t}=0}^{\infty}(\gamma\lambda)^{{t}}\Pro_{\policy}(s_{t+1}=s^{'}|s)$.
    \\
       \hline
   $\mathbf{r}^{(\lambda)}_{\pi_{\bm \theta}},~R^{(\lambda)}_{\pi_{\bm \theta}}(s)$ &: &$\mathbf{r}^{(\lambda)}_{\pi_{\bm \theta}}=\sum_{{t}=0}^{\infty}(\gamma\lambda\bP_{\policy})^{{t}}\mathbf{r}_{\pi_{\bm \theta}};~ R^{(\lambda)}_{\pi_{\bm \theta}}(s)=:
  \mathbf{r}^{(\lambda)}_{\pi_{\bm \theta}}[s].$\\
     \hline
   $\tilde{\gamma}$&: &$\tilde{\gamma}=\dfrac{\gamma(1-\lambda)}{1-\gamma\lambda}$.\\
      \hline
       $d_{\pi_{\bm {\theta}}}^{s_0,\lambda}(s)$&: &$d_{\pi_{\bm {\theta}}}^{s_0,\lambda}(s)=(1-\tilde \gamma)\sum_{t=0}^{\infty}\tilde{\gamma}^{t}\mathbb{P}^{(\lambda)}_{\pi_{\bm {\theta}}}(s_t=s|s_0)$.\\
          $d_{\pi_{\bm {\theta}}}^{\lambda}(s),~\bd_{\pi_{\bm {\theta}}}^{\lambda}$&: & $d_{\pi_{\bm {\theta}}}^{\lambda}(s)=\E_{s_0\sim\rho_{0}(\cdot)}\left[d_{\pi_{\bm {\theta}}}^{s_0,\lambda}(s)\right],~\bd_{\pi_{\bm {\theta}}}^{\lambda}[s]=d_{\pi_{\bm {\theta}}}^{\lambda}(s)$.      
         \\
      \hline
\end{tabular}

\subsection{
 TD error w.r.t. any function $\varphi(\cdot)$.
 }

\begin{tabular}{r c p{13cm}}
 \hline
     $\delta_t^{\varphi}$&: &$ \delta_t^{\varphi}=r(s_{t+1}|s_t,a_t)+\gamma\varphi(s_{t+1})-\varphi(s_{t}).$
     \\
    $ \delta^{\varphi}_{\policy,t}(s)$&: &$ \delta^{\varphi}_{\policy,t}(s)=\E_{s_{t}\sim\Pro_{\policy}(\cdot|s),a_{t}\sim{\policy}(\cdot|s_t),s_{t+1}\sim\Pro(\cdot|s_t,a_t)}\left[\delta_t^{\varphi}\right]$.\\
      $\bm{\delta}^{\varphi}_{\policy,t}$&: & $\bm{\delta}^{\varphi}_{\policy,t}[s]={{\delta}}^{\varphi}_{\policy,t}(s).$
      \\
      $\Delta_{t}^{\varphi}(\policy,\policyy,s)$&: & $\E_{s_{t}\sim\Pro_{\policyy}(\cdot|s),a_{t}\sim{\policyy}(\cdot|s_t),s_{t+1}\sim\Pro(\cdot|s_t,a_t)}\left[\left(\dfrac{\policy(a_t|s_t)}{\policyy(a_t|s_t)}-1\right)\delta_t^{\varphi}\right]$.\\
      $\bm{\Delta}_{t}^{\varphi}(\policy,\policyy)$&: & $\bm{\Delta}_{t}^{\varphi}(\policy,\policyy)[s]=\Delta_{t}^{\varphi}(\policy,\policyy,s)$.
      \\
          \hline
\end{tabular}

\clearpage
\section{Practical Implementation of CUP}
\label{sec-app-cpu}

\begin{algorithm}[t]
\caption{Conservative Update Policy (CUP)} 
\label{alg-app-cpu}
\begin{algorithmic}[1]
\STATE \textbf{Initialize:} policy network parameters ${\bm{\theta}}_0$; value network parameter $\bm{\omega}_0$; cost value function parameter $\bm{\nu}_0$, step-size $\nu_{0,0}$;
\STATE \textbf{Hyper-parameters:} trajectory horizon $T$; discount rate $\gamma$; episode number $M,N$, mini-batch size $B$, positive constant $\alpha,\eta$;
\FOR{$k=0,1,2,\ldots $}
\STATE Collect batch data of $M$ episodes of horizon $T$ in $\cup_{i=1}^{M}\cup_{t=0}^{T}\left\{(s_{i,t},a_{i,t},r_{i,t+1},c_{i,t+1})\right\}$ according to current policy $\pi_{{\bm{\theta}}_k}$;
\STATE Estimate $c$-return by discount averaging on each  episode: $\hat{J}_{i}^{C}=\sum_{t=0}^{T}\gamma^{t}c_{i,t+1};$
\STATE Compute TD errors $\cup_{i=1}^{M}\cup_{t=0}^{T}\{\delta_{i,t}\}$, cost TD errors $\cup_{i=1}^{M}\cup_{t=0}^{T}\{\delta^{C}_{i,t}\}$:
\[
\delta_{i,t}=r_{i,t}+\gamma V_{\bm{\omega}_k}(s_{i,t})- V_{\bm{\omega}_k}(s_{i,t-1}),~\delta^{C}_{i,t}=c_{i,t}+\gamma V^{C}_{\bm{\nu}_k}(s_{i,t})- V^{C}_{\bm{\nu}_k}(s_{i,t-1});
\]
\STATE Compute GAE: $\cup_{i=1}^{M}\cup_{t=0}^{T}\{\hat{A}_{i,t},\hat{A}^{C}_{i,t}\}$: $\hat{A}_{i,t}=\sum_{j=t}^{T}(\gamma\lambda)^{j-t}\delta_{i,j}, ~\hat{A}^{C}_{i,t}=\sum_{j=t}^{T}(\gamma\lambda)^{j-t}\delta^{C}_{i,j};$
\STATE Compute target function for value function and cost value function as follows,
\[
V^{\text{target}}_{i,t}=\hat{A}_{i,t}+ V_{\bm{\omega}_k}(s_{i,t}),~~V^{\text{target},C}_{i,t}=\hat{A}_{i,t}^{C}+ V^{C}_{\bm{\nu}_k}(s_{i,t})
;
\]
\STATE Store data: $\calD_k=\cup_{i=1}^{M}\cup_{t=0}^{T}\left\{(a_{i,t},s_{i,t},\hat{A}_{i,t},\hat{A}^{C}_{i,t},V^{\text{target}}_{i,t},V^{\text{target},C}_{i,t})\right\}$;
\STATE\tikzmark{start1}$\pi_{\text{old}}\leftarrow\pi_{\bm{\theta}_k}$;~~~~~~~~~~~~~~~~~~~~~~~~~~~~~~~~~~~~~~~~~~~~~~~~~~~~~~~~~~~~~~~~~~~~~~~~~~~~~~{\color{blue}{\texttt{Policy Improvement}}}
\FOR{$i=0,1,2,\ldots,M$}
\STATE
\[
{\bm{\theta}}_{k+\frac{1}{2}}=\arg\max_{{\bm{\theta}}}\left\{\frac{1}{T}\sum_{t=1}^{T}\dfrac{\pi_{\bm{\theta}}(a_{i,t}|s_{i,t})}{\pi_{\text{old}}(a_{i,t}|s_{i,t})}\hat{A}_{i,t}-\alpha\sqrt{\frac{1}{T}\sum_{t=1}^{T}\text{KL}(\pi_{\text{old}}(\cdot|s_{i,t}),\pi_{{\bm{\theta}}}(\cdot|s_{i,t}))}\right\};
\]
\ENDFOR
\tikzmark{end1}
\STATE \tikzmark{start2}$\pi_{\text{old}}\leftarrow\pi_{\bm{\theta}_{k+\frac{1}{2}}}$;~~~~~~~~~~~~~~~~~~~~~~~~~~~~~~~~~~~~~~~~~~~~~~~~~~~~~~~~~~~~~~~~~~~~~~~~~~~~~~~~~~~~~~~~~~~~~~~~~~~~~~~~~~~{\color{blue}{\texttt{Projection}}}
\FOR{$i=0,1,2,\ldots,M$}
\STATE
\begin{flalign}
\nonumber
&~~~~~~~~~~~~~~~~~~~~~~~~~~~~~~~~~~\nu_{i,k+1}=\left\{\nu_{i,k}+\eta(\hat{J}_{i}^{C}-b)\right\}_{+}
\\
\nonumber
{\bm{\theta}}_{k+1}&=\arg\min_{{\bm{\theta}}} \dfrac{1}{T}\sum_{t=1}^{T}
\left\{
\text{KL}(\pi_{{\bm{\theta}}_{\text{old}}}(\cdot|s_{i,t}),\pi_{{\bm{\theta}}}(\cdot|s_{i,t}))+\nu_{i,k+1}
\dfrac{1-\gamma\lambda}{1-\gamma}\dfrac{\pi_{\bm{\theta}}(a_{i,t}|s_{i,t})}{\pi_{{\bm{\theta}}_k}(a_{i,t}|s_{i,t})}\hat{A}^{C}_{i,t}
\right\};
\end{flalign}
\ENDFOR
\tikzmark{end2}
\FOR{each mini-batch $\{(a_{j},s_{j},\hat{A}_{j},\hat{A}^{C}_{j},V^{\text{target}}_{j},V^{\text{target},C}_{j})\}$ of size $B$ from $\calD_k$}
\STATE
\[
\bm{\omega}_{k+1}=\arg\min_{\bm{\omega}}
\sum_{j=1}^{B}
\left(
V_{\bm{\omega}}(s_j)-V^{\text{target}}_{j}
\right)^2,
\bm{\nu}_{k+1}=\arg\min_{\bm{\nu}}
\sum_{j=1}^{B}
\left(
V^{c}_{\bm{\nu}}(s_j)-V^{\text{target},C}_{j}
\right)^2;
\]
\ENDFOR
\Textbox{start1}{end1}{}
\Textbox{start2}{end2}{}
\ENDFOR
\end{algorithmic}
\end{algorithm}

In this section, we present the practical implementation of CUP.

\subsection{Step 1: Policy Improvement}

\textbf{Objective of Policy Improvement.}

For the first step w.r.t. policy improvement (\ref{performance-improvement-01}), 
\begin{flalign}
\nonumber
\pi_{{\bm{\theta}}_{k+\frac{1}{2}}}=&\arg\max_{\pi_{{\bm{\theta}}}\in\Pi_{{\bm{\theta}}}}
\left\{
\E_{s\sim{d}_{\pi_{\bm{\theta}_k}}^{\lambda}(\cdot),a\sim\policy(\cdot|s)}
\left[
A^{\text{GAE}(\gamma,\lambda)}_{{\pi_{\bm{\theta}_k}}}(s,a)\right]-\alpha_k
\sqrt{
\E_{s\sim{d}_{{\pi_{\bm{\theta}_k}}}^{\lambda}(\cdot)}\left[\text{KL}(\pi_{\bm{\theta}_k},\policy)[s]\right]}
\right\},
\end{flalign}
according to 
\begin{flalign}
\label{app-01-estimator}
\E_{s\sim{d}_{\pi_{\bm{\theta}_k}}^{\lambda}(\cdot),a\sim\policy(\cdot|s)}
\left[
A^{\text{GAE}(\gamma,\lambda)}_{{\pi_{\bm{\theta}_k}}}(s,a)\right]=
\E_{s\sim d_{\pi_{{\bm{\theta}}_k}}^{\lambda}(\cdot),a\sim\pi_{{\bm{\theta}}_k}(\cdot|s)}\left[\dfrac{\pi_{{\bm{\theta}}}(a|s)}{\pi_{{\bm{\theta}}_k}(a|s)}A^{\text{GAE}(\gamma,\lambda)}_{\pi_{{\bm{\theta}}_k}}(s,a)\right],
\end{flalign}
which implies Eq.(\ref{performance-improvement}):
\begin{flalign}
\label{performance-improvement-02}
\pi_{{\bm{\theta}}_{k+\frac{1}{2}}}=&\arg\max_{\pi_{{\bm{\theta}}}\in\Pi_{{\bm{\theta}}}}
\left\{
\E_{s\sim d_{\pi_{{\bm{\theta}}_k}}^{\lambda}(\cdot),a\sim\pi_{{\bm{\theta}}_k}(\cdot|s)}\left[\dfrac{\pi_{{\bm{\theta}}}(a|s)}{\pi_{{\bm{\theta}}_k}(a|s)}A^{\text{GAE}(\gamma,\lambda)}_{\pi_{{\bm{\theta}}_k}}(s,a)\right]
-\alpha_k
\sqrt{
\E_{s\sim{d}_{{\pi_{\bm{\theta}_k}}}^{\lambda}(\cdot)}\left[\text{KL}(\pi_{\bm{\theta}_k},\policy)[s]\right]}
\right\}.
\end{flalign}
We replace (\ref{performance-improvement-01}) with an importance sampling with respect to $\pi_{\bm{\theta}_k}$ to obtain the expectation (\ref{performance-improvement-02}), and all the remains is to replace the expectation (\ref{performance-improvement-02}) by sample averages according to the trajectories collected by $\pi_{\bm{\theta}_k}$.

\textbf{Learning from Samples.}

For each trajectories (with size $M$) sampled from 
\[\bigcup_{i=1}^{M}\bigcup_{t=0}^{T}\left\{(s_{i,t},a_{i,t},r_{i,t+1},c_{i,t+1})\right\}\sim\pi_{{\bm{\theta}}_k},\] 
we learn the parameter $\bm{\theta}_{k+\frac{1}{2}}$ as follows: for each $i=1,2,\cdots,M$,
\begin{flalign}
{\bm{\theta}}_{k+\frac{1}{2}}=\arg\max_{{\bm{\theta}}}\left\{\dfrac{1}{T}\sum_{t=1}^{T}\frac{\pi_{\bm{\theta}}(a_{i,t}|s_{i,t})}{{\pi_{\bm{\theta}_{k}}}(a_{i,t}|s_{i,t})}\hat{A}_{i,t}-\alpha\sqrt{\dfrac{1}{T}\sum_{t=1}^{T}\text{KL}(\pi_{\bm{\theta}_{k}}(\cdot|s_{i,t}),\pi_{{\bm{\theta}}}(\cdot|s_{i,t}))}\right\},
\end{flalign}
 which can be solved via the first order optimizer, where the following three terms
 \[
\hat{A}_{i,t},~~\dfrac{1}{T}\sum_{t=1}^{T}\frac{\pi_{\bm{\theta}}(a_{i,t}|s_{i,t})}{{\pi_{\bm{\theta}_{k}}}(a_{i,t}|s_{i,t})}\hat{A}_{i,t},~~\dfrac{1}{T}\sum_{t=1}^{T}\text{KL}(\pi_{\bm{\theta}_{k}}(\cdot|s_{i,t}),\pi_{{\bm{\theta}}}(\cdot|s_{i,t}))
 \]
 are the estimators (according to the $i$-th trajectory $\{(s_{i,t},a_{i,t},r_{i,t+1},c_{i,t+1})\}_{t=0}^{T}\sim\pi_{\bm{\theta}_k}$ of the following three expectations correspondingly:
 \[
 A^{\text{GAE}(\gamma,\lambda)}_{\pi_{{\bm{\theta}}_k}},~~\E_{s\sim d_{\pi_{{\bm{\theta}}_k}}^{\lambda}(\cdot),a\sim\pi_{{\bm{\theta}}_k}(\cdot|s)}\left[\dfrac{\pi_{{\bm{\theta}}}(a|s)}{\pi_{{\bm{\theta}}_k}(a|s)}A^{\text{GAE}(\gamma,\lambda)}_{\pi_{{\bm{\theta}}_k}}(s,a)\right],~~
\sqrt{
\E_{s\sim{d}_{{\pi_{\bm{\theta}_k}}}^{\lambda}(\cdot)}\left[\text{KL}(\pi_{\bm{\theta}_k},\policy)[s]\right]}.
 \]

\subsection{Step 2: Projection}

\textbf{Objective of Projection.}

Recall Proposition \ref{propo-03} with respected to cost
\begin{flalign}
\nonumber
J^{c}(\policy)-J^{c}(\policyy)\leq&
\dfrac{1}{1-\tilde\gamma}\E_{s\sim{d}_{\policyy}^{\lambda}(\cdot),a\sim\policy(\cdot|s)}
\Bigg[
A^{\text{GAE}(\gamma,\lambda)}_{\policyy,C}(s,a)\\
\nonumber
&~~~~~~~~~~~+\left.\frac{2\gamma(1-\lambda)\epsilon^{C}_{\policy}(\policyy)}{(1-\gamma\lambda)\left|1-2\gamma\lambda|\calS||\calA|\right|}
\sqrt{\dfrac{1}{2}\E_{s\sim{d}_{\policyy}^{\lambda}(\cdot)}\left[\text{KL}(\policyy,\policy)[s]\right]}
\right],
\end{flalign}
and we introduce a new surrogate function with respected to cost function as follows
\[
C_{\policyy}(\policy,\beta)=
\dfrac{1}{1-\tilde\gamma}\E_{s\sim{d}_{\policyy}^{\lambda}(\cdot),a\sim\policy(\cdot|s)}
\left[
A^{\text{GAE}(\gamma,\lambda)}_{\policyy,C}(s,a)+\beta
\sqrt{\dfrac{1}{2}\E_{s\sim{d}_{\policyy}^{\lambda}(\cdot)}\left[\text{KL}(\policyy,\policy)[s]\right]}
\right],
\]
where $\beta$ is adaptive to $\frac{2\gamma(1-\lambda)\epsilon^{C}_{\policy}(\policyy)}{(1-\gamma\lambda)\left|1-2\gamma\lambda|\calS||\calA|\right|}$.

Now, consider the projection step:
\begin{flalign}
\label{eq:projection}
\pi_{{\bm{\theta}}_{k+1}}=&\arg\min_{\pi_{{\bm{\theta}}}\in\Pi_{{\bm{\theta}}}}~D\left(\pi_{{\bm{\theta}}},\pi_{{\bm{\theta}}_{k+\frac{1}{2}}}\right),~~\text{s.t.}~C_{\pi_{\bm{\theta}_k}}(\policy,\beta)\leq b.
\end{flalign}
We turn the projection step as the following unconstrained problem:
\begin{flalign}
\label{app-proj-01}
\max_{\nu\ge0}\min_{\pi_{\bm{\theta}}}
\left\{
D\left(\pi_{{\bm{\theta}}},\pi_{{\bm{\theta}}_{k+\frac{1}{2}}}\right)+\nu
\left(
C_{\pi_{\bm{\theta}_k}}(\policy,\beta)-b
\right)
\right\}.
\end{flalign}
In our implementation, we use KL-divergence as the distance measure $D(\cdot,\cdot)$, then
 \begin{flalign}
\label{app-d-measure}
 D\left(\pi_{{\bm{\theta}}},\pi_{{\bm{\theta}}_{k+\frac{1}{2}}}\right)=\E_{s\sim{d}_{{\pi_{\bm{\theta}_k}}}^{\lambda}(\cdot)}\left[\text{KL}\left(\pi_{\bm{\theta}_{k+\frac{1}{2}}},\policy\right)[s]\right],
 \end{flalign}
 which implies we can rewrite the problem (\ref{app-proj-01}) as follows,
 \begin{flalign}
\label{app-proj-01}
\max_{\nu\ge0}\min_{\pi_{\bm{\theta}}}
\left\{
\E_{s\sim{d}_{{\pi_{\bm{\theta}_k}}}^{\lambda}(\cdot)}\left[\text{KL}\left(\pi_{\bm{\theta}_{k+\frac{1}{2}}},\policy\right)[s]\right]+\nu
\left(
C_{\pi_{\bm{\theta}_k}}(\policy,\beta)-b
\right)
\right\}.
\end{flalign}
 Recall 
 \begin{flalign}
 \label{cost-sur}
 C_{\pi_{\bm{\theta}_k}}(\policy,\beta)=J^{c}(\pi_{{\bm{\theta}}_k})+\dfrac{1}{1-\tilde\gamma}\E_{s\sim{d}_{\pi_{\bm{\theta}_k}}^{\lambda}(\cdot),a\sim\policy(\cdot|s)}
\left[
A^{\text{GAE}(\gamma,\lambda)}_{{\pi_{\bm{\theta}_k}},C}(s,a)\right]+\beta_k
\sqrt{
\E_{s\sim{d}_{{\pi_{\bm{\theta}_k}}}^{\lambda}(\cdot)}\left[\text{KL}(\pi_{\bm{\theta}},\policy)[s]\right]},
 \end{flalign}
to simplify the problem, we ignore the term $\beta
\sqrt{
\E_{s\sim{d}_{{\pi_{\bm{\theta}_k}}}^{\lambda}(\cdot)}\left[\text{KL}(\pi_{\bm{\theta}_k},\policy)[s]\right]}$ in Eq.(\ref{cost-sur}) due to the following two aspects:
(i) firstly, $\beta$ is adapted to the term $\frac{\gamma(1-\lambda)\epsilon^{C}_{\pi_{\bm{\theta}_{k+1}}}(\pi_{\bm{\theta}_{k}})}{(1-\gamma\lambda)\left|1-2\gamma\lambda|\calS||\calA|\right|}$, and for the high-dimensional state space or continuous action space, then $\beta$ is very small;
(ii) secondly, if $D$ is a KL-divergence measure, then the direction of the policy optimization $D\left(\pi_{{\bm{\theta}}},\pi_{{\bm{\theta}}_{k+\frac{1}{2}}}\right)$ (\ref{app-d-measure}) is proportional to $\beta
\sqrt{
\E_{s\sim{d}_{{\pi_{\bm{\theta}_k}}}^{\lambda}(\cdot)}\left[\text{KL}(\pi_{\bm{\theta}_k},\policy)[s]\right]}$, thus, in practice, we can only optimize the distance $D\left(\pi_{{\bm{\theta}}},\pi_{{\bm{\theta}}_{k+\frac{1}{2}}}\right)$.

 Above discussions implies that instead of (\ref{app-proj-01}), we can consider the problem
 \begin{flalign}
\nonumber
\max_{\nu\ge0}\min_{\pi_{\bm{\theta}}}
\calL(\bm{\theta},\nu),
\end{flalign}
where 
\[
\calL(\bm{\theta},\nu)
=\E_{s\sim{d}_{{\pi_{\bm{\theta}_k}}}^{\lambda}(\cdot)}\left[\text{KL}\left(\pi_{\bm{\theta}_{k+\frac{1}{2}}},\policy\right)[s]\right]+\nu
\bigg(
J^{c}(\pi_{{\bm{\theta}}_k})+\dfrac{1}{1-\tilde\gamma}\E_{s\sim{d}_{\pi_{\bm{\theta}_k}}^{\lambda}(\cdot),a\sim\policy(\cdot|s)}
\left[
A^{\text{GAE}(\gamma,\lambda)}_{{\pi_{\bm{\theta}_k}},C}(s,a)\right]-b
\bigg).
\]

\textbf{Learning from Samples.}

Then, according to gradient decent method, we have
\begin{flalign}
\label{learning-para}
\bm{\theta}\leftarrow \bm{\theta}-\eta\dfrac{\calL(\bm{\theta},\nu)}{\partial \bm{\theta}},~~~
\nu\leftarrow \left\{\nu+\eta\dfrac{\calL(\bm{\theta},\nu)}{\partial \nu}\right\}_{+},
 \end{flalign}
where $\{\cdot\}_{+}$ denote the positive part, i.e., if $x\leq0$, $\{x\}_{+}=0$, else $\{x\}_{+}=x$.
Particularly, 
\begin{flalign}
\dfrac{\calL(\bm{\theta},\nu)}{\partial \nu}=J^{c}(\pi_{{\bm{\theta}}_k})+\dfrac{1}{1-\tilde\gamma}\E_{s\sim{d}_{\pi_{\bm{\theta}_k}}^{\lambda}(\cdot),a\sim\policy(\cdot|s)}
\left[
A^{\text{GAE}(\gamma,\lambda)}_{{\pi_{\bm{\theta}_k}},C}(s,a)\right]-b,
\end{flalign}
where the term $\E_{s\sim{d}_{\pi_{\bm{\theta}_k}}^{\lambda}(\cdot),a\sim\policy(\cdot|s)}
\left[
A^{\text{GAE}(\gamma,\lambda)}_{{\pi_{\bm{\theta}_k}},C}(s,a)\right]$ can be estimated following the idea as (\ref{app-01-estimator}):
\[\E_{s\sim{d}_{\pi_{\bm{\theta}_k}}^{\lambda}(\cdot),a\sim\policy(\cdot|s)}
\left[
A^{\text{GAE}(\gamma,\lambda)}_{{\pi_{\bm{\theta}_k}},C}(s,a)\right]=
\E_{s\sim d_{\pi_{{\bm{\theta}}_k}}^{\lambda}(\cdot),a\sim\pi_{{\bm{\theta}}_k}(\cdot|s)}\left[\dfrac{\pi_{{\bm{\theta}}}(a|s)}{\pi_{{\bm{\theta}}_k}(a|s)}A^{\text{GAE}(\gamma,\lambda)}_{\pi_{{\bm{\theta}}_k},C}(s,a)\right].
\]

But recall (\ref{performance-improvement}) is a MM-iteration, i.e., 
we require to minimize $\E_{s\sim{d}_{\pi_{\bm{\theta}_k}}^{\lambda}(\cdot)}\text{KL}\left(\pi_{{\bm{\theta}}},\pi_{{\bm{\theta}}_{k}}\right)[s]$, which implies $\policy$ is close to $\pi_{\bm{\theta}_k}$.
Thus it is reasonable $\E_{s\sim{d}_{\pi_{\bm{\theta}_k}}^{\lambda}(\cdot),a\sim\policy(\cdot|s)}
\left[
A^{\text{GAE}(\gamma,\lambda)}_{{\pi_{\bm{\theta}_k}},C}(s,a)\right]\approx 0$, thus, in practice, we update $\nu$ following a simple way
\[
\nu\leftarrow\left\{\nu+\eta(J^{c}(\pi_{{\bm{\theta}}_k})-b)\right\}_{+}.
\]

Finally, according to (\ref{learning-para}), for each data sampled from 
$\cup_{i=1}^{M}\cup_{t=0}^{T}\left\{(s_{i,t},a_{i,t},r_{i,t+1},c_{i,t+1})\right\}$ according to current policy $\pi_{{\bm{\theta}}_k}$, 
we learn the parameter $\bm{\theta}_{k+1}$ as follows,
\begin{flalign}
\nonumber
{\bm{\theta}}_{k+1}&=\arg\min_{{\bm{\theta}}} \dfrac{1}{T}\sum_{t=1}^{T}
\left\{
\text{KL}\left({\pi_{\bm{\theta}_{k+\frac{1}{2}}}}(\cdot|s_{i,t}),\pi_{{\bm{\theta}}}(\cdot|s_{i,t})\right)+\nu_k
\dfrac{1-\gamma\lambda}{1-\gamma}\dfrac{\pi_{\bm{\theta}}(a_{i,t}|s_{i,t})}{\pi_{{\bm{\theta}}_k}(a_{i,t}|s_{i,t})}\hat{A}^{C}_{i,t}
\right\},
\end{flalign}
which can be solved via the first-order optimizer.

 \clearpage

 \section{Additional Discussion about Related Work}
\label{app-related-work}

This section reviews three typical safe reinforcement learning algorithms: CPO \cite{AchiamHTA17}, PCPO \cite{yang2020projection} and FOCOPS \cite{zhang2020first}.
Those algorithms also use new surrogate functions to replace the objective and constraints, which resembles the proposed CUP algorithm.
The goal is to present the contribution of our work.

\subsection{CPO \cite{AchiamHTA17}}
For a given policy $\pi_{{\bm{\theta}}_{k}}$, CPO updates new policy $\pi_{{\bm{\theta}}_{k+1}}$ as follows:
\begin{flalign}
\label{app-cpo-objective}
&\pi_{{\bm{\theta}}_{k+1}}=\arg\max_{\pi_{{\bm{\theta}}}\in\Pi_{{\bm{\theta}}}}~~~~\E_{s\sim d^{\rho_0}_{\pi_{{\bm{\theta}}_k}}(\cdot),a\sim\pi_{{\bm{\theta}}}(\cdot|s)}\left[A_{\pi_{{\bm{\theta}}_k}}(s,a)\right]\\
\label{app-cost-constraint}
&~~~~~~~~\text{s.t.}~~J^{c}(\pi_{{\bm{\theta}}_k})+\frac{1}{1-\gamma}\E_{s\sim d^{\rho_0}_{\pi_{{\bm{\theta}}_k}}(\cdot),a\sim\pi_{{\bm{\theta}}}(\cdot|s)}\left[A^{c}_{\pi_{{\bm{\theta}}_k}}(s,a)\right]\leq b,\\
\label{app-trust-region}
&~~~~~~~~\bar{D}_{\text{KL}}(\pi_{{\bm{\theta}}},\pi_{{\bm{\theta}}_k})=\E_{s\sim d^{\rho_0}_{\pi_{{\bm{\theta}}_k}}(\cdot)}[\text{KL}(\pi_{{\bm{\theta}}},\pi_{{\bm{\theta}}_k})[s]]\leq\delta.
\end{flalign}
It is impractical to solve the problem (\ref{cpo-objective}) directly due to the computational cost.
\cite{AchiamHTA17} suggest to find some convex approximations to replace the term $A_{\pi_{{\bm{\theta}}_k}}(s,a)$ and $\bar{D}_{\text{KL}}(\pi_{{\bm{\theta}}},\pi_{{\bm{\theta}}_k})$ Eq.(\ref{cpo-objective})-(\ref{trust-region}).

Concretely, according to (\ref{performance-difference-2002}), \cite{AchiamHTA17} suggest to use first-order Taylor expansion of $J(\pi_{\bm{\theta}})$ to replace the objective (\ref{cpo-objective}) as follows,
\begin{flalign}
\nonumber
\frac{1}{1-\gamma}\E_{s\sim d_{\pi_{{\bm{\theta}}_k}}^{\rho_0}(\cdot),a\sim\pi_{{\bm{\theta}}_k}(\cdot|s)}\left[\dfrac{\pi_{{\bm{\theta}}}(a|s)}{\pi_{{\bm{\theta}}_k}(a|s)}A_{\pi_{{\bm{\theta}}_k}}(s,a)\right]=J(\pi_{{\bm{\theta}}})-J(\pi_{{\bm{\theta}}_k})\approx({\bm{\theta}}-{\bm{\theta}}_k)^{\top}\nabla_{{\bm{\theta}}} J(\pi_{\bm{\theta}}).
\end{flalign}
Similarly, \cite{AchiamHTA17} use the following approximations to turn the constrained policy optimization (\ref{cpo-objective})-(\ref{trust-region}) to be a convex problem,
\begin{flalign}
\label{app-contrained-01}
\frac{1}{1-\gamma}\E_{s\sim d_{\pi_{{\bm{\theta}}_k}}^{\rho_0}(\cdot),a\sim\pi_{{\bm{\theta}}_k}(\cdot|s)}&\left[\frac{\pi_{{\bm{\theta}}}(a|s)}{\pi_{{\bm{\theta}}_k}(a|s)}A^{c}_{\pi_{{\bm{\theta}}_k}}(s,a)\right]\approx({\bm{\theta}}-{\bm{\theta}}_k)^{\top}\nabla_{{\bm{\theta}}} J^{c}(\pi_{\bm{\theta}}),\\
\label{app-contrained-02}
\bar{D}_{\text{KL}}(\pi_{{\bm{\theta}}},\pi_{{\bm{\theta}}_k})&\approx({\bm{\theta}}-{\bm{\theta}}_k)^{\top}\mathbf{H}({\bm{\theta}}-{\bm{\theta}}_k),
\end{flalign}
where $\mathbf{H}$ is Hessian matrix of $\bar{D}_{\text{KL}}(\pi_{{\bm{\theta}}},\pi_{{\bm{\theta}}_k})$, i.e., 
\[\mathbf{H}[i,j]=:\dfrac{\partial^2}{\partial {\bm{\theta}}_i \partial {\bm{\theta}}_j}\E_{s\sim d^{\rho_0}_{\pi_{{\bm{\theta}}_k}}(\cdot)}\left[\text{KL}(\pi_{{\bm{\theta}}},\pi_{{\bm{\theta}}_k})[s]\right],\]
Eq.(\ref{app-contrained-02}) is the second-oder approximation of (\ref{trust-region}).

Let $ \lambda_{\star},\nu_{\star}$ is the dual solution of the following problem
\[
\lambda_{\star},\nu_{\star}=\arg\max_{\lambda\ge0,\nu\ge0}
\left\{
\dfrac{-1}{2\lambda}
\left(
\bg^{\top}\bH^{-1}\bg-2\nu r+sv^2
\right)
+\nu c-\dfrac{\lambda\delta}{2}
\right\}
;
\]
where
$\bg=\nabla_{\bm{\theta}}\E_{s\sim d^{\rho_0}_{\pi_{{\bm{\theta}}_k}}(\cdot),a\sim\pi_{{\bm{\theta}}}(\cdot|s)}\left[A_{\pi_{{\bm{\theta}}_k}}(s,a)\right]$, $\ba=\nabla_{\bm{\theta}}\E_{s\sim d^{\rho_0}_{\pi_{{\bm{\theta}}_k}}(\cdot),a\sim\pi_{{\bm{\theta}}}(\cdot|s)}\left[A^{c}_{\pi_{{\bm{\theta}}_k}}(s,a)\right]$, $r=\bg^{\top}\bH\ba,s=\ba^{\top}\bH^{-1}\ba$, and $c=J^{c}(\pi_{\bm{\theta}_k})-b$.

Finally, CPO updates parameters according to conjugate gradient as follows:
if approximation to CPO is feasible:
\begin{flalign}
\nonumber
\bm{\theta}_{k+1}=\bm{\theta}_{k}+\frac{1}{\lambda_{\star}}\bH^{-1}(\bg-\nu_{\star}\ba),
\end{flalign}
else,
\[
\bm{\theta}_{k+1}=\bm{\theta}_{k}-\sqrt{\dfrac{2\delta}{\ba^{\top}\bH^{-1}\ba}}\bH^{-1} \ba.
\]

 \subsection{PCPO \cite{yang2020projection}}

Projection-Based Constrained Policy Optimization (PCPO) is an iterative method for optimizing policies in a two-step process: the first step performs a local reward improvement update, while the second step reconciles any constraint violation by projecting the policy back onto the constraint set.

 \textbf{Reward Improvement:}
 \begin{flalign}
 \nonumber
 \pi_{\bm{\theta}_{k+\frac{1}{2}}}=\arg\max_{\pi_{{\bm{\theta}}}\in\Pi_{{\bm{\theta}}}}\E_{s\sim d^{\rho_0}_{\pi_{{\bm{\theta}}_k}}(\cdot),a\sim\pi_{{\bm{\theta}}}(\cdot|s)}\left[A_{\pi_{{\bm{\theta}}_k}}(s,a)\right],\\
 \nonumber
  \text{ s.t.} \bar{D}_{\text{KL}}(\pi_{{\bm{\theta}}},\pi_{{\bm{\theta}}_k})=\E_{s\sim d^{\rho_0}_{\pi_{{\bm{\theta}}_k}}(\cdot)}[\text{KL}(\pi_{{\bm{\theta}}},\pi_{{\bm{\theta}}_k})[s]]\leq\delta;
\end{flalign}
 \textbf{Projection:}
\begin{flalign}
\nonumber
    & \pi_{{\bm{\theta}}_{k+1}}=\arg\min_{\pi_{{\bm{\theta}}}\in\Pi_{{\bm{\theta}}}}~D\left(\pi_{{\bm{\theta}}},\pi_{{\bm{\theta}}_{k+\frac{1}{2}}}\right),\\
 \nonumber    
    \text{s.t.}~ J^{c}(\pi_{{\bm{\theta}}_k})&+\dfrac{1}{1-\gamma}\E_{s\sim d^{\rho_0}_{\pi_{{\bm{\theta}}_k}}(\cdot),a\sim\pi_{{\bm{\theta}}}(\cdot|s)}\left[A^{c}_{\pi_{{\bm{\theta}}_k}}(s,a)\right]\leq b.
\end{flalign}

Then, \cite{yang2020projection} follows CPO \cite{AchiamHTA17} uses convex approximation to original problem, and calculate the update rule as follows,
\[
\bm{\theta}_{k+1}=\bm{\theta}_{k}-\sqrt{\dfrac{2\delta}{\bg^{\top}\bH^{-1}\bg}}\bH^{-1} \bg
-\max
\left(0,
\dfrac
{
\sqrt{\dfrac{2\delta}{\bg^{\top}\bH^{-1}\bg}}\ba^{\top}\bH^{-1} \bg+c
}
{\ba^{\top}\bL^{-1}\ba}
\right)\bL^{-1}\ba,
\]
where $\bL=\bI$ if $D$ is $\ell_2$-norm, and $\bL=\bH$ if $D$ is KL-divergence.

\subsection{FOCOPS \cite{zhang2020first}}

\cite{zhang2020first} propose the First Order Constrained Optimization in Policy Space (FOCOPS) that is a two-step approach. We present it as follows.

\textbf{Step1: Finding the optimal update policy.}

Firstly, for a given policy $\pi_{\bm{\theta}k}$, we find an optimal update policy $\pi^{\star}$ by solving the optimization problem (\ref{app-non-parameterized-02})-(\ref{app-non-parameterized-03}) in the non-parameterized policy space.
\begin{flalign}
\label{app-non-parameterized-01}
&\pi^{\star}=\arg\max_{\pi\in\Pi}~~~~\E_{s\sim d^{\rho_0}_{\pi_{{\bm{\theta}}_k}}(\cdot),a\sim\pi(\cdot|s)}\left[A_{\pi_{{\bm{\theta}}_k}}(s,a)\right]\\
\label{app-non-parameterized-02}
&~~~~~~~~\text{s.t.}~~J^{c}(\pi_{{\bm{\theta}}_k})+\frac{1}{1-\gamma}\E_{s\sim d^{\rho_0}_{\pi_{{\bm{\theta}}_k}}(\cdot),a\sim\pi(\cdot|s)}\left[A^{c}_{\pi_{{\bm{\theta}}_k}}(s,a)\right]\leq b,\\
\label{app-non-parameterized-03}
&~~~~~~~~\bar{D}_{\text{KL}}(\pi_{{\bm{\theta}}},\pi_{{\bm{\theta}}_k})=\E_{s\sim d^{\rho_0}_{\pi_{{\bm{\theta}}_k}}(\cdot)}[\text{KL}(\pi,\pi_{{\bm{\theta}}_k})[s]]\leq\delta.
\end{flalign}
If $\pi_{\bm{\theta}_k}$ is feasible, then the optimal policy for (\ref{app-non-parameterized-01})-(\ref{app-non-parameterized-03}) takes the following form:
\begin{flalign}
\label{optimal-solu}
\pi^{\star}(a|s)=\dfrac{\pi_{\bm{\theta}_k}(a|s)}{Z_{\lambda,\nu}(s)}\exp
\left(\dfrac{1}{\lambda}
\left(
A_{\pi_{\bm{\theta}_k}}(s,a)-\nu A^{c}_{\pi_{\bm{\theta}_k}}(s,a)
\right)
\right),
\end{flalign}
where $Z_{\lambda,\nu}(s)$ is the partition function which ensures (\ref{optimal-solu}) is a valid probability distribution, $\lambda$ and $\nu$ are solutions to the optimization problem:
\[
\min_{\lambda,\nu\ge0}\lambda\nu+\nu \tilde{b}+\lambda \E_{s\sim d^{\rho_0}_{\pi_{{\bm{\theta}}_k}}(\cdot),a\sim\pi^{\star}(\cdot|s)}\left[Z_{\lambda,\nu}(s)\right],
\]
the term $\tilde{b}=(1-\gamma)(b-J^{c}(\pi_{\bm{\theta}_k}))$.

\textbf{Step 2: Projection}

Then, we project the policy found in the previous step back into the parameterized policy space $\Pi_{\bm{\theta}}$ by solving for the closest policy $\policy\in\Pi_{\bm{\theta}}$ to $\pi^{\star}$ in order to obtain $\pi_{\bm{\theta}_{k+1}}$:
\[
\bm{\theta}_{k+1}=\arg\min_{\bm\theta} \E_{s\sim d^{\rho_0}_{\pi_{{\bm{\theta}}_k}}(\cdot)}[\text{KL}(\pi_{{\bm{\theta}}},\pi^{\star})[s]].
\]

\subsection{Comparison to CUP}

Comparing to CPO and PCPO, the implementation of CUP does not depend on any convex approximations.
CPO learns its objective with the deep neural network via the first-order method (see Appendix \ref{sec-app-cpu}).

Concretely, CPO and PCPO approximate the non-convex objective (or constraints) with first-order or second Taylor expansion,
but their implementations still lack a theory to show the error difference between the original objective (or constraints) and its convex approximations.
Additionally, their approaches involve the inverse of a high-dimension Fisher information matrix, which causes their algorithms to require a costly computation for each update when solving high-dimensional RL problems.
While the proposed CUP does not depend on any convex approximations, it learns the policy via first-order optimization approaches.
Thus, CUP does not involve the inverse of a high-dimension Fisher information matrix, which implies CUP requires less memory than CPO and PCPO.

Although FOCOPS is also a non-convex implementation, it heavily depends on the current best-satisfied policy. 
It is known that the current best policy may not be the optimal policy, and FOCOPS requires to project this policy back into the parametric policy space, which implies FOCOPS reduce the chances for an agent to explore the environment since it may lose in a locally optimal solution.
While the proposed CUP does not depend on the current optimal policy, in fact, CUP requires the agent to learn the policy according to (\ref{performance-improvement}), the numerical solution is not the current optimal policy, which helps CUP to explore the environment.

\begin{landscape}
\renewcommand{\arraystretch}{1.5}
\begin{table}[t]
 \caption{Comparison of some safe reinforcement algorithms.}
 \label{tab:comp}
 \centering
 \small
 \setlength{\extrarowheight}{0.04cm}
   \begin{adjustbox}{width=22cm,height=9cm}
 \begin{tabularx}{\linewidth}{m{2.5cm}|>{\centering}m{8cm}|>{\centering}m{7.5cm}|m{3cm}}
  \toprule
  Algorithm & Optimization problem & Implementation & Remark \\
  \midrule
  \makecell{CPO\\ \cite{AchiamHTA17}}
   & $\pi_{\bm{\theta}_{k+1}}=\arg\max_{\pi_{{\bm{\theta}}}\in\Pi_{{\bm{\theta}}}}\E_{s\sim d^{\rho_0}_{\pi_{{\bm{\theta}}_k}}(\cdot),a\sim\pi_{{\bm{\theta}}}(\cdot|s)}\left[A_{\pi_{{\bm{\theta}}_k}}(s,a)\right]$,
   \newline
    s.t. $J^{c}(\pi_{{\bm{\theta}}_k})+\E_{s\sim d^{\rho_0}_{\pi_{{\bm{\theta}}_k}}(\cdot),a\sim\pi_{{\bm{\theta}}}(\cdot|s)}\left[A^{c}_{\pi_{{\bm{\theta}}_k}}(s,a)\right]\leq b $,
    \newline
   $ \bar{D}_{\text{KL}}(\pi_{{\bm{\theta}}},\pi_{{\bm{\theta}}_k})=\E_{s\sim d^{\rho_0}_{\pi_{{\bm{\theta}}_k}}(\cdot)}[\text{KL}(\pi_{{\bm{\theta}}},\pi_{{\bm{\theta}}_k})[s]]\leq\delta$.
    & $\bm{\theta}_{k+1}=\arg\max_{\bm\theta}~\bg^{\top}(\bm{\theta}-\bm{\theta}_{k})$, 
    \newline
     s.t. $c+\mathbf{b}^{\top}(\bm{\theta}-\bm{\theta}_k)\leq0$,
     \newline
   $ \dfrac{1}{2}(\bm{\theta}-\bm{\theta}_k)^{\top}\mathbf{H}(\bm{\theta}-\bm{\theta}_k)\leq\delta$.
    &\makecell{Convex\\ Implementation}
    \\
   \hline 
    \makecell{PCPO\\ \cite{yang2020projection}}
   &Reward Improvement $\pi_{\bm{\theta}_{k+\frac{1}{2}}}=\arg\max_{\pi_{{\bm{\theta}}}\in\Pi_{{\bm{\theta}}}}\E_{s\sim d^{\rho_0}_{\pi_{{\bm{\theta}}_k}}(\cdot),a\sim\pi_{{\bm{\theta}}}(\cdot|s)}\left[A_{\pi_{{\bm{\theta}}_k}}(s,a)\right]$,
    \newline
   s.t. $\bar{D}_{\text{KL}}(\pi_{{\bm{\theta}}},\pi_{{\bm{\theta}}_k})=\E_{s\sim d^{\rho_0}_{\pi_{{\bm{\theta}}_k}}(\cdot)}[\text{KL}(\pi_{{\bm{\theta}}},\pi_{{\bm{\theta}}_k})[s]]\leq\delta$;
    \newline
    Projection
     \newline
    $\pi_{{\bm{\theta}}_{k+1}}=\arg\min_{\pi_{{\bm{\theta}}}\in\Pi_{{\bm{\theta}}}}~D\left(\pi_{{\bm{\theta}}},\pi_{{\bm{\theta}}_{k+\frac{1}{2}}}\right)$,
    \newline
    s.t. $J^{c}(\pi_{{\bm{\theta}}_k})+\dfrac{1}{1-\gamma}\E_{s\sim d^{\rho_0}_{\pi_{{\bm{\theta}}_k}}(\cdot),a\sim\pi_{{\bm{\theta}}}(\cdot|s)}\left[A^{c}_{\pi_{{\bm{\theta}}_k}}(s,a)\right]\leq b $.
    & Reward Improvement
     \newline
    $\bm{\theta}_{k+\frac{1}{2}}=\arg\max_{\bm\theta}~\bg^{\top}(\bm{\theta}-\bm{\theta}_{k})$, 
     \newline
     s.t.$\dfrac{1}{2}(\bm{\theta}-\bm{\theta}_k)^{\top}\mathbf{H}(\bm{\theta}-\bm{\theta}_k)\leq\delta$;
    \newline
    Projection
     \newline
      $\pi_{{\bm{\theta}}_{k+1}}=\arg\min_{\bm\theta}\dfrac{1}{2}(\bm{\theta}-\bm{\theta}_k)^{\top}\mathbf{L}(\bm{\theta}-\bm{\theta}_k)$,
       \newline 
     s.t. $c+\mathbf{b}^{\top}(\bm{\theta}-\bm{\theta}_k)\leq0$.
    &\makecell{Convex\\ Implementation}\\  
   \hline 
    \makecell{FOCOPS\\ \cite{zhang2020first}}
   &Optimal update policy
    \newline
    $\pi^{\star}=\arg\max_{\pi\in\Pi}\E_{s\sim d^{\rho_0}_{\pi_{{\bm{\theta}}_k}}(\cdot),a\sim\pi(\cdot|s)}\left[A_{\pi_{{\bm{\theta}}_k}}(s,a)\right]$,
   \newline
    s.t. $J^{c}(\pi_{{\bm{\theta}}_k})+\E_{s\sim d^{\rho_0}_{\pi_{{\bm{\theta}}_k}}(\cdot),a\sim\pi(\cdot|s)}\left[A^{c}_{\pi_{{\bm{\theta}}_k}}(s,a)\right]\leq b $,
    \newline
   $ \bar{D}_{\text{KL}}(\pi_{{\bm{\theta}}},\pi_{{\bm{\theta}}_k})=\E_{s\sim d^{\rho_0}_{\pi_{{\bm{\theta}}_k}}(\cdot)}[\text{KL}(\pi,\pi_{{\bm{\theta}}_k})[s]]\leq\delta$;
   \newline
   Projection
    \newline
    $\pi_{{\bm{\theta}}_{k+1}}=\arg\min_{\pi_{{\bm{\theta}}}\in\Pi_{{\bm{\theta}}}}~\E_{s\sim d^{\rho_0}_{\pi_{{\bm{\theta}}_k}}(\cdot)}[\text{KL}(\pi_{{\bm{\theta}}},\pi^{\star})[s]]$.
    & Optimal update policy
    \newline
     \newline
    $\pi^{\star}(a|s)=\frac{\pi_{\bm{\theta}_k}(a|s)}{Z_{\lambda,\nu}(s)}\exp\left(\frac{1}{\lambda}\left(A_{\pi_{\bm{\theta}_k}}(s,a)-\nu A^{c}_{\pi_{\bm{\theta}_k}}(s,a)\right)\right)$;
     \newline
      \newline
      Projection
       \newline
    $\bm{\theta}_{k+1}=\arg\min_{\bm\theta} \E_{s\sim d^{\rho_0}_{\pi_{{\bm{\theta}}_k}}(\cdot)}[\text{KL}(\pi_{{\bm{\theta}}},\pi^{\star})[s]].$
    & \makecell{Non-Convex \\Implementation}
    \\
     \hline 
    \makecell{CUP\\ (Our Work)}
    &
    Policy Improvement
    \begin{flalign} 
    \nonumber
    \pi_{{\bm{\theta}}_{k+\frac{1}{2}}}=\arg\max_{\pi_{{\bm{\theta}}}\in\Pi_{{\bm{\theta}}}}
\bigg\{
\E_{s\sim{d}_{\pi_{\bm{\theta}_k}}^{\lambda}(\cdot),a\sim\policy(\cdot|s)}
\left[
A^{\text{GAE}(\gamma,\lambda)}_{{\pi_{\bm{\theta}_k}}}(s,a)\right]
\\
 \nonumber
-\alpha_k
\sqrt{
\E_{s\sim{d}_{{\pi_{\bm{\theta}_k}}}^{\lambda}(\cdot)}\left[\text{KL}(\pi_{\bm{\theta}_k},\policy)[s]\right]}
\bigg\},
\end{flalign}
Projection
 \begin{flalign}
\nonumber
\pi_{{\bm{\theta}}_{k+1}}=\arg\min_{\pi_{{\bm{\theta}}}\in\Pi_{{\bm{\theta}}}}~D\Big(\pi_{{\bm{\theta}}},\pi_{{\bm{\theta}}_{k+\frac{1}{2}}}\Big),~~~~~~~~~~~~~~~~~\\
\nonumber
 \text{s.t.} J^{c}(\pi_{{\bm{\theta}}_k})+\dfrac{1}{1-\tilde\gamma}\E_{s\sim{d}_{\pi_{\bm{\theta}_k}}^{\lambda}(\cdot),a\sim\policy(\cdot|s)}
\left[A^{\text{GAE}(\gamma,\lambda)}_{{\pi_{\bm{\theta}_k}},C}(s,a)\right]\\
\nonumber
+\beta_k\sqrt{\E_{s\sim{d}_{{\pi_{\bm{\theta}_k}}}^{\lambda}(\cdot)}\left[\text{KL}(\pi_{\bm{\theta}_k},\policy)[s]\right]}\leq b. 
\end{flalign}
    &
    Policy Improvement
     \newline
    \begin{flalign}
    \nonumber
{\bm{\theta}}_{k+\frac{1}{2}}=\arg\max_{{\bm{\theta}}}\Bigg\{\frac{1}{T}\sum_{t=1}^{T}\frac{\pi_{\bm{\theta}}(a_{t}|s_{t})}{{\pi_{\bm{\theta}_{k}}}(a_{t}|s_{t})}\hat{A}_{t}~~~~~~~~~~~~~~~~~~~~~~~~~~~~~\\
\nonumber
-\alpha\sqrt{\frac{1}{T}\sum_{t=1}^{T}\text{KL}(\pi_{\bm{\theta}_{k}}(\cdot|s_{t}),\pi_{{\bm{\theta}}}(\cdot|s_{t}))}\Bigg\};
\end{flalign}
Projection
 \newline
\begin{flalign}
\nonumber
{\bm{\theta}}_{k+1}=\arg\min_{{\bm{\theta}}} \dfrac{1}{T}\sum_{t=1}^{T}
\bigg\{
\text{KL}\left({\pi_{\bm{\theta}_{k+\frac{1}{2}}}}(\cdot|s_{t}),\pi_{{\bm{\theta}}}(\cdot|s_{t})\right)\\
\nonumber
+\nu_k
\dfrac{1-\gamma\lambda}{1-\gamma}\dfrac{\pi_{\bm{\theta}}(a_{t}|s_{t})}{\pi_{{\bm{\theta}}_k}(a_{t}|s_{t})}\hat{A}^{C}_{t}
\bigg\}.
\end{flalign}
    &\makecell{Non-Convex\\ Implementation}
    \\
    \bottomrule
 \end{tabularx}
  \end{adjustbox}
 \label{app-table-com}
\end{table}
\end{landscape}

\clearpage

\section{Preliminaries}

In this section, we introduce some new notations about state distribution, policy optimization and $\lambda$-returns.

\subsection{State Distribution}
We use $\mathbf{P}_{\pi_{\bm \theta}}\in\R^{|\calS|\times|\calS|}$ to denote the state transition matrix by executing $\policy$, and their components are:
\[
\mathbf{P}_{\pi_{\bm \theta}}[s,s'] =\sum_{a\in\mathcal{A}}\pi_{\bm{\theta}}(a|s)\mathbb{P}(s'|s,a)=:\Pro_{\policy}(s^{'}|s),~~s,s^{'}\in\calS,
\]
which denotes one-step state transformation probability from $s$ to $s^{'}$.

We use $\mathbb{P}_{\policy}(s_t=s|s_0)$ to denote the probability of visiting $s$ after $t$
time steps from the initial state $s_0$ by executing $\policy$.
Particularly, we notice if $t=0$, $s_t\ne s_0$, then $\mathbb{P}_{\policy}(s_t=s|s_0)=0$, i.e.,
\begin{flalign}
\label{special-inititial-pro}
\mathbb{P}_{\policy}(s_t=s|s_0)=0,~~ t=0~\text{and}~s\ne s_0.
\end{flalign}
Then for any initial state $s_0\sim\rho(\cdot)$, the following holds,
\begin{flalign}
\label{pro-pi-t-step-app}
\mathbb{P}_{{\policy}}(s_t=s|s_0)=&\sum_{s^{'}\in\mathcal{S}}\mathbb{P}_{{\policy}}(s_t=s|s_{t-1}=s^{'})\mathbb{P}_{{\policy}}(s_{t-1}=s^{'}|s_0).
\end{flalign}

Recall $d_{\pi_{\bm {\theta}}}^{s_0}(s)$ denotes the normalized discounted distribution of the future state $s$ encountered starting at $s_0$ by executing $\pi_{\bm {\theta}}$,
\[d_{\pi_{\bm {\theta}}}^{s_0}(s)=(1-\gamma)\sum_{t=0}^{\infty}\gamma^{t}\mathbb{P}_{\pi_{\bm {\theta}}}(s_t=s|s_0).\]
Furthermore, since $s_0\sim\rho_{0}(\cdot)$, we define
\[
d_{\pi_{\bm {\theta}}}^{\rho_0}(s)=\mathbb{E}_{s_0\sim\rho_{0}(\cdot)}[d_{\pi_{\bm {\theta}}}^{s_0}(s)]=\int_{s_0\in\calS}\rho_{0}(s_0)d^{s_0}_{\pi_{\bm {\theta}}}(s)\text{d}s_0
\]
as the discounted state visitation distribution over the initial distribution $\rho_0(\cdot)$. We use $\bd_{\pi_{\bm {\theta}}}^{\rho_0}\in\R^{|\calS|}$ to store all the normalized discounted state distributions, and its components are:
\[
\bd_{\pi_{\bm {\theta}}}^{\rho_0}[s]=d_{\pi_{\bm {\theta}}}^{\rho_0}(s),~~s\in\calS.
\]

We use $\bm{\rho}_0\in\R^{|\calS|}$ to denote initial state distribution vector, and their components are:
\[
\bm{\rho}_{0}[s]=\rho_{0}(s),~~s\in\calS.
\]
Then, we rewrite $\bd_{\pi_{\bm {\theta}}}^{\rho_0}$ as the following matrix version,
\begin{flalign}
\bd_{\pi_{\bm {\theta}}}^{\rho_0}=(1-\gamma)\sum_{t=0}^{\infty}(\gamma\bP_{\policy})^{t}\bm{\rho}_{0}=(1-\gamma)(\bI-\gamma\bP_{\policy})^{-1}\bm{\rho}_{0}.
\end{flalign}

\subsection{Objective of MDP}

Recall $\tau=\{s_{t}, a_{t}, r_{t+1}\}_{t\ge0}\sim{\pi_{\bm \theta}}$, according to $\tau$,
we define the expected return $J({\pi_{\bm \theta}}|s_0)$ as follows,
\begin{flalign}
\label{Eq:J-theta-app}
  J({\pi_{\bm \theta}}|s_0)=&\mathbb{E}_{\tau\sim\pi_{\bm {\theta}}}[R(\tau)]
    =\dfrac{1}{1-\gamma}\mathbb{E}_{s\sim d_{\pi_{\bm {\theta}}}^{s_0}(\cdot),a\sim\pi_{\bm {\theta}}(\cdot|s),s^{'}\sim\Pro(\cdot|s,a)}\left[r(s^{'}|s,a)\right],
\end{flalign}
where $R(\tau)=\sum_{t\ge0}\gamma^{t}r_{t+1}$, and the notation $J({\pi_{\bm \theta}}|s_0)$ is ``conditional'' on $s_0$ is to emphasize the trajectory $\tau$ starting from $s_0$.
Since $s_0\sim\rho_{0}(\cdot)$, we define the objective of MDP as follows,
\begin{flalign}
\label{app-objective-fun-01}
J(\pi_{\bm {\theta}})
=\dfrac{1}{1-\gamma}\mathbb{E}_{s\sim d_{\pi_{\bm {\theta}}}^{\rho_0}(\cdot),a\sim\pi_{\bm {\theta}}(\cdot|s),s^{'}\sim\Pro(\cdot|s,a)}\left[r(s^{'}|s,a)\right].
\end{flalign}
The goal of reinforcement learning is to solve the following optimization problem:
\begin{flalign}
\label{Eq:thata-optimal-app}
    \bm{\theta}_{\star}=\arg\max_{\bm{\theta}\in\R^{p}} J(\policy).
\end{flalign}

\subsection{Bellman Operator}

Let $\mathcal{B}_{\pi_{\bm\theta}}$ be the \emph{Bellman operator}:
 \begin{flalign}
 \label{bellman-op}
\mathcal{B}_{\pi_{\bm\theta}}:  \mathbb{R}^{|\mathcal{S}|}\rightarrow \mathbb{R}^{|\mathcal{S}|},~~~~ v\mapsto \mathbf{r}_{\pi_{\bm \theta}}+\gamma \mathbf{P}_{\pi_{\bm \theta}}v,
 \end{flalign}
 where
$\mathbf{r}_{\pi_{\bm \theta}}\in\R^{|\calS|}$ is the expected reward according to $\pi_{\bm \theta}$, i.e., their components are: \[\mathbf{r}_{\pi_{\bm \theta}}[s] =\sum_{a\in\mathcal{A}}\sum_{s^{'}\in\mathcal{S}}\pi_{\bm{\theta}}(a|s)r(s'|s,a)=:R_{\policy}(s),~~s\in\calS.\]
Let $\bv_{\pi_{\bm{\theta}}}\in\R^{|\calS|}$ be a vector that stores all the state value functions, and its components are:
$
\bv_{\pi_{\bm{\theta}}}[s]=V_{\pi_{\bm{\theta}}}(s),~~s\in\calS.
$
Then, according to Bellman operator (\ref{bellman-op}), we rewrite Bellman equation \cite{bellman1957markovian} as the following matrix version:
\begin{flalign}
\label{bellman-eq-matrix}
\mathcal{B}_{\pi_{\bm\theta}}\bv_{\pi_{\bm{\theta}}}=\bv_{\pi_{\bm{\theta}}}.
\end{flalign}

Furthermore, we define $\lambda$-\emph{Bellman operator} $\mathcal{B}^{\lambda}_{\pi_{\bm\theta}}$ as follows,
\[
\mathcal{B}^{\lambda}_{\pi_{\bm\theta}}=(1-\lambda)\sum_{t=0}^{\infty}\lambda^{t} (\mathcal{B}_{\pi_{\bm\theta}})^{{t}+1},
\]
which implies
 \begin{flalign}
 \label{bellman-op-lam}
\mathcal{B}^{\lambda}_{\pi_{\bm\theta}}: \mathbb{R}^{|\mathcal{S}|}\rightarrow \mathbb{R}^{|\mathcal{S}|},~~~~ 
 v\mapsto \mathbf{r}^{(\lambda)}_{\pi_{\bm \theta}}+\tilde{\gamma}\mathbf{P}^{(\lambda)}_{\pi_{\bm \theta}}v,
 \end{flalign}
 where 
 \begin{flalign}
\label{def:matrix-p-lam-return}
 \mathbf{P}^{(\lambda)}_{\pi_{\bm \theta}}=(1-\gamma\lambda)\sum_{{t}=0}^{\infty}(\gamma\lambda)^{{t}}\bP^{{t}+1}_{\policy},~~ \mathbf{r}^{(\lambda)}_{\pi_{\bm \theta}}=\sum_{{t}=0}^{\infty}(\gamma\lambda\bP_{\policy})^{{t}}\mathbf{r}_{\pi_{\bm \theta}},~~\tilde{\gamma}=\dfrac{\gamma(1-\lambda)}{1-\gamma\lambda}.
 \end{flalign}

 Let
 \begin{flalign}
 \label{lam-pro-value}
 \Pro_{\policy}^{(\lambda)}(s^{'}|s)=\mathbf{P}^{(\lambda)}_{\pi_{\bm \theta}}[s,s^{'}]=:(1-\gamma\lambda)\sum_{{t}=0}^{\infty}(\gamma\lambda)^{{t}}\left(\bP^{{t}+1}_{\policy}[s,s^{'}]\right),
 \end{flalign}
 where $\bP^{{t}+1}_{\policy}[s,s^{'}]$ is the $(s,s^{'})$-th component of matrix $\bP^{{t}+1}_{\policy}$, which is the probability of visiting $s^{'}$ after $t+1$ time steps from
 the state $s$ by executing $\policy$, i.e.,
  \begin{flalign}
 \label{lam-pro-value-01}
 \bP^{{t}+1}_{\policy}[s,s^{'}]= \Pro_{\policy}(s_{t+1}=s^{'}|s).
\end{flalign}
Thus, we rewrite  $\Pro_{\policy}^{(\lambda)}(s^{'}|s)$ (\ref{lam-pro-value}) as follows
 \begin{flalign}
 \label{lam-pro-value-02}
 \Pro_{\policy}^{(\lambda)}(s^{'}|s)=(1-\gamma\lambda)\sum_{{t}=0}^{\infty}(\gamma\lambda)^{{t}}\Pro_{\policy}(s_{t+1}=s^{'}|s),~~s\in\calS.
 \end{flalign}

\subsection{$\lambda$-Return}
 Furthermore,
recall the following visitation sequence $\tau=\{s_{t}, a_{t}, r_{t+1}\}_{t\ge0}$ induced by $\policy$,
it is similar to the probability $\mathbb{P}_{{\policy}}(s_t=s^{'}|s_0)$, we introduce $\mathbb{P}^{(\lambda)}_{{\policy}}(s_t=s^{'}|s_0)$
as the probability of  transition from state $s$ to state $s^{'}$after $t$
time steps under the dynamic transformation matrix $ \mathbf{P}^{(\lambda)}_{\pi_{\bm \theta}}$.
Then, the following equity holds
\begin{flalign}
\label{pro-pi-t-step}
\mathbb{P}_{{\policy}}^{(\lambda)}(s_t=s|s_0)=&\sum_{s^{'}\in\mathcal{S}}\mathbb{P}_{{\policy}}^{(\lambda)}(s_t=s|s_{t-1}=s^{'})\mathbb{P}_{{\policy}}^{(\lambda)}(s_{t-1}=s^{'}|s_0).
\end{flalign}
Similarly, let
  \begin{flalign}
  \nonumber
 R^{(\lambda)}_{\pi_{\bm \theta}}(s)=:
  \mathbf{r}^{(\lambda)}_{\pi_{\bm \theta}}[s]=&\sum_{{t}=0}^{\infty}(\gamma\lambda\bP_{\policy})^{{t}}\mathbf{r}_{\pi_{\bm \theta}}[s]
  =  \sum_{{t}=0}^{\infty}(\gamma\lambda)^{t}\left(\sum_{s^{'}\in\calS}\Pro_{\policy}(s_{t}=s^{'}|s)R_{\policy}(s^{'})\right)\\
   \label{lam-pro-value-03}
   =&
     \sum_{{t}=0}^{\infty}\sum_{s^{'}\in\calS}(\gamma\lambda)^{t}\Pro_{\policy}(s_{t}=s^{'}|s)R_{\policy}(s^{'}).
 \end{flalign}
 It is similar to normalized discounted distribution $d_{\pi_{\bm {\theta}}}^{\rho_0}(s)$, we introduce $\lambda$-return version of discounted state distribution $d_{\pi_{\bm {\theta}}}^{\lambda}(s)$ as follows: $\forall s\in\calS$,
\begin{flalign}
\label{lambda-dis-state-distribution}
d_{\pi_{\bm {\theta}}}^{s_0,\lambda}(s)&=(1-\tilde \gamma)\sum_{t=0}^{\infty}\tilde{\gamma}^{t}\mathbb{P}^{(\lambda)}_{\pi_{\bm {\theta}}}(s_t=s|s_0),\\
d_{\pi_{\bm {\theta}}}^{\lambda}(s)&=\E_{s_0\sim\rho_{0}(\cdot)}\left[d_{\pi_{\bm {\theta}}}^{s_0,\lambda}(s)\right],\\
\label{mat-lambda-dis-state-distribution}
\bd_{\pi_{\bm {\theta}}}^{\lambda}[s]&=d_{\pi_{\bm {\theta}}}^{\lambda}(s),
\end{flalign}
where $\mathbb{P}^{(\lambda)}_{\pi_{\bm {\theta}}}(s_t=s|s_0)$ is the $(s_0,s)$-th component of the matrix $\left(\mathbf{P}^{(\lambda)}_{\pi_{\bm \theta}}\right)^{t}$, i.e.,
\[
\mathbb{P}^{(\lambda)}_{\pi_{\bm {\theta}}}(s_t=s|s_0)=:\left(\mathbf{P}^{(\lambda)}_{\pi_{\bm \theta}}\right)^{t}[s_0,s].
\]
Similarly, $\mathbb{P}^{(\lambda)}_{\pi_{\bm {\theta}}}(s_t=s^{'}|s)$ is the $(s,s^{'})$-th component of the matrix $\left(\mathbf{P}^{(\lambda)}_{\pi_{\bm \theta}}\right)^{t}$, i.e.,
\[
\mathbb{P}^{(\lambda)}_{\pi_{\bm {\theta}}}(s_t=s^{'}|s)=:\left(\mathbf{P}^{(\lambda)}_{\pi_{\bm \theta}}\right)^{t}[s,s^{'}].
\]

Finally, we rewrite $\bd_{\pi_{\bm {\theta}}}^{\rho_0,\lambda}$ as the following matrix version,
\begin{flalign}
\label{matrixversion-lambda-dis-state-distribution}
\bd_{\pi_{\bm {\theta}}}^{\lambda}=(1-\tilde \gamma)\sum_{t=0}^{\infty}\left(\gamma\bP^{(\lambda)}_{\policy}\right)^{t}\bm{\rho}_{0}=(1-\tilde \gamma)\left(\bI-\tilde{\gamma}\bP^{(\lambda)}_{\policy}\right)^{-1}\bm{\rho}_{0}.
\end{flalign}
 \begin{remark}[$\lambda$-Return Version of Bellman Equation]
 According to Bellman equation (\ref{bellman-eq-matrix}), $\bv_{\pi_{\bm{\theta}}}$ is fixed point of $\lambda$-operator $\mathcal{B}^{\lambda}_{\pi_{\bm\theta}}$, i.e.,
 \begin{flalign}
 \label{bellman-eq-return}
 \bv_{\pi_{\bm{\theta}}}=\mathbf{r}^{(\lambda)}_{\pi_{\bm \theta}}+{\tilde{\gamma}} \mathbf{P}^{(\lambda)}_{\pi_{\bm \theta}}\bv_{\pi_{\bm{\theta}}}.
 \end{flalign}
 Recall $\tau=\{s_t,a_t,r_{t+1}\}_{t\ge0}\sim\policy$, according to (\ref{bellman-eq-return}), the value function of initial state $s_0$ is
 \begin{flalign}
  \label{bellman-eq-1}
  V_{\policy}(s_0)= \bv_{\pi_{\bm{\theta}}}[s_0]=\mathbf{r}^{(\lambda)}_{\pi_{\bm \theta}}[s_0]+\tilde \gamma \mathbf{P}^{(\lambda)}_{\pi_{\bm \theta}}\bv_{\pi_{\bm{\theta}}}[s_0]
 =R^{(\lambda)}_{\pi_{\bm \theta}}(s_0)+{\tilde{\gamma}}\sum_{s^{'}\in\calS} \Pro_{\policy}^{(\lambda)}(s_1=s^{'}|s_0)V_{\policy}(s^{'}).
 \end{flalign}
 \end{remark}

We unroll the expression of (\ref{bellman-eq-1}) repeatedly, then we have
\begin{flalign}
\nonumber
V_{{\policy}}(s_0)
=&{R}^{(\lambda)}_{\policy}(s_0)+{\tilde{\gamma}}\sum_{s^{'}\in\mathcal{S}}\mathbb{P}_{{\policy}}^{(\lambda)}(s_1=s^{'}|s_0)
\underbrace{\left(
{R}^{(\lambda)}_{\policy}(s^{'})
+{\tilde{\gamma}}\sum_{s^{''}\in\mathcal{S}}\mathbb{P}_{{\policy}}^{(\lambda)}(s_2=s^{''}|s_1=s^{'})V_{{\policy}}(s^{''})
\right)}_{=V_{{\policy}}(s^{'})}\\
\nonumber
=&{R}^{(\lambda)}_{\policy}(s_0)+{\tilde{\gamma}}\sum_{s^{'}\in\mathcal{S}}\mathbb{P}_{{\policy}}^{(\lambda)}(s_1=s^{'}|s_0){R}^{(\lambda)}_{\policy}(s^{'})\\
\nonumber
&~~~~~~~~~~~~~+{\tilde{\gamma}}^2\sum_{s^{''}\in\mathcal{S}}\underbrace{
\left(
\sum_{s^{'}\in\mathcal{S}}\mathbb{P}_{{\policy}}^{(\lambda)}(s_1=s^{'}|s_0)
\mathbb{P}_{{\policy}}^{(\lambda)}(s_2=s^{''}|s_1=s^{'})
\right)
}_{\overset{(\ref{pro-pi-t-step})}=:\mathbb{P}^{(\lambda)}_{{\policy}}\left(s_2=s^{''}|s_0\right)}V_{{\policy}}(s^{''})\\
\nonumber
=&{R}^{(\lambda)}_{\policy}(s_0)+{\tilde{\gamma}}\sum_{s\in\mathcal{S}}\mathbb{P}_{{\policy}}^{(\lambda)}(s_1=s|s_0){R}^{(\lambda)}_{\policy}(s)
+{\tilde{\gamma}}^2\sum_{s\in\mathcal{S}}\mathbb{P}_{{\policy}}^{(\lambda)}(s_2=s|s_0)V_{{\policy}}(s)\\
\nonumber
=&{R}^{(\lambda)}_{\policy}(s_0)+{\tilde{\gamma}}\sum_{s\in\mathcal{S}}\mathbb{P}_{{\policy}}^{(\lambda)}(s_1=s|s_0){R}^{(\lambda)}_{\policy}(s)\\
\nonumber
&~~~~~~~~~~~~~+{\tilde{\gamma}}^2\sum_{s\in\mathcal{S}}\mathbb{P}_{{\policy}}^{(\lambda)}(s_2=s|s_0)
\left(
{R}^{(\lambda)}_{\policy}(s)+{\tilde{\gamma}}\sum_{s^{'}\in\mathcal{S}}\mathbb{P}_{{\policy}}^{(\lambda)}(s_3=s^{'}|s_2=s)V_{{\policy}}(s^{'})
\right)\\
\nonumber
=&{R}^{(\lambda)}_{\policy}(s_0)+{\tilde{\gamma}}\sum_{s\in\mathcal{S}}\mathbb{P}_{{\policy}}^{(\lambda)}(s_1=s|s_0){R}^{(\lambda)}_{\policy}(s)+{\tilde{\gamma}}^2\sum_{s\in\mathcal{S}}\mathbb{P}_{{\policy}}^{(\lambda)}(s_2=s|s_0){R}^{(\lambda)}_{\policy}(s)\\
\nonumber
&~~~~~~~~~~~~~+{\tilde{\gamma}}^3\sum_{s^{'}\in\mathcal{S}}\underbrace{
\left(
\sum_{s\in\mathcal{S}}\mathbb{P}_{{\policy}}^{(\lambda)}(s_2=s|s_0)
\mathbb{P}_{{\policy}}^{(\lambda)}(s_3=s^{'}|s_2=s)
\right)
}_{=\mathbb{P}_{{\policy}}^{(\lambda)}(s_3=s^{'}|s_0)}V_{{\policy}}(s^{'})\\
\nonumber
=&{R}^{(\lambda)}_{}(s_0)+{\tilde{\gamma}}\sum_{s\in\mathcal{S}}\mathbb{P}_{{\policy}}^{(\lambda)}(s_1=s|s_0){R}^{(\lambda)}_{\policy}(s)+{\tilde{\gamma}}^2\sum_{s\in\mathcal{S}}\mathbb{P}_{{\policy}}^{(\lambda)}(s_2=s|s_0){R}^{(\lambda)}_{\policy}(s)\\
\nonumber
&~~~~~~~~~~~~~+{\tilde{\gamma}}^3\sum_{s\in\mathcal{S}}\mathbb{P}^{(\lambda)}_{{\policy}}(s_3=s|s_0)V_{{\policy}}(s)\\
\nonumber
=&\cdots
\\
\label{re-bellman-eq-01}
=&\sum_{s\in\calS}\sum_{t=0}^{\infty}{\tilde{\gamma}}^{t}\mathbb{P}_{{\policy}}^{(\lambda)}(s_t=s|s_0){R}^{(\lambda)}_{\policy}(s)
\overset{(\ref{lambda-dis-state-distribution})}=\dfrac{1}{1-{\tilde{\gamma}}}\sum_{s\in\calS}d^{s_0,\lambda}_{\policy}(s) {R}^{(\lambda)}_{\policy}(s)
.
\end{flalign}

According to (\ref{Eq:J-theta-app}) and (\ref{re-bellman-eq-01}), we have
\begin{flalign}
\nonumber
J({\policy})=&\sum_{s_0\in\calS}\rho_{0}(s_0)V_{\policy}(s_0)\overset{(\ref{re-bellman-eq-01})}=
\dfrac{1}{1-{\tilde{\gamma}}}\sum_{s_0\in\calS}\rho_{0}(s_0)\sum_{s\in\calS}d^{s_0,\lambda}_{\policy}(s) R^{(\lambda)}_{\pi_{\bm \theta}}(s)\\
\nonumber
=&\dfrac{1}{1-{\tilde{\gamma}}}\sum_{s\in\calS}\underbrace{\left(\sum_{s_0\in\calS}\rho_{0}(s_0)d^{s_0,\lambda}_{\policy}(s)\right)}_{=d^{\lambda}_{\policy}(s)} R^{(\lambda)}_{\pi_{\bm \theta}}(s)\\
\label{lam-return-objective}
=&\dfrac{1}{1-{\tilde{\gamma}}}\sum_{s\in\calS}d^{\lambda}_{\policy}(s)R^{(\lambda)}_{\pi_{\bm \theta}}(s)
=
\dfrac{1}{1-{\tilde{\gamma}}}\E_{s\sim d^{\lambda}_{\policy}(\cdot)}
\left[R^{(\lambda)}_{\pi_{\bm \theta}}(s)\right]
.
\end{flalign}

Finally, we summarize above results in the following Lemma \ref{lem:lam-return-objective}.
\begin{lemma}
\label{lem:lam-return-objective}
The objective $J(\policy)$ (\ref{app-objective-fun-01}) can be rewritten as the following version:
\[
J({\policy})=\dfrac{1}{1-{\tilde{\gamma}}}\sum_{s\in\calS}d^{\lambda}_{\policy}(s)R^{(\lambda)}_{\pi_{\bm \theta}}(s)=
\dfrac{1}{1-{\tilde{\gamma}}}\E_{s\sim d^{\lambda}_{\policy}(\cdot)}
\left[R^{(\lambda)}_{\pi_{\bm \theta}}(s)\right].
\]
\end{lemma}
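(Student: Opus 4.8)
The plan is to derive the identity from the $\lambda$-version of the Bellman equation and then average over the initial-state distribution. First I would establish the fixed-point identity (\ref{bellman-eq-return}), $\bv_{\pi_{\bm{\theta}}} = \mathbf{r}^{(\lambda)}_{\pi_{\bm \theta}} + \tilde{\gamma}\,\mathbf{P}^{(\lambda)}_{\pi_{\bm \theta}}\bv_{\pi_{\bm{\theta}}}$. This holds because the ordinary Bellman equation (\ref{bellman-eq-matrix}) makes $\bv_{\pi_{\bm{\theta}}}$ a fixed point of $\mathcal{B}_{\pi_{\bm\theta}}$, hence of every iterate $(\mathcal{B}_{\pi_{\bm\theta}})^{t+1}$, and therefore of the convex combination $\mathcal{B}^{\lambda}_{\pi_{\bm\theta}} = (1-\lambda)\sum_{t\ge0}\lambda^{t}(\mathcal{B}_{\pi_{\bm\theta}})^{t+1}$. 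The one genuine computation here is to check that $\mathcal{B}^{\lambda}_{\pi_{\bm\theta}}$ acts as in (\ref{bellman-op-lam}): expanding $(\mathcal{B}_{\pi_{\bm\theta}})^{t+1}v = \sum_{j=0}^{t}(\gamma\mathbf{P}_{\pi_{\bm \theta}})^{j}\mathbf{r}_{\pi_{\bm \theta}} + (\gamma\mathbf{P}_{\pi_{\bm \theta}})^{t+1}v$, multiplying by $(1-\lambda)\lambda^{t}$, summing over $t$, and swapping the order of the resulting double sum yields $\sum_{j\ge0}(\gamma\lambda\mathbf{P}_{\pi_{\bm \theta}})^{j}\mathbf{r}_{\pi_{\bm \theta}} + (1-\lambda)\sum_{t\ge0}\lambda^{t}(\gamma\mathbf{P}_{\pi_{\bm \theta}})^{t+1}v$, which is precisely $\mathbf{r}^{(\lambda)}_{\pi_{\bm \theta}} + \tilde\gamma\mathbf{P}^{(\lambda)}_{\pi_{\bm \theta}}v$ by the definitions (\ref{def:matrix-p-lam-return}) together with the elementary identity $\tilde\gamma(1-\gamma\lambda)=\gamma(1-\lambda)$. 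All rearrangements are absolutely convergent since $\gamma\lambda<1$ and $\gamma<1$ with bounded rewards, so no convergence subtlety arises.

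Next I would read off the scalar form (\ref{bellman-eq-1}) at the initial state $s_0$ and unroll it repeatedly, exactly as in the derivation displayed just before (\ref{re-bellman-eq-01}): at each step I substitute the $\lambda$-Bellman equation for the value term that still appears and merge the two nested $\mathbb{P}^{(\lambda)}_{\pi_{\bm\theta}}$ transition sums into a single one using the Chapman--Kolmogorov identity (\ref{pro-pi-t-step}). After $T$ unrollings this gives $V_{\pi_{\bm\theta}}(s_0) = \sum_{t=0}^{T}\tilde\gamma^{t}\sum_{s\in\calS}\mathbb{P}^{(\lambda)}_{\pi_{\bm\theta}}(s_t=s|s_0)R^{(\lambda)}_{\pi_{\bm \theta}}(s) + \tilde\gamma^{T+1}\sum_{s\in\calS}\mathbb{P}^{(\lambda)}_{\pi_{\bm\theta}}(s_{T+1}=s|s_0)V_{\pi_{\bm\theta}}(s)$; since $\tilde\gamma=\gamma(1-\lambda)/(1-\gamma\lambda)\in[0,1)$ and $V_{\pi_{\bm\theta}}$ is bounded, the remainder term vanishes as $T\to\infty$. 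Recognizing the geometric series $\sum_{t\ge0}\tilde\gamma^{t}\mathbb{P}^{(\lambda)}_{\pi_{\bm\theta}}(s_t=s|s_0)$ as $\tfrac{1}{1-\tilde\gamma}d^{s_0,\lambda}_{\pi_{\bm\theta}}(s)$ via the definition (\ref{lambda-dis-state-distribution}) then yields $V_{\pi_{\bm\theta}}(s_0) = \tfrac{1}{1-\tilde\gamma}\sum_{s\in\calS}d^{s_0,\lambda}_{\pi_{\bm\theta}}(s)R^{(\lambda)}_{\pi_{\bm \theta}}(s)$.

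Finally I would take the expectation over $s_0\sim\rho_0(\cdot)$ in $J(\pi_{\bm\theta}) = \sum_{s_0\in\calS}\rho_0(s_0)V_{\pi_{\bm\theta}}(s_0)$ (cf. (\ref{Eq:J-theta-app})--(\ref{app-objective-fun-01})), interchange the absolutely convergent sums over $s_0$ and $s$, and use $d^{\lambda}_{\pi_{\bm\theta}}(s)=\E_{s_0\sim\rho_0(\cdot)}[d^{s_0,\lambda}_{\pi_{\bm\theta}}(s)]=\sum_{s_0\in\calS}\rho_0(s_0)d^{s_0,\lambda}_{\pi_{\bm\theta}}(s)$ to collapse the inner sum, producing $J(\pi_{\bm\theta}) = \tfrac{1}{1-\tilde\gamma}\sum_{s\in\calS}d^{\lambda}_{\pi_{\bm\theta}}(s)R^{(\lambda)}_{\pi_{\bm \theta}}(s)=\tfrac{1}{1-\tilde\gamma}\E_{s\sim d^{\lambda}_{\pi_{\bm\theta}}(\cdot)}[R^{(\lambda)}_{\pi_{\bm \theta}}(s)]$, which is the claim. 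The main obstacle is bookkeeping rather than conceptual: one must verify carefully that the nested two-step $\lambda$-transition probabilities genuinely concatenate via (\ref{pro-pi-t-step}) at every unrolling step and that the tail term is controlled, but once the $\lambda$-Bellman equation (\ref{bellman-eq-return}) is in hand the remainder is a routine telescoping computation with no extra ideas.
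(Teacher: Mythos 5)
Your proposal is correct and follows essentially the same route as the paper: establish the $\lambda$-Bellman fixed-point identity (\ref{bellman-eq-return}), unroll it at $s_0$ using the Chapman--Kolmogorov identity (\ref{pro-pi-t-step}) to recognize $\tfrac{1}{1-\tilde\gamma}d^{s_0,\lambda}_{\policy}$, and then average over $s_0\sim\rho_0(\cdot)$ exactly as in (\ref{re-bellman-eq-01})--(\ref{lam-return-objective}). The only difference is that you supply slightly more rigor than the paper does (explicitly verifying the action of $\mathcal{B}^{\lambda}_{\pi_{\bm\theta}}$ and controlling the tail term $\tilde\gamma^{T+1}$ in the unrolling), which is a welcome but inessential refinement.
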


\clearpage

\section{Proof of Theorem \ref{them:general-performance-difference}}

\label{sec:proof-them-01}

We need the following Proposition \ref{objective-td-error-version} to prove Theorem \ref{them:general-performance-difference}, which illustrates an identity for the objective function of policy optimization.

\begin{proposition}
\label{objective-td-error-version}
For any function $\varphi(\cdot):\calS\rightarrow\R$, for any policy $\policy$, for any trajectory satisfies $\tau=\{s_{t}, a_{t}, r_{t+1}\}_{t\ge0}\sim{\pi_{\bm \theta}}$,
let 
\begin{flalign}
\nonumber
\delta_t^{\varphi}&=r(s_{t+1}|s_t,a_t)+\gamma\varphi(s_{t+1})-\varphi(s_{t}),
\\
\nonumber
\delta^{\varphi}_{\policy,t}(s)&=\E_{s_{t}\sim\Pro_{\policy}(\cdot|s),a_{t}\sim{\policy}(\cdot|s_t),s_{t+1}\sim\Pro(\cdot|s_t,a_t)}\left[\delta_t^{\varphi}\right],
\end{flalign}
then, the objective $J(\policy)$ (\ref{lam-return-objective}) can be rewritten as the following version:
\begin{flalign}
\label{lam-return-phi-objective-prop}
J(\policy)=&\E_{s_0\sim\rho_{0}(\cdot)}[\varphi(s_0)]
+
\dfrac{1}{1-\tilde\gamma}\sum_{s\in\calS}d^{\lambda}_{\policy}(s)
\left(
\sum_{t=0}^{\infty}\gamma^t \lambda^t
\delta^{\varphi}_{\policy,t}(s)
\right)
\\
\nonumber
=&\E_{s_0\sim\rho_{0}(\cdot)}[\varphi(s_0)]
+
\dfrac{1}{1-\tilde\gamma}\E_{s\sim d^{\lambda}_{\policy}(\cdot)}
\left[
\sum_{t=0}^{\infty}\gamma^t \lambda^t
\delta^{\varphi}_{\policy,t}(s)
\right]
.
\end{flalign}
\end{proposition}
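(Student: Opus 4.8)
The plan is to pass to the vector/matrix language of the appendix, reduce the claim to two elementary Neumann-series identities, and then quote Lemma~\ref{lem:lam-return-objective}. Since the two right-hand sides of \eqref{lam-return-phi-objective-prop} coincide by the definition of $\E_{s\sim d^{\lambda}_{\policy}(\cdot)}[\cdot]$, it suffices to prove the first equality. Write $\bm{\varphi}\in\R^{|\calS|}$ for the vector with $\bm{\varphi}[s]=\varphi(s)$, and let $\bm{g}^{\varphi}_{\policy}=\calB_{\policy}\bm{\varphi}-\bm{\varphi}=\mathbf{r}_{\policy}+\gamma\bP_{\policy}\bm{\varphi}-\bm{\varphi}$, so that $\bm{g}^{\varphi}_{\policy}[u]$ is exactly the expected one-step TD error $\delta_t^{\varphi}$ conditioned on $s_t=u$. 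In the definition of $\delta^{\varphi}_{\policy,t}(s)$ the state $s_t$ is drawn from the $t$-step transition $\bP^{t}_{\policy}[s,\cdot]$, and conditioned on $s_t$ the inner expectation is $\bm{g}^{\varphi}_{\policy}[s_t]$; hence $\bm{\delta}^{\varphi}_{\policy,t}=\bP^{t}_{\policy}\bm{g}^{\varphi}_{\policy}$, and (since $\gamma\lambda<1$ and $\bP_{\policy}$ is stochastic) the geometric series converges to
\[
\sum_{t=0}^{\infty}(\gamma\lambda)^{t}\bm{\delta}^{\varphi}_{\policy,t}=(\bI-\gamma\lambda\bP_{\policy})^{-1}\bm{g}^{\varphi}_{\policy}.
\]

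Next I would establish the algebraic identity
\[
(\bI-\gamma\lambda\bP_{\policy})^{-1}\bm{g}^{\varphi}_{\policy}=\mathbf{r}^{(\lambda)}_{\policy}-(\bI-\tilde\gamma\mathbf{P}^{(\lambda)}_{\policy})\bm{\varphi}.
\]
Writing $\bm{g}^{\varphi}_{\policy}=\mathbf{r}_{\policy}-(\bI-\gamma\bP_{\policy})\bm{\varphi}$ and multiplying both sides by $\bI-\gamma\lambda\bP_{\policy}$, this reduces to $(\bI-\gamma\lambda\bP_{\policy})\mathbf{r}^{(\lambda)}_{\policy}=\mathbf{r}_{\policy}$ (immediate from $\mathbf{r}^{(\lambda)}_{\policy}=(\bI-\gamma\lambda\bP_{\policy})^{-1}\mathbf{r}_{\policy}$) together with $(\bI-\gamma\lambda\bP_{\policy})(\bI-\tilde\gamma\mathbf{P}^{(\lambda)}_{\policy})=\bI-\gamma\bP_{\policy}$. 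The last equality I would derive from the rewriting $\tilde\gamma\mathbf{P}^{(\lambda)}_{\policy}=\gamma(1-\lambda)\bP_{\policy}(\bI-\gamma\lambda\bP_{\policy})^{-1}$ of the definition of $\mathbf{P}^{(\lambda)}_{\policy}$ (using $\tilde\gamma=\gamma(1-\lambda)/(1-\gamma\lambda)$), the splitting $\bI-\gamma\bP_{\policy}=(\bI-\gamma\lambda\bP_{\policy})-\gamma(1-\lambda)\bP_{\policy}$, and the commutativity of $\bP_{\policy}$ with $(\bI-\gamma\lambda\bP_{\policy})^{-1}$.

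To finish, I would apply $\tfrac{1}{1-\tilde\gamma}\E_{s\sim d^{\lambda}_{\policy}(\cdot)}[\,\cdot\,]$ to the previous display. The term from $\mathbf{r}^{(\lambda)}_{\policy}$ equals $\tfrac{1}{1-\tilde\gamma}\E_{s\sim d^{\lambda}_{\policy}(\cdot)}[R^{(\lambda)}_{\policy}(s)]=J(\policy)$ by Lemma~\ref{lem:lam-return-objective}. The term from $\bm{\varphi}$ is handled by the $\lambda$-flow-conservation identity
\[
\sum_{s}d^{\lambda}_{\policy}(s)\Big(\varphi(s)-\tilde\gamma\sum_{s'}\Pro^{(\lambda)}_{\policy}(s'|s)\varphi(s')\Big)=(1-\tilde\gamma)\,\E_{s_0\sim\rho_0}[\varphi(s_0)],
\]
the $\lambda$-analogue of the Bellman-flow property of the discounted occupancy measure, which follows from the matrix form $\bd^{\lambda}_{\policy}=(1-\tilde\gamma)(\bI-\tilde\gamma\mathbf{P}^{(\lambda)}_{\policy})^{-1}\bm{\rho}_0$; it contributes $-\E_{s_0\sim\rho_0}[\varphi(s_0)]$. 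Combining the two contributions and rearranging gives \eqref{lam-return-phi-objective-prop}.

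The difficulty here is bookkeeping rather than conceptual content: one has to juggle the three rates $\gamma\lambda$, $\gamma(1-\lambda)$ and $\tilde\gamma$ so that the $\varphi$-terms telescope correctly under the twisted transition $\mathbf{P}^{(\lambda)}_{\policy}$, i.e.\ the two displayed facts $(\bI-\gamma\lambda\bP_{\policy})(\bI-\tilde\gamma\mathbf{P}^{(\lambda)}_{\policy})=\bI-\gamma\bP_{\policy}$ and the flow identity are the steps worth verifying in detail. A purely scalar derivation, unrolling the $\lambda$-Bellman recursion exactly as in the proof of Lemma~\ref{lem:lam-return-objective}, is also possible, but the matrix computation keeps the cancellation transparent; as a consistency check, $\lambda=0$ collapses the statement to the classical telescoping identity $V_{\policy}(s_0)=\varphi(s_0)+\E\big[\sum_{t\ge0}\gamma^{t}\delta_t^{\varphi}\big]$.
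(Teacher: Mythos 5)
Your proposal is correct and rests on the same two pillars as the paper's own proof: the $\lambda$-flow-conservation identity for $\bd^{\lambda}_{\policy}$ (the paper's Eq.~(\ref{state-distribution-inner-initial-vec})) and the identity $\sum_{t\ge0}(\gamma\lambda)^{t}\delta^{\varphi}_{\policy,t}(s)=R^{(\lambda)}_{\policy}(s)+\tilde\gamma\sum_{s'}\Pro^{(\lambda)}_{\policy}(s'|s)\varphi(s')-\varphi(s)$ (the paper's Eqs.~(\ref{app-04})--(\ref{app-023})). The only real difference is how the second identity is obtained: the paper unrolls the series definitions of $R^{(\lambda)}_{\policy}$ and $\Pro^{(\lambda)}_{\policy}$ and telescopes by an explicit index shift, whereas you observe $\bm{\delta}^{\varphi}_{\policy,t}=\bP^{t}_{\policy}\bm{g}^{\varphi}_{\policy}$ and factor $(\bI-\gamma\lambda\bP_{\policy})(\bI-\tilde\gamma\mathbf{P}^{(\lambda)}_{\policy})=\bI-\gamma\bP_{\policy}$, which makes the cancellation of the three rates $\gamma\lambda$, $\gamma(1-\lambda)$, $\tilde\gamma$ transparent and is arguably cleaner; both routes are sound, and your $\lambda=0$ sanity check is consistent with the classical telescoping identity.
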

We present the proof of of Proposition \ref{objective-td-error-version} at the end of this section, see Section \ref{proof-pro-01-app}.

We introduce a vector $\bm{\delta}^{\varphi}_{\policy,t}\in\R^{|\calS|}$ and its components are: for any $s\in\calS$
\begin{flalign}
\label{revist-td-ex-error}
\bm{\delta}^{\varphi}_{\policy,t}[s]={{\delta}}^{\varphi}_{\policy,t}(s).
\end{flalign}
Then, we rewrite the objective as the following vector version
\begin{flalign}
\label{lam-return-phi-objective-vec-version}
J(\policy)=\E_{s_0\sim\rho_{0}(\cdot)}[\varphi(s_0)]
+
\dfrac{1}{1-\tilde \gamma}\sum_{t=0}^{\infty}\gamma^t \lambda^t
\langle
\bd_{\pi_{\bm {\theta}}}^{\lambda},\bm{\delta}^{\varphi}_{\policy,t}
\rangle,
\end{flalign}
where $\langle\cdot,\cdot\rangle$ denotes inner production between two vectors.

\subsection{Proof of Theorem \ref{them:general-performance-difference}}

\textbf{Theorem \ref{them:general-performance-difference}} 
(Generalized Policy Performance Difference)
\emph{
For any function $\varphi(\cdot):\calS\rightarrow\R$, for two arbitrary policy $\pi_{\bm\theta}$ and $\pi_{{\bm\theta}^{'}}$,  
for any $p,q\in[1,\infty)$ such that $\frac{1}{p}+\frac{1}{q}=1$,  
The following bound holds:
\begin{flalign} 
\label{bound-diff}
\dfrac{1}{1-\tilde \gamma}\sum_{t=0}^{\infty}\gamma^t\lambda^{t} M^{\varphi,-}_{p,q,t}(\policy,\policyy)
\leq J(\pi_{\bm \theta})-J(\pi_{{\bm \theta}^{'}})
\leq\dfrac{1}{1-\tilde \gamma}\sum_{t=0}^{\infty}\gamma^t \lambda^t  M^{\varphi,+}_{p,q,t}(\policy,\policyy),
\end{flalign}
where the terms $M^{\varphi,-}_{p,q,t}$ and $M^{\varphi,+}_{p,q,t}$ are defined in (\ref{app-term-01})-(\ref{app-term-02}).
}

\begin{proof} (of Theorem \ref{them:general-performance-difference})

We consider two arbitrary policies $\pi_{\bm\theta}$ and $\pi_{{\bm\theta}^{'}}$ with different parameters $\bm{\theta}$ and $\bm{\theta}^{'}$, let
\begin{flalign}
\label{dfifference-two-polic}
D_{t}^{\varphi,(\lambda)}(\policy,\policyy)=:
\langle
\bd_{\pi_{\bm {\theta}}}^{\lambda},\bm{\delta}^{\varphi}_{\policy,t}
\rangle
-
\langle
\bd_{\policyy}^{\lambda},\bm{\delta}^{\varphi}_{\policyy,t}
\rangle.
\end{flalign}
According to (\ref{lam-return-phi-objective-vec-version}), we obtain performance difference as follows,
\begin{flalign} 
\nonumber
J(\pi_{\bm \theta})-J(\pi_{{\bm \theta}^{'}})=&\dfrac{1}{1-\tilde \gamma}\sum_{t=0}^{\infty}\gamma^t \lambda^t
\left(
\langle
\bd_{\pi_{\bm {\theta}}}^{\lambda},\bm{\delta}^{\varphi}_{\policy,t}
\rangle
-
\langle
\bd_{\policyy}^{\lambda},\bm{\delta}^{\varphi}_{\policyy,t}
\rangle
\right)
\\
\label{objective-difference-01-app}
=&\dfrac{1}{1-\tilde\gamma}\sum_{t=0}^{\infty}\gamma^t \lambda^tD_{t}^{\varphi,(\lambda)}(\policy,\policyy),
\end{flalign}
which requires us to consider the boundedness of the difference $D_{t}^{\varphi,(\lambda)}(\policy,\policyy)$ (\ref{dfifference-two-polic}) .

{
\color{blue}
{
\textbf{Step 1: Bound the term $D_{t}^{\varphi,(\lambda)}(\policy,\policyy)$ (\ref{dfifference-two-polic}).}
}
}

We rewrite the first term of (\ref{dfifference-two-polic}) as:
\begin{flalign}
\label{td-error-ex-04}
\langle \bd_{\pi_{\bm {\theta}}}^{\lambda},{\bm{\delta}}^{\varphi}_{\policy,t}\rangle
=
\langle \bd_{\policyy}^{\lambda},{\bm{\delta}}^{\varphi}_{\policy,t}\rangle
+\langle \bd_{\pi_{\bm {\theta}}}^{\lambda}-\bd_{\policyy}^{\lambda},{\bm{\delta}}^{\varphi}_{\policy,t}\rangle,
\end{flalign}
which is bounded by applying H{\"o}lder's inequality to the term $\langle \bd_{\pi_{\bm {\theta}}}^{\lambda}-\bd_{\policyy}^{\lambda},{\bm{\delta}}^{\varphi}_{\policy,t}\rangle$, we rewrite (\ref{td-error-ex-04}) as:
\begin{flalign}
\label{them:ineq-01}
\langle \bd_{\policyy}^{\lambda},{\bm{\delta}}^{\varphi}_{\policy,t}\rangle-\|\bd_{\pi_{\bm {\theta}}}^{\lambda}-\bd_{\policyy}^{\lambda}\|_{p}\|{\bm{\delta}}^{\varphi}_{\policy,t}\|_{q}
\leq
\langle \bd_{\pi_{\bm {\theta}}}^{\lambda},{\bm{\delta}}^{\varphi}_{\policy,t}\rangle
\leq
\langle \bd_{\policyy}^{\lambda},{\bm{\delta}}^{\varphi}_{\policy,t}\rangle
+
\|\bd_{\pi_{\bm {\theta}}}^{\lambda}-\bd_{\policyy}^{\lambda}\|_{p}\|{\bm{\delta}}^{\varphi}_{\policy,t}\|_{q}
,
\end{flalign}
where $p,q\in[1,\infty)$ and $\frac{1}{p}+\frac{1}{q}=1$. Let
\[
\epsilon^{\varphi,(\lambda)}_{p,q,t}(\policy,\policyy)=:\|\bd_{\pi_{\bm {\theta}}}^{\lambda}-\bd_{\policyy}^{\lambda}\|_{p}\|{\bm{\delta}}^{\varphi}_{\policy,t}\|_{q},
\]
then we rewrite Eq.(\ref{them:ineq-01}) as follows, 
\begin{flalign}
\label{them:ineq-01-01}
\langle \bd_{\policyy}^{\lambda},{\bm{\delta}}^{\varphi}_{\policy,t}\rangle-\epsilon^{\varphi,(\lambda)}_{p,q,t}(\policy,\policyy)
\leq
\langle \bd_{\pi_{\bm {\theta}}}^{\lambda},{\bm{\delta}}^{\varphi}_{\policy,t}\rangle
\leq
\langle \bd_{\policyy}^{\lambda},{\bm{\delta}}^{\varphi}_{\policy,t}\rangle
+
\epsilon^{\varphi,(\lambda)}_{p,q,t}(\policy,\policyy).
\end{flalign}
Let
\begin{flalign}
\label{def:l-t}
M_{t}^{\varphi}(\policy,\policyy)=:
\underbrace{
\langle \bd_{\policyy}^{\lambda},{\bm{\delta}}^{\varphi}_{\policy,t}\rangle
}_{\text{Term-I}}
-
\underbrace{
\langle
\bd_{\policyy}^{\lambda},\bm{\delta}^{\varphi}_{\policyy,t}
\rangle
}_{\text{Term-II}}
,
\end{flalign}
combining the (\ref{dfifference-two-polic}) and (\ref{them:ineq-01-01}), we achieve the boundedness of $D_{t}^{\varphi}(\policy,\policyy)$ as follows
\begin{flalign}
\label{them:ineq-01-02}
M_{t}^{\varphi}(\policy,\policyy)-\epsilon^{\varphi,(\lambda)}_{p,q,t}(\policy,\policyy)
\leq
D_{t}^{\varphi}(\policy,\policyy)
\leq
M_{t}^{\varphi}(\policy,\policyy)+\epsilon^{\varphi,(\lambda)}_{p,q,t}(\policy,\policyy).
\end{flalign}

{
\color{blue}
{
\textbf{Step 2: Analyze the term $M_{t}^{\varphi}(\policy,\policyy)$ (\ref{def:l-t}).}
}
}

To analyze (\ref{them:ineq-01-02}) further, we need to consider the first term appears in $M_{t}^{\varphi}(\policy,\policyy)$ (\ref{def:l-t}):
\begin{flalign}
\nonumber
\text{Term-I}~(\ref{def:l-t})=&
\langle \bd_{\policyy}^{\lambda},{\bm{\delta}}^{\varphi}_{\policy,t}\rangle\\
\label{revist-inner}
=&
\sum_{s\in\calS}d_{\policyy}^{\lambda}(s){{\delta}}^{\varphi}_{\policy,t}(s)
=\E_{s\sim d_{\policyy}^{\lambda}(\cdot)}\left[{{\delta}}^{\varphi}_{\policy,t}(s)\right]
\\
\label{revist-inner-01}
\overset{(\ref{revist-td-ex-error})}=&\E_{s\sim d_{\policyy}^{\lambda}(\cdot)}
\left[\E_{s_{t}\sim\Pro_{\policy}(\cdot|s)}[
\delta^{\varphi}_{\policy}(s_t)]\right].
\end{flalign}
We notice the following relationship
\begin{flalign}
\label{importance-sam-re}
\delta^{\varphi}_{\policy,t}(s)=
&\underset{\begin{subarray}{c} s_{t}\sim\Pro_{\policy}(\cdot|s)\\a_{t}\sim{\policy}(\cdot|s_t)\\ s_{t+1}\sim\Pro(\cdot|s_t,a_t) \end{subarray}}\E\left[\delta_t^{\varphi}\right]
=\underset{\begin{subarray}{c}  s_{t}\sim\Pro_{\policyy}(\cdot|s)\\a_{t}\sim{\policyy}(\cdot|s_t)\\ s_{t+1}\sim\Pro(\cdot|s_t,a_t) \end{subarray}}\E\left[\dfrac{\policy(a_t|s_t)}{\policyy(a_t|s_t)}\delta_t^{\varphi}\right],
\end{flalign}
which holds since we use importance sampling: for any distribution $p(\cdot)$ and $q(\cdot)$, for any random variable function $f(\cdot)$,
\[
\E_{x\sim p(x)}[f(x)]=\E_{x\sim q(x)}\left[\dfrac{p(x)}{q(x)}f(x)\right].
\]
According to (\ref{revist-inner}), (\ref{importance-sam-re}), we rewrite the term $\langle \bd_{\policyy}^{\lambda},{\bm{\delta}}^{\varphi}_{\policy,t}\rangle$ in Eq.(\ref{def:l-t}) as follows,
\begin{flalign}
\label{app-ex-td-01}
\text{Term-I}~(\ref{def:l-t})=
\langle \bd_{\policyy}^{\lambda},{\bm{\delta}}^{\varphi}_{\policy,t}\rangle
=
\sum_{s\in\calS}d_{\policyy}^{\lambda}(s)\left(\underset{\begin{subarray}{c} s_t \sim \Pro_{\policyy}(\cdot|s)\\ a_{t}\sim{\policyy}(\cdot|s_t)\\ s_{t+1}\sim\Pro(\cdot|s_t,a_t) \end{subarray}}\E\left[\dfrac{\policy(a_t|s_t)}{\policyy(a_t|s_t)}\delta_t^{\varphi}\right]\right).
\end{flalign}

Now, we consider the second term appears in $M_{t}^{\varphi}(\policy,\policyy)$ (\ref{def:l-t}):
\begin{flalign}
\nonumber
\text{Term-II}~(\ref{def:l-t})&=
\langle
\bd_{\policyy}^{\lambda},\bm{\delta}^{\varphi}_{\policyy,t}
\rangle\\
\label{app-ex-td-02}
&=\sum_{s\in\calS}d_{\policyy}^{\lambda}(s){\delta}^{\varphi}_{\policyy,t}(s)
=\sum_{s\in\calS}d_{\policyy}^{\lambda}(s)\left(\underset{\begin{subarray}{c} s_t \sim \Pro_{\policyy}(\cdot|s)\\ a_{t}\sim{\policyy}(\cdot|s_t)\\ s_{t+1}\sim\Pro(\cdot|s_t,a_t) \end{subarray}}\E\left[\delta_t^{\varphi}\right]\right).
\end{flalign}

Finally, take the results (\ref{app-ex-td-01}) and (\ref{app-ex-td-02}) to (\ref{def:l-t}),
we obtain the difference between $\langle \bd_{\policyy}^{\lambda},{\bm{\delta}}^{\varphi}_{\policy,t}\rangle$ and $\langle\bd_{\policyy}^{\lambda},\bm{\delta}^{\varphi}_{\policyy,t}\rangle$, i.e., we achieve a identity for $M_{t}^{\varphi}(\policy,\policyy)$ (\ref{def:l-t}) as follows,
\begin{flalign}
\nonumber
M_{t}^{\varphi}(\policy,\policyy)\overset{(\ref{def:l-t})}=&
\langle \bd_{\policyy}^{\lambda},{\bm{\delta}}^{\varphi}_{\policy,t}\rangle-
\langle
\bd_{\policyy}^{\lambda},\bm{\delta}^{\varphi}_{\policyy,t}
\rangle
\\
\label{diff-01}
\overset{(\ref{app-ex-td-01},(\ref{app-ex-td-02})}=&
\sum_{s\in\calS}d_{\policyy}^{\lambda}(s)\left(\underset{\begin{subarray}{c} s_t \sim \Pro_{\policyy}(\cdot|s)\\ a_{t}\sim{\policyy}(\cdot|s_t)\\ s_{t+1}\sim\Pro(\cdot|s_t,a_t) \end{subarray}}\E\left[\left(\dfrac{\policy(a_t|s_t)}{\policyy(a_t|s_t)}-1\right)\delta_t^{\varphi}\right]\right).
\end{flalign}
To simplify expression, we introduce a notation as follows,
\begin{flalign}
\Delta_{t}^{\varphi}(\policy,\policyy,s)&=:\underset{\begin{subarray}{c} s_t \sim \Pro_{\policyy}(\cdot|s)\\ a_{t}\sim{\policyy}(\cdot|s_t)\\ s_{t+1}\sim\Pro(\cdot|s_t,a_t) \end{subarray}}\E\left[\left(\dfrac{\policy(a_t|s_t)}{\policyy(a_t|s_t)}-1\right)\delta_t^{\varphi}\right],
\end{flalign}
and we use a vector $\bm{\Delta}_{t}^{\varphi}(\policy,\policyy)\in\R^{|\calS|}$ to store all the values $\{\Delta_{t}^{\varphi}(\policy,\policyy,s)\}_{s\in\calS}$:
\[
\bm{\Delta}_{t}^{\varphi}(\policy,\policyy)[s]=\Delta_{t}^{\varphi}(\policy,\policyy,s).
\]
Then we rewrite $\langle \bd_{\policyy}^{\lambda},{\bm{\delta}}^{\varphi}_{\policy,t}\rangle-\langle\bd_{\policyy}^{\lambda},\bm{\delta}^{\varphi}_{\policyy,t}\rangle$ (\ref{diff-01}) as follows,
\begin{flalign}
\nonumber
M_{t}^{\varphi}(\policy,\policyy)=
\langle \bd_{\policyy}^{\lambda},{\bm{\delta}}^{\varphi}_{\policy,t}\rangle-
\langle
\bd_{\policyy}^{\lambda},\bm{\delta}^{\varphi}_{\policyy,t}
\rangle
\overset{(\ref{diff-01})}=
\sum_{s\in\calS}d_{\policyy}^{\lambda}(s)\Delta_{t}^{\varphi}(\policy,\policyy,s)
=
\langle \bd_{\policyy}^{\lambda},\bm{\Delta}_{t}^{\varphi}(\policy,\policyy)\rangle
.
\end{flalign}

{
\color{blue}
{
\textbf{Step 3: Bound on $J(\policy)-J(\policyy)$.}
}
}

Recall (\ref{them:ineq-01-02}), taking above result in it, we obtain
\begin{flalign}
\label{them:ineq-01-03}
\langle \bd_{\policyy}^{\lambda},\bm{\Delta}_{t}^{\varphi}(\policy,\policyy)\rangle-\epsilon^{\varphi,(\lambda)}_{p,q,t}(\policy,\policyy)
\leq
D_{t}^{\varphi}(\policy,\policyy)
\leq
\langle \bd_{\policyy}^{\lambda},\bm{\Delta}_{t}^{\varphi}(\policy,\policyy)\rangle+\epsilon^{\varphi,(\lambda)}_{p,q,t}(\policy,\policyy).
\end{flalign}

Finally, let
\begin{flalign}
\label{app-term-01}
&M^{\varphi,-}_{p,q,t}(\policy,\policyy)=
\langle \bd_{\policyy}^{\lambda},\bm{\Delta}_{t}^{\varphi}(\policy,\policyy)\rangle-\epsilon^{\varphi,(\lambda)}_{p,q,t}(\policy,\policyy)\\
\nonumber
=&\sum_{s\in\calS}d_{\policyy}^{\lambda}(s)\left(\underset{\begin{subarray}{c} s_t \sim \Pro_{\policyy}(\cdot|s)\\ a_{t}\sim{\policyy}(\cdot|s_t)\\ s_{t+1}\sim\Pro(\cdot|s_t,a_t) \end{subarray}}\E\left[\left(\dfrac{\policy(a_t|s_t)}{\policyy(a_t|s_t)}-1\right)\delta_t^{\varphi}\right]\right)
-
\|\bd_{\pi_{\bm {\theta}}}^{\lambda}-\bd_{\policyy}^{\lambda}\|_{p}\|{\bm{\delta}}^{\varphi}_{\policy,t}\|_{q}
\\
\nonumber
=&
\E_{s\sim{d}_{\policyy}^{\lambda}(\cdot)}
\left[
\underset{\begin{subarray}{c} s_t \sim \Pro_{\policyy}(\cdot|s)\\ a_{t}\sim{\policyy}(\cdot|s_t)\\ s_{t+1}\sim\Pro(\cdot|s_t,a_t) \end{subarray}}\E\left[\left(\dfrac{\policy(a_t|s_t)}{\policyy(a_t|s_t)}-1\right)\delta_t^{\varphi}\right]
\right]
-
\|\bd_{\pi_{\bm {\theta}}}^{\lambda}-\bd_{\policyy}^{\lambda}\|_{p}\|{\bm{\delta}}^{\varphi}_{\policy,t}\|_{q}.
\end{flalign}
and
\begin{flalign}
\label{app-term-02}
&M^{\varphi,+}_{p,q,t}(\policy,\policyy)=
\langle \bd_{\policyy}^{\lambda},\bm{\Delta}_{t}^{\varphi}(\policy,\policyy)\rangle+\epsilon^{\varphi,(\lambda)}_{p,q,t}(\policy,\policyy)\\
\nonumber
=&\sum_{s\in\calS}d_{\policyy}^{\lambda}(s)\left(\underset{\begin{subarray}{c} s_t \sim \Pro_{\policyy}(\cdot|s)\\ a_{t}\sim{\policyy}(\cdot|s_t)\\ s_{t+1}\sim\Pro(\cdot|s_t,a_t) \end{subarray}}\E\left[\left(\dfrac{\policy(a_t|s_t)}{\policyy(a_t|s_t)}-1\right)\delta_t^{\varphi}\right]\right)
+
\|\bd_{\pi_{\bm {\theta}}}^{\lambda}-\bd_{\policyy}^{\lambda}\|_{p}\|{\bm{\delta}}^{\varphi}_{\policy,t}\|_{q}
\\
\nonumber
=&
\E_{s\sim{d}_{\policyy}^{\lambda}(\cdot)}
\left[
\underset{\begin{subarray}{c} s_t \sim \Pro_{\policyy}(\cdot|s)\\ a_{t}\sim{\policyy}(\cdot|s_t)\\ s_{t+1}\sim\Pro(\cdot|s_t,a_t) \end{subarray}}\E\left[\left(\dfrac{\policy(a_t|s_t)}{\policyy(a_t|s_t)}-1\right)\delta_t^{\varphi}\right]
\right]
+
\|\bd_{\pi_{\bm {\theta}}}^{\lambda}-\bd_{\policyy}^{\lambda}\|_{p}\|{\bm{\delta}}^{\varphi}_{\policy,t}\|_{q}.
\end{flalign}

According to (\ref{objective-difference-01-app}) and (\ref{them:ineq-01-03}), we achieve the boundedness of performance difference between two arbitrary policies $\pi_{\bm\theta}$ and $\pi_{{\bm\theta}^{'}}$:
\begin{flalign} 
\label{objective-difference-app-004}
\underbrace{\dfrac{1}{1-\tilde \gamma}
\sum_{t=0}^{\infty}\gamma^t\lambda^{t} M^{\varphi,-}_{p,q,t}(\policy,\policyy)}_{=:L^{\varphi,-}_{p,q,}}
\leq J(\pi_{\bm \theta})-J(\pi_{{\bm \theta}^{'}})
\leq
\underbrace{
\dfrac{1}{1-\tilde \gamma}\sum_{t=0}^{\infty}\gamma^t \lambda^t  M^{\varphi,+}_{p,q,t}(\policy,\policyy)
}_{=:L^{\varphi,+}_{p,q,}}
.
\end{flalign}

\end{proof}

\subsection{Proof of Proposition \ref{objective-td-error-version}}
\label{proof-pro-01-app}

\begin{proof}
{
\color{blue}
{
\textbf{Step 1: Rewrite the objective $J(\policy)$ in Eq.(\ref{lam-return-objective}).}
}
}

We rewrite the discounted distribution $\bd_{\pi_{\bm {\theta}}}^{\lambda}$ (\ref{matrixversion-lambda-dis-state-distribution}) as follows,
\begin{flalign}
\label{vector:state-distribution}
\bm{\rho}_{0}-\dfrac{1}{1-{\tilde{\gamma}}}\bd_{\policy}^{\lambda}+\dfrac{{\tilde{\gamma}}}{1-{\tilde{\gamma}}}\bP^{(\lambda)}_{\policy}\bd_{\policy}^{\lambda}=\bm{0}.
\end{flalign}
Let $\varphi(\cdot)$ be a real number function defined on the state space $\calS$, i.e., $\varphi:\calS\rightarrow\R$.
Then we define a vector function $\bm{\phi}(\cdot)\in\R^{|\calS|}$ to collect all the values  $\{\varphi(s)\}_{s\in\calS}$, and its components are 
\[
\bm{\phi}[s]=\varphi(s),~~s\in\calS.
\]
Now, we take the inner product between the vector $\bm{\phi}$ and (\ref{vector:state-distribution}), we have
\begin{flalign}
\nonumber
0=&\langle \bm{\rho}_{0}-\dfrac{1}{1-{\tilde{\gamma}}}\bd_{\policy}^{\lambda}+\dfrac{{\tilde{\gamma}}}{1-{\tilde{\gamma}}}\bP^{(\lambda)}_{\policy}\bd_{\policy}^{\lambda},\bm{\phi} \rangle\\
\label{state-distribution-inner-initial-vec}
=&
\langle \bm{\rho}_{0},\bm{\phi}\rangle
-\dfrac{1}{1-{\tilde{\gamma}}}\langle \bd_{\policy}^{\lambda},\bm{\phi}\rangle
+
\dfrac{{\tilde{\gamma}}}{1-{\tilde{\gamma}}}\langle\bP^{(\lambda)}_{\policy}\bd_{\policy}^{\lambda},\bm{\phi}\rangle.
\end{flalign}
We express the first term $\langle \bm{\rho}_{0},\bm{\phi}\rangle$ of (\ref{state-distribution-inner-initial-vec}) as follows,
\begin{flalign}
\label{first-term}
\langle \bm{\rho}_{0},\bm{\phi}\rangle=\sum_{s\in\calS}\rho_{0}(s)\varphi(s)=\E_{s\sim\rho_{0}(\cdot)}[\varphi(s)].
\end{flalign}
We express the second term $\langle \bd_{\policy}^{\lambda},\bm{\phi}\rangle$ of (\ref{state-distribution-inner-initial-vec}) as follows,
\begin{flalign}
\nonumber
-\dfrac{1}{1-{\tilde{\gamma}}}\langle \bd_{\policy}^{\lambda},\bm{\phi}\rangle&=-\dfrac{1}{1-{\tilde{\gamma}}}\sum_{s\in\calS} d_{\policy}^{\lambda} (s)\varphi(s)\\
\label{sec-term}
&=-\dfrac{1}{1-{\tilde{\gamma}}}\E_{s\sim d_{\policy}^{\lambda} (\cdot)} [\varphi(s)].
\end{flalign}
We express the third term $\langle {\tilde{\gamma}}\bP^{(\lambda)}_{\policy}\bd_{\policy}^{\lambda},\bm{\phi}\rangle$ of (\ref{state-distribution-inner-initial-vec}) as follows,
\begin{flalign}
\nonumber
\dfrac{{\tilde{\gamma}}}{1-{\tilde{\gamma}}}\langle\bP^{(\lambda)}_{\policy}\bd_{\policy}^{\lambda},\bm{\phi}\rangle
=&\dfrac{{\tilde{\gamma}}}{1-{\tilde{\gamma}}}\sum_{s^{'}\in\calS}\left(\bP^{(\lambda)}_{\policy}\bd_{\policy}^{\lambda}\right)[s^{'}]\varphi(s^{'})\\
\label{third-term}
=&\dfrac{{\tilde{\gamma}}}{1-{\tilde{\gamma}}}
\sum_{s^{'}\in\calS}
\left(
\sum_{s\in\calS} \Pro_{\policy}^{(\lambda)}(s^{'}|s)d_{\policy}^{\lambda}(s)
\right)
\varphi(s^{'}).
\end{flalign}

According to Lemma \ref{lem:lam-return-objective}, put the results (\ref{lam-return-objective}) and (\ref{state-distribution-inner-initial-vec}) together, we have
\begin{flalign}
\nonumber
J(\policy)\overset{(\ref{lam-return-objective}),(\ref{state-distribution-inner-initial-vec})}=&\dfrac{1}{1-{\tilde{\gamma}}}\sum_{s\in\calS}d^{\lambda}_{\policy}(s)R^{(\lambda)}_{\pi_{\bm \theta}}(s)+\left\langle \bm{\rho}_{0}-\dfrac{1}{1-{\tilde{\gamma}}}\bd_{\policy}^{\lambda}+\dfrac{{\tilde{\gamma}}}{1-{\tilde{\gamma}}}\bP^{(\lambda)}_{\policy}\bd_{\policy}^{\lambda},\bm{\phi}\right \rangle\\
\label{lam-return-objective-01}
=&\E_{s_0\sim\rho_{0}(\cdot)}[\varphi(s_0)]
+
\dfrac{1}{1-{\tilde{\gamma}}}\sum_{s\in\calS}d^{\lambda}_{\policy}(s)
\left(
R^{(\lambda)}_{\pi_{\bm \theta}}(s)+{\tilde{\gamma}}
\sum_{s^{'}\in\calS} \Pro_{\policy}^{(\lambda)}(s^{'}|s)\varphi(s^{'})
-\varphi(s)
\right),
\end{flalign}
where the last equation holds since we unfold (\ref{state-distribution-inner-initial-vec}) according to (\ref{first-term})-(\ref{third-term}).

{
\color{blue}
{
\textbf{Step 2: Rewrite the term $\left(
R^{(\lambda)}_{\pi_{\bm \theta}}(s)+{\tilde{\gamma}}
\sum_{s^{'}\in\calS} \Pro_{\policy}^{(\lambda)}(s^{'}|s)\varphi(s^{'})
-\varphi(s)
\right)$ in Eq.(\ref{lam-return-objective-01}).}
}
}

Then, we unfold the second term of (\ref{lam-return-objective-01}) as follows,
\begin{flalign}
\label{app-04}
&R^{(\lambda)}_{\pi_{\bm \theta}}(s)+{\tilde{\gamma}}
\sum_{s^{'}\in\calS} \Pro_{\policy}^{(\lambda)}(s^{'}|s)\varphi(s^{'})
-\varphi(s)
\\
\nonumber
\overset{(\ref{lam-pro-value-02}),(\ref{lam-pro-value-03})}=&\sum_{{t}=0}^{\infty}({{\gamma}}\lambda\bP_{\policy})^{{t}}\mathbf{r}_{\pi_{\bm \theta}}[s]
+{\tilde{\gamma}}
(1-\gamma\lambda)\sum_{s^{'}\in\calS} \sum_{{t}=0}^{\infty}({{\gamma}}\lambda)^{{t}}\left(\bP^{{t}+1}_{\policy}[s,s^{'}]\right)\varphi(s^{'})
-\varphi(s)\\
\overset{(\ref{def:matrix-p-lam-return})}=&
\sum_{{t}=0}^{\infty}({{\gamma}}\lambda\bP_{\policy})^{{t}}\mathbf{r}_{\pi_{\bm \theta}}[s]
+{{\gamma}}
(1-\lambda)\sum_{s^{'}\in\calS} \sum_{{t}=0}^{\infty}({{\gamma}}\lambda)^{{t}}\Pro_{\policy}(s_{t+1}=s^{'}|s)\varphi(s^{'})
-\varphi(s).
\end{flalign}

Recall the terms $ \mathbf{P}^{(\lambda)}_{\pi_{\bm \theta}},~\mathbf{r}^{(\lambda)}_{\pi_{\bm \theta}}[s]$ defined in (\ref{def:matrix-p-lam-return})-(\ref{lam-pro-value-03}),
\begin{flalign}
\label{app-004}
R^{(\lambda)}_{\pi_{\bm \theta}}(s)+\gamma(1-\lambda)\sum_{s^{'}\in\calS} \Pro_{\policy}^{(\lambda)}(s^{'}|s)\varphi(s^{'})-\varphi(s)
\end{flalign}

We consider the first term $R^{(\lambda)}_{\pi_{\bm \theta}}(s)$ of (\ref{app-04}) as follows,
\begin{flalign}
\label{app-005}
R^{(\lambda)}_{\pi_{\bm \theta}}(s)\overset{(\ref{def:matrix-p-lam-return})-(\ref{lam-pro-value-03})}=
\mathbf{r}^{(\lambda)}_{\pi_{\bm \theta}}[s]=\sum_{{t}=0}^{\infty}(\gamma\lambda)^{t}\bP_{\policy}^{{t}}\mathbf{r}_{\pi_{\bm \theta}}[s]= \sum_{{t}=0}^{\infty}\sum_{s_t\in\calS}(\gamma\lambda)^{t}\Pro_{\policy}(s_{t}|s)R_{\policy}(s_t).
\end{flalign}

We consider the second term $\tilde\gamma\sum_{s\in\calS} \Pro_{\policy}^{(\lambda)}(s^{'}|s)\varphi(s)-\varphi(s)$ of (\ref{app-04}) as follows,
\begin{flalign}
\nonumber
&\tilde\gamma\sum_{s^{'}\in\calS} \Pro_{\policy}^{(\lambda)}(s^{'}|s)\varphi(s^{'})-\varphi(s)\\
\overset{(\ref{lam-pro-value-02})}=&\tilde\gamma
(1-\gamma\lambda)\sum_{s^{'}\in\calS} \sum_{{t}=0}^{\infty}(\gamma\lambda)^{{t}}\Pro_{\policy}(s_{t+1}=s^{'}|s)\varphi(s^{'})
-\varphi(s)\\
\overset{(\ref{def:matrix-p-lam-return})}
=&\gamma
(1-\lambda)\sum_{s^{'}\in\calS} \sum_{{t}=0}^{\infty}(\gamma\lambda)^{{t}}\Pro_{\policy}(s_{t+1}=s^{'}|s)\varphi(s^{'})
-\varphi(s)\\
\nonumber
=&\gamma\sum_{s^{'}\in\calS} \sum_{{t}=0}^{\infty}(\gamma\lambda)^{{t}}\Pro_{\policy}(s_{t+1}=s^{'}|s)\varphi(s^{'})
-\sum_{s^{'}\in\calS} 
\underbrace{\left(\sum_{{t}=0}^{\infty}(\gamma\lambda)^{{t}+1}\Pro_{\policy}(s_{t+1}=s^{'}|s)\varphi(s^{'})\right)}_{=\sum_{{t}=1}^{\infty}(\gamma\lambda)^{{t}}\Pro_{\policy}(s_{t}=s^{'}|s)\varphi(s^{'})}
-\varphi(s)\\
\label{app-001}
=&\gamma\sum_{s^{'}\in\calS} \sum_{{t}=0}^{\infty}(\gamma\lambda)^{{t}}\Pro_{\policy}(s_{t+1}=s^{'}|s)\varphi(s^{'})
-
\underbrace
{
\left(
\sum_{s^{'}\in\calS} \sum_{{t}=1}^{\infty}(\gamma\lambda)^{{t}}\Pro_{\policy}(s_{t}=s^{'}|s)\varphi(s^{'})+\varphi(s)
\right)
}_{=\sum_{s^{'}\in\calS} \sum_{{t}=0}^{\infty}(\gamma\lambda)^{{t}}\Pro_{\policy}(s_{t}=s^{'}|s)\varphi(s^{'})}
\\
\label{app-003}
=&\gamma\sum_{s^{'}\in\calS} \sum_{{t}=0}^{\infty}(\gamma\lambda)^{{t}}\Pro_{\policy}(s_{t+1}=s^{'}|s)\varphi(s^{'})
-\sum_{s_t\in\calS} \sum_{{t}=0}^{\infty}(\gamma\lambda)^{{t}}\Pro_{\policy}(s_{t}|s)\varphi(s),
\end{flalign}
where the equation from Eq.(\ref{app-001}) to Eq.(\ref{app-003}) holds since: according to (\ref{special-inititial-pro}), we use the following identity 
\begin{flalign}
\nonumber
\sum_{s^{'}\in\calS} \Pro_{\policy}(s_{0}=s^{'}|s)\varphi(s^{'})=\varphi(s).
\end{flalign}

Furthermore,  take the result (\ref{app-005}) and (\ref{app-003}) to (\ref{app-004}), we have
\begin{flalign}
\nonumber
&R^{(\lambda)}_{\pi_{\bm \theta}}(s)+\tilde\gamma
\sum_{s^{'}\in\calS} \Pro_{\policy}^{(\lambda)}(s^{'}|s)\varphi(s^{'})-\varphi(s)\\
\label{app-020}
=& \sum_{{t}=0}^{\infty}(\gamma\lambda)^{t}
\left(
\sum_{s_t\in\calS}\Pro_{\policy}(s_{t}|s)
R_{\policy}(s_t)+\gamma\sum_{s^{'}\in\calS}
\underbrace{\Pro_{\policy}(s_{t+1}=s^{'}|s)\varphi(s^{'})}_{\overset{(\ref{pro-pi-t-step-app})}=
\sum_{s_t\in\calS}
\Pro_{\policy}(s_{t+1}=s^{'}|s_t)\Pro_{\policy}(s_t|s)\varphi(s^{'})}
-\sum_{s_{t}\in\calS}\Pro_{\policy}(s_{t}|s)\varphi(s_t)
\right)
\\
\label{app-021}
=& \sum_{{t}=0}^{\infty}(\gamma\lambda)^{t}
\left(
\sum_{s_t\in\calS}\Pro_{\policy}(s_{t}|s)R_{\policy}(s_t)+\gamma\sum_{s_t\in\calS}\Pro_{\policy}(s_{t}|s)\sum_{s_{t+1}\in\calS}\Pro_{\policy}(s_{t+1}|s_{t})\varphi(s_{t+1})
-\sum_{s_t\in\calS}\Pro_{\policy}(s_{t}|s)\varphi(s_t)
\right)
\\
\nonumber
= &\sum_{{t}=0}^{\infty}(\gamma\lambda)^{t}\sum_{s_t\in\calS}
\Pro_{\policy}(s_{t}|s)
\left(
\underbrace{
\sum_{a_t\in\mathcal{A}}{\policy}(a_t|s_t)\sum_{s_{t+1}\in\calS}\Pro(s_{t+1}|s_t,a_t)r(s_{t+1}|s_t,a_t)
}_{=R_{\policy}(s_t)}
\right.
\\
\nonumber
&\left. ~~~~~~~~~~~~~~~~~~~~~~~~~~~~~~~~~~~~~~~~~~~~~~~~~~~~~~~~+\gamma\underbrace{\sum_{a_t\in\mathcal{A}}{\policy}(a_t|s_t)\sum_{s_{t+1}\in\calS}\Pro(s_{t+1}|s_t,a_t)}_{=\Pro_{\policy}(s_{t+1}|s_{t})}\varphi(s_{t+1})
-\varphi(s_{t})
\right)\\
\label{app-022}
=& \sum_{{t}=0}^{\infty}(\gamma\lambda)^{t}\sum_{s_t\in\calS}\Pro_{\policy}(s_{t}|s)\sum_{a_t\in\mathcal{A}}{\policy}(a_t|s_t)\sum_{s_{t+1}\in\calS}\Pro(s_{t+1}|s_t,a_t)
\left(r(s_{t+1}|s_t,a_t)+\gamma\varphi(s_{t+1})-\varphi(s_{t})\right)\\
\label{app-023}
=& \sum_{{t}=0}^{\infty}(\gamma\lambda)^{t}\E_{s_{t}\sim\Pro_{\policy}(\cdot|s),a_{t}\sim{\policy}(\cdot|s_t),s_{t+1}\sim\Pro(\cdot|s_t,a_t)}\left[r(s_{t+1}|s_t,a_t)+\gamma\varphi(s_{t+1})-\varphi(s_{t})\right],
\end{flalign}
the equation from Eq.(\ref{app-003}) to Eq.(\ref{app-020}) holds since:
\[
\Pro_{\policy}(s_{t+1}|s)\overset{(\ref{pro-pi-t-step-app})}=
\sum_{s_t\in\calS}
\Pro_{\policy}(s_{t+1}|s_t)\Pro_{\policy}(s_t|s);
\]
the equation from Eq.(\ref{app-020}) to Eq.(\ref{app-021}) holds since we use the Markov property of the definition of MDP: for each time $t\in\N$,
\[\Pro_{\policy}(s_{t+1}=s^{'}|s_t=s)=\Pro_{\policy}(s^{'}|s);\]
the equation (\ref{app-022}) the following identity:
\[
\sum_{a_t\in\mathcal{A}}{\policy}(a_t|s_t)=1,~~~~\sum_{s_{t+1}\in\calS}\Pro(s_{t+1}|s_t,a_t)=1,
\]
then
\[
\varphi(s_{t})=\sum_{a_t\in\mathcal{A}}{\policy}(a_t|s_t)\sum_{s_{t+1}\in\calS}\Pro(s_{t+1}|s_t,a_t)\varphi(s_{t}).
\]

{
\color{blue}
{
\textbf{Step 3: Put all the result together.}
}
}

Finally, let 
\begin{flalign}
\nonumber
\delta_t^{\varphi}&=r(s_{t+1}|s_t,a_t)+\gamma\varphi(s_{t+1})-\varphi(s_{t}),
\\
\nonumber
\delta^{\varphi}_{\policy,t}(s)&=\E_{s_{t}\sim\Pro_{\policy}(\cdot|s),a_{t}\sim{\policy}(\cdot|s_t),s_{t+1}\sim\Pro(\cdot|s_t,a_t)}\left[\delta_t^{\varphi}\right],
\end{flalign}
combining the results (\ref{lam-return-objective-01}) and (\ref{app-023}), we have
\begin{flalign}
\label{lam-return-phi-objective-prop}
J(\policy)=&\E_{s_0\sim\rho_{0}(\cdot)}[\varphi(s_0)]
+
\dfrac{1}{1-\tilde\gamma}\sum_{s\in\calS}d^{\lambda}_{\policy}(s)
\left(
\sum_{t=0}^{\infty}\gamma^t \lambda^t
\delta^{\varphi}_{\policy,t}(s)
\right)
\\
\nonumber
=&\E_{s_0\sim\rho_{0}(\cdot)}[\varphi(s_0)]
+
\dfrac{1}{1-\tilde\gamma}\E_{s\sim d^{\lambda}_{\policy}(\cdot)}
\left[
\sum_{t=0}^{\infty}\gamma^t \lambda^t
\delta^{\varphi}_{\policy,t}(s)
\right]
.
\end{flalign}
This concludes the proof of Proposition \ref{objective-td-error-version}.
\end{proof}

\clearpage
\section{Lemma \ref{lem:difference-distri}}
\label{sec:difference-distri}

\begin{lemma}
\label{lem:difference-distri}
Let $\|\bm{\Pi}_{\policyy}-\bm{\Pi}_{\policy}\|_{1,1}$ denote as the $L_{1,1}$-norm for the difference between two policy space $\{\policy(a|s)\}_{(s,a)\in\calS\times\calA}$, $\{\policyy(a|s)\}_{(s,a)\in\calS\times\calA}$, i.e.,
\begin{flalign}
\label{difference-01}
\|\bm{\Pi}_{\policyy}-\bm{\Pi}_{\policy}\|_{1,1}=:
\sum_{s\in\mathcal{S}}
\sum_{a\in\calA}
\left|
{{\policyy}}(a|s)-{{\policy}}(a|s)
\right|.
\end{flalign}
The divergence between discounted future state visitation distributions, $\|\bd_{\policyy}^{\lambda}-\bd_{\pi_{\bm {\theta}}}^{\lambda}\|_1$, is bounded as follows,
\[
\|\bd_{\policyy}^{\lambda}-\bd_{\pi_{\bm {\theta}}}^{\lambda}\|_1
\leq
\dfrac{(1-\gamma\lambda)^2}{(1-\gamma)\left(1-\gamma\lambda\|\bm{\Pi}_{\policyy}-\bm{\Pi}_{\policy}\|_{1,1}\right)}\E_{s\sim d_{\pi_{\bm {\theta}}}^{\lambda}(\cdot)}
\Big[2D_{\emph{TV}}(\policyy,\policy)[s]\Big]
\]
and 
\[
\|\bd_{\policyy}^{\lambda}-\bd_{\pi_{\bm {\theta}}}^{\lambda}\|_1
\leq
\dfrac{(1-\gamma\lambda)^2}{(1-\gamma)\left(1-\gamma\lambda\|\bm{\Pi}_{\policyy}-\bm{\Pi}_{\policy}\|_{1,1}\right)}\E_{s\sim d_{\policyy}^{\lambda}(\cdot)}
\Big[2D_{\emph{TV}}(\policyy,\policy)[s]\Big],
\]
where
\[
2D_{\emph{TV}}(\policyy,\policy)[s]=:\sum_{a\in\calA}\left|{{\policyy}}(a|s)-{{\policy}}(a|s)\right|.
\]
Furthermore, we achieve the boundedness of $\|\bd_{\policyy}^{\lambda}-\bd_{\pi_{\bm {\theta}}}^{\lambda}\|_1$ as follows,
\begin{flalign}
\nonumber
\|\bd_{\policyy}^{\lambda}-\bd_{\pi_{\bm {\theta}}}^{\lambda}\|_1
\leq
\dfrac{1}{1-\tilde\gamma}\cdot
\dfrac{1-\gamma\lambda}{\left|1-2\gamma\lambda|\calS||\calA|\right|}\E_{s\sim d_{\pi_{\bm {\theta}}}^{\lambda}(\cdot)}
\Big[2D_{\emph{TV}}(\policyy,\policy)[s]\Big],
\end{flalign}
\begin{flalign}
\nonumber
\|\bd_{\policyy}^{\lambda}-\bd_{\pi_{\bm {\theta}}}^{\lambda}\|_1
\leq
\dfrac{1}{1-\tilde\gamma}\cdot
\dfrac{1-\gamma\lambda}{\left|1-2\gamma\lambda|\calS||\calA|\right|}\E_{s\sim d_{\policyy}^{\lambda}(\cdot)}
\Big[2D_{\emph{TV}}(\policyy,\policy)[s]\Big].
\end{flalign}
\end{lemma}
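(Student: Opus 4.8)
The quantity to control is $\|\bd_{\policyy}^{\lambda}-\bd_{\policy}^{\lambda}\|_{1}$, and by the matrix form recorded earlier $\bd_{\pi}^{\lambda}=(1-\tilde\gamma)(\bI-\tilde\gamma\bP^{(\lambda)}_{\pi})^{-1}\bm{\rho}_{0}$. The plan is a resolvent–perturbation argument, in the spirit of the state‑distribution bounds of \cite{AchiamHTA17} but carried out for the $\lambda$-transition matrix $\bP^{(\lambda)}_{\pi}$. First I would set $\bG_{\pi}=(\bI-\tilde\gamma\bP^{(\lambda)}_{\pi})^{-1}$ and note $\bG_{\policyy}^{-1}-\bG_{\policy}^{-1}=\tilde\gamma(\bP^{(\lambda)}_{\policy}-\bP^{(\lambda)}_{\policyy})$, so the second resolvent identity gives $\bd_{\policyy}^{\lambda}-\bd_{\policy}^{\lambda}=\tilde\gamma\,\bG_{\policyy}(\bP^{(\lambda)}_{\policyy}-\bP^{(\lambda)}_{\policy})\bd_{\policy}^{\lambda}$, and, exchanging $\policy$ and $\policyy$, $\bd_{\policy}^{\lambda}-\bd_{\policyy}^{\lambda}=\tilde\gamma\,\bG_{\policy}(\bP^{(\lambda)}_{\policy}-\bP^{(\lambda)}_{\policyy})\bd_{\policyy}^{\lambda}$. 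These two identities are precisely why the lemma has a version with $\E_{s\sim d_{\policy}^{\lambda}}$ and a version with $\E_{s\sim d_{\policyy}^{\lambda}}$; I would prove the first and obtain the second verbatim.

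Next I would unwind the $\lambda$-transition perturbation. From the defining recursion $\bP^{(\lambda)}_{\pi}(\bI-\gamma\lambda\bP_{\pi})=(1-\gamma\lambda)\bP_{\pi}$ one gets the factorization $\bP^{(\lambda)}_{\policyy}-\bP^{(\lambda)}_{\policy}=(1-\gamma\lambda)(\bI-\gamma\lambda\bP_{\policyy})^{-1}(\bP_{\policyy}-\bP_{\policy})(\bI-\gamma\lambda\bP_{\policy})^{-1}$, which isolates the elementary single‑step perturbation $\bP_{\policyy}-\bP_{\policy}$. It is also worth recording the auxiliary identities $(\bI-\tilde\gamma\bP^{(\lambda)}_{\pi})^{-1}=(\bI-\gamma\lambda\bP_{\pi})(\bI-\gamma\bP_{\pi})^{-1}$ and $(\bI-\gamma\lambda\bP_{\pi})^{-1}\bd_{\pi}^{\lambda}=\tfrac{1}{1-\gamma\lambda}\bd_{\pi}^{\rho_{0}}$ — both consequences of $\gamma\lambda+\tilde\gamma(1-\gamma\lambda)=\gamma$ and the fixed‑point equations for $\bd_{\pi}^{\lambda}$ and $\bd_{\pi}^{\rho_{0}}$ — since substituting them collapses the expression to the classical CPO object scaled by $\tilde\gamma$; one can finish that way and even land on a slightly sharper constant, but to reproduce the stated form (with the denominator $1-\gamma\lambda\|\bm{\Pi}_{\policyy}-\bm{\Pi}_{\policy}\|_{1,1}$) I would keep the $\lambda$-objects and instead Neumann‑expand $(\bI-\gamma\lambda\bP_{\policyy})^{-1}$ about $(\bI-\gamma\lambda\bP_{\policy})^{-1}$, whose geometric ratio is $\gamma\lambda\|\bm{\Pi}_{\policyy}-\bm{\Pi}_{\policy}\|_{1,1}$; that is where that denominator comes from.

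Finally I would apply three elementary estimates. (i) Every transition matrix $\bP_{\pi}$ and $\bP^{(\lambda)}_{\pi}$ is stochastic, so the relevant resolvents have $\ell_{1}\!\to\!\ell_{1}$ operator norm at most $\tfrac{1}{1-\gamma}$, $\tfrac{1}{1-\gamma\lambda}$, or $\tfrac{1}{1-\tilde\gamma}=\tfrac{1-\gamma\lambda}{1-\gamma}$ respectively. (ii) For any probability vector $\bm{\mu}$, $\|(\bP_{\policyy}-\bP_{\policy})\bm{\mu}\|_{1}\le\sum_{s}\bm{\mu}[s]\sum_{s'}\bigl|\Pro_{\policyy}(s'|s)-\Pro_{\policy}(s'|s)\bigr|\le\sum_{s}\bm{\mu}[s]\sum_{a}\bigl|\policyy(a|s)-\policy(a|s)\bigr|=\E_{s\sim\bm{\mu}}\!\bigl[2D_{\text{TV}}(\policyy,\policy)[s]\bigr]$, because $\Pro_{\policyy}(\cdot|s)-\Pro_{\policy}(\cdot|s)$ is the pushforward of $\policyy(\cdot|s)-\policy(\cdot|s)$ through $\Pro(\cdot|s,\cdot)$ and $\ell_{1}$ does not increase under a stochastic kernel. (iii) $\tfrac{\tilde\gamma}{1-\gamma}=\tfrac{\gamma(1-\lambda)}{(1-\gamma)(1-\gamma\lambda)}$. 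Multiplying these constants through the chain from the first two paragraphs yields the two $\|\bd_{\policyy}^{\lambda}-\bd_{\policy}^{\lambda}\|_{1}$ bounds, and the last pair of displayed inequalities with $\bigl|1-2\gamma\lambda\,|\calS|\,|\calA|\bigr|$ in the denominator follow immediately from the crude bound $\|\bm{\Pi}_{\policyy}-\bm{\Pi}_{\policy}\|_{1,1}\le 2|\calS||\calA|$. The main obstacle I anticipate is organizational rather than conceptual: keeping the left/right (row‑ versus column‑stochastic) conventions consistent so that each resolvent identity, operator‑norm bound, and the pushforward estimate is applied on the correct side, and carefully tracking the factors of $1-\gamma\lambda$ and $\tilde\gamma$ through the several resolvent manipulations so that the constants come out exactly as stated.
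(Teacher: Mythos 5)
Your proposal is essentially correct, and its opening move --- the resolvent identity $\bG_{\policyy}-\bG_{\policy}=\tilde\gamma\,\bG_{\policyy}\bD\bG_{\policy}$ applied to $\bd_{\pi}^{\lambda}=(1-\tilde\gamma)\bG_{\pi}\bm{\rho}_{0}$, followed by $\|\bG_{\policyy}\|_{1}\le\frac{1-\gamma\lambda}{1-\gamma}$ --- is exactly the paper's Step~1. Where you genuinely diverge is in controlling $\|\bD\,\bd_{\policy}^{\lambda}\|_{1}$. The paper expands $\bP^{(\lambda)}_{\pi}=(1-\gamma\lambda)\sum_{t}(\gamma\lambda)^{t}\bP_{\pi}^{t+1}$ and bounds each multi-step difference $\sum_{s'}\left|\Pro_{\policyy}(s_{t+1}=s'|s)-\Pro_{\policy}(s_{t+1}=s'|s)\right|$ by $\|\bm{\Pi}_{\policyy}-\bm{\Pi}_{\policy}\|_{1,1}^{t}\cdot 2D_{\text{TV}}(\policyy,\policy)[s]$; the resulting geometric series $\sum_{t}\left(\gamma\lambda\|\bm{\Pi}_{\policyy}-\bm{\Pi}_{\policy}\|_{1,1}\right)^{t}$ is precisely where the denominator $1-\gamma\lambda\|\bm{\Pi}_{\policyy}-\bm{\Pi}_{\policy}\|_{1,1}$ comes from. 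You instead factorize $\bP^{(\lambda)}_{\policyy}-\bP^{(\lambda)}_{\policy}=(1-\gamma\lambda)(\bI-\gamma\lambda\bP_{\policyy})^{-1}(\bP_{\policyy}-\bP_{\policy})(\bI-\gamma\lambda\bP_{\policy})^{-1}$ and use $(\bI-\gamma\lambda\bP_{\policy})^{-1}\bd_{\policy}^{\lambda}=\frac{1}{1-\gamma\lambda}\bd_{\policy}^{\rho_{0}}$; both identities check out (the second follows from $\gamma\lambda+\tilde\gamma(1-\gamma\lambda)=\gamma$ exactly as you say), and this route is cleaner --- in particular it avoids the paper's step of bounding a difference of products of transition kernels by a product of differences, which is the weakest link in the paper's own argument. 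The price is that your clean version naturally lands on an expectation over $d^{\rho_{0}}_{\policy}$ and a constant without the $1-\gamma\lambda\|\bm{\Pi}_{\policyy}-\bm{\Pi}_{\policy}\|_{1,1}$ factor; your plan to recover the stated form by Neumann-expanding $(\bI-\gamma\lambda\bP_{\policyy})^{-1}$ about $(\bI-\gamma\lambda\bP_{\policy})^{-1}$ is plausible, but the geometric ratio there is $\frac{\gamma\lambda}{1-\gamma\lambda}\|\bP_{\policyy}-\bP_{\policy}\|_{1}$ rather than $\gamma\lambda\|\bm{\Pi}_{\policyy}-\bm{\Pi}_{\policy}\|_{1,1}$, so the constants will not come out literally as stated (they come out at least as tight). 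Finally, both you and the paper obtain the last two displayed bounds by substituting $\|\bm{\Pi}_{\policyy}-\bm{\Pi}_{\policy}\|_{1,1}\le 2|\calS||\calA|$ into the denominator, which preserves the direction of the inequality only when $1-2\gamma\lambda|\calS||\calA|>0$; neither argument addresses this, so it is not a gap specific to your proposal.
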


\begin{proof}

Recall Eq.(\ref{matrixversion-lambda-dis-state-distribution}), let
\begin{flalign}
\label{app-g-01}
\bG_{\policy}=\left(\bI-\tilde{\gamma}\bP^{(\lambda)}_{\policy}\right)^{-1},~~\bG_{\policyy}=\left(\bI-\tilde{\gamma}\bP^{(\lambda)}_{\policyy}\right)^{-1},~~\bD=\bP^{(\lambda)}_{\policyy}-\bP^{(\lambda)}_{\policy}.
\end{flalign}
Then, the following holds
\begin{flalign}
\label{app-g-02}
\bG_{\policy}^{-1}-\bG_{\policyy}^{-1}=\left(\bI-\tilde{\gamma}\bP^{(\lambda)}_{\policy}\right)-\left(\bI-\tilde{\gamma}\bP^{(\lambda)}_{\policyy}\right)=\tilde\gamma\bD.
\end{flalign}
Furthermore, by left-multiplying by $\bG_{\policy}$ and right-multiplying by $\bG_{\policyy}$, we achieve
\begin{flalign}
\label{app-g-03}
\bG_{\policyy}-\bG_{\policy}=\tilde\gamma\bG_{\policyy}\bD\bG_{\policy}.
\end{flalign}
Grouping all the results from (\ref{app-g-01})-(\ref{app-g-03}), recall (\ref{matrixversion-lambda-dis-state-distribution}),
\begin{flalign}
\label{matrixversion-lambda-dis-state-distribution-001}
\bd_{\pi_{\bm {\theta}}}^{\lambda}=(1-\tilde \gamma)\sum_{t=0}^{\infty}\left(\gamma\bP^{(\lambda)}_{\policy}\right)^{t}\bm{\rho}_{0}=(1-\tilde \gamma)\left(\bI-\tilde{\gamma}\bP^{(\lambda)}_{\policy}\right)^{-1}\bm{\rho}_{0}=(1-\tilde \gamma)\bG_{\policy}\bm{\rho}_{0},
\end{flalign}
then we have
\begin{flalign}
\nonumber
\bd_{\policyy}^{\lambda}-\bd_{\pi_{\bm {\theta}}}^{\lambda}
=&(1-\tilde \gamma)\left(\bG_{\policyy}-\bG_{\policy}\right)\bm{\rho}_{0}\\
\nonumber
\overset{(\ref{app-g-03})}=&
(1-\tilde \gamma)\tilde\gamma\bG_{\policyy}\bD\bG_{\policy}\bm{\rho}_{0}
\\
\label{app-error-gap-01}
\overset{(\ref{matrixversion-lambda-dis-state-distribution-001})}=&\tilde\gamma\bG_{\policyy}\bD\bd_{\pi_{\bm {\theta}}}^{\lambda}.
\end{flalign}
Applying (\ref{app-error-gap-01}), we have
\begin{flalign}
\label{app-distri-difference-01}
\|\bd_{\policyy}^{\lambda}-\bd_{\pi_{\bm {\theta}}}^{\lambda}\|_1\overset{(\ref{app-error-gap-01})}\leq
\tilde\gamma\|\bG_{\policyy}\|_1\|\bD\bd_{\pi_{\bm {\theta}}}^{\lambda}\|_1.
\end{flalign}
Firstly, we bound the term $\|\bG_{\policyy}\|_1$ as follows,
\begin{flalign}
\label{app-distri-difference-02}
\|\bG_{\policyy}\|_1=\left\|\left(\bI-\tilde{\gamma}\bP^{(\lambda)}_{\policyy}\right)^{-1}\right\|_1
\leq\sum_{t=0}^{\infty}\tilde{\gamma}^{t}\left\|\bP^{(\lambda)}_{\policyy}\right\|_1=\dfrac{1}{1-\tilde\gamma}=\dfrac{1-\gamma\lambda}{1-\gamma}.
\end{flalign}
Now, we analyze the second term as follows,
\begin{flalign}
\nonumber
&\|\bD\bd_{\pi_{\bm {\theta}}}^{\lambda}\|_1\\
\nonumber
=&\sum_{s^{'}\in\calS}\left|\sum_{s\in\calS}\bD(s^{'}|s)d_{\pi_{\bm {\theta}}}^{\lambda}(s)\right|\\
\nonumber
\overset{(\ref{lam-pro-value-02})}=&\sum_{s^{'}\in\calS}\left|\sum_{s\in\calS}\left(
 \Pro_{\policyy}^{(\lambda)}(s^{'}|s)-\Pro_{\policy}^{(\lambda)}(s^{'}|s)\right)\right|d_{\pi_{\bm {\theta}}}^{\lambda}(s)\\
\nonumber
 \overset{(\ref{lam-pro-value-02})}=&\sum_{s^{'}\in\calS}\left|
 (1-\gamma\lambda)\sum_{{t}=0}^{\infty}(\gamma\lambda)^{{t}}\sum_{s\in\calS}\left(\Pro_{\policyy}(s_{t+1}=s^{'}|s)-\Pro_{\policy}(s_{t+1}=s^{'}|s)\right)\right|d_{\pi_{\bm {\theta}}}^{\lambda}(s)\\
 \label{error-gap-01}
 \leq&\sum_{s\in\calS}
 \left(
 (1-\gamma\lambda)\sum_{{t}=0}^{\infty}(\gamma\lambda)^{{t}} \sum_{s^{'}\in\calS}\left|\Pro_{\policyy}(s_{t+1}=s^{'}|s)-\Pro_{\policy}(s_{t+1}=s^{'}|s)\right|
 \right)
 d_{\pi_{\bm {\theta}}}^{\lambda}(s).
\end{flalign}.

Before we provide a further analyze (\ref{error-gap-01}), we need to bound $|\Pro_{\policyy}(s_{t+1}=s^{'}|s)-\Pro_{\policy}(s_{t+1}=s^{'}|s)|$.
Let $s_0=s$, then 
\begin{flalign}
\nonumber
\mathbb{P}_{{\policy}}(s_{t+1}=s^{'}|s)\overset{\ref{pro-pi-t-step-app}}=&\sum_{s_1\in\mathcal{S}}\mathbb{P}_{{\policy}}(s_{t+1}=s^{'}|s_1)\mathbb{P}_{{\policy}}(s_1|s_0)\\
\nonumber
=&\sum_{s_1\in\mathcal{S}}\sum_{s_2\in\mathcal{S}}\mathbb{P}_{{\policy}}(s_{t+1}=s^{'}|s_2)\mathbb{P}_{{\policy}}(s_{2}|s_1)\mathbb{P}_{{\policy}}(s_1|s_0)\\
\nonumber
=&\cdots
\\
\nonumber
=&\sum_{s_1\in\mathcal{S}}\sum_{s_2\in\mathcal{S}}\cdots\sum_{s_{t}\in\mathcal{S}}\left(\prod_{i=1}^{t+1}\mathbb{P}_{{\policy}}(s_{i}|s_{i-1})\right)
\\
\label{multi-step-pro-01}
=&\sum_{s_1\in\mathcal{S}}\sum_{s_2\in\mathcal{S}}\cdots\sum_{s_{t}\in\mathcal{S}}\left(\prod_{i=1}^{t+1}
\left(
\sum_{a_i\in\calA}\mathbb{P}(s_{i}|s_{i-1},a_i){{\policy}}(a_i|s_{i-1})
\right)
\right).
\end{flalign}
Similarly, we have
\begin{flalign}
\label{multi-step-pro-02}
\mathbb{P}_{{\policyy}}(s_{t+1}=s^{'}|s)=&\sum_{s_1\in\mathcal{S}}\sum_{s_2\in\mathcal{S}}\cdots\sum_{s_{t}\in\mathcal{S}}\left(\prod_{i=1}^{t+1}
\left(
\sum_{a_i\in\calA}\mathbb{P}(s_{i}|s_{i-1},a_i){{\policyy}}(a_i|s_{i-1})
\right)
\right).
\end{flalign}

Then, according to the results (\ref{multi-step-pro-01})-(\ref{multi-step-pro-02}), let $s_0=s$, the following holds
\begin{flalign}
\nonumber
&\sum_{s^{'}\in\calS}|\mathbb{P}_{{\policyy}}(s_{t+1}=s^{'}|s)-\mathbb{P}_{{\policy}}(s_{t+1}=s^{'}|s)|\\
\nonumber
=&\sum_{s^{'}\in\calS}\left|
\sum_{s_1\in\mathcal{S}}\sum_{s_2\in\mathcal{S}}\cdots\sum_{s_{t}\in\mathcal{S}}\left(\prod_{i=1}^{t+1}
\left(
\sum_{a_i\in\calA}\mathbb{P}(s_{i}|s_{i-1},a_i)
\left(
{{\policyy}}(a_i|s_{i-1})-{{\policy}}(a_i|s_{i-1})
\right)
\right)
\right)\right|
\\
\nonumber
\leq&
\sum_{s_1\in\mathcal{S}}\sum_{s_2\in\mathcal{S}}\cdots\sum_{s_{t}\in\mathcal{S}}\left(\prod_{i=1}^{t+1}
\sum_{a_i\in\calA}
\left|
{{\policyy}}(a_i|s_{i-1})-{{\policy}}(a_i|s_{i-1})
\right|
\right)\\
\nonumber
=&
\sum_{s_1\in\mathcal{S}}\sum_{s_2\in\mathcal{S}}\cdots\sum_{s_{t}\in\mathcal{S}}\left(\prod_{i=2}^{t+1}
\sum_{a_i\in\calA}
\left|
{{\policyy}}(a_i|s_{i-1})-{{\policy}}(a_i|s_{i-1})
\right|
\right)
\cdot
\left(
\sum_{a_1\in\calA}
\left|
{{\policyy}}(a_1|s_{0})-{{\policy}}(a_1|s_{0})
\right|
\right)
\\
\nonumber
=&
\prod_{i=2}^{t+1}
\left(
\sum_{s_{i-1}\in\mathcal{S}}
\sum_{a_i\in\calA}
\left|
{{\policyy}}(a_i|s_{i-1})-{{\policy}}(a_i|s_{i-1})
\right|
\right)
\cdot
\left(
\sum_{a_1\in\calA}
\left|
{{\policyy}}(a_1|s_{0})-{{\policy}}(a_1|s_{0})
\right|
\right)
\\
\label{multi-step-pro-03}
=&\left(
\underbrace{
\sum_{s\in\mathcal{S}}
\sum_{a\in\calA}
\left|
{{\policyy}}(a|s)-{{\policy}}(a|s)
\right|
}_{=:\left\|\bm{\Pi}_{\policyy}-\bm{\Pi}_{\policy}\right\|_{1,1}}
\right
)^{t}
\cdot
\left(
\sum_{a\in\calA}
\left|
{{\policyy}}(a|s)-{{\policy}}(a|s)
\right|
\right).
\end{flalign}

Taking the result (\ref{multi-step-pro-03}) to (\ref{error-gap-01}), we have
\begin{flalign}
\nonumber
\|\bD\bd_{\pi_{\bm {\theta}}}^{\lambda}\|_1
 \leq&
 (1-\gamma\lambda)\sum_{{t}=0}^{\infty}(\gamma\lambda)^{{t}} 
\left\|\bm{\Pi}_{\policyy}-\bm{\Pi}_{\policy}\right\|^{t}_{1,1}
\sum_{s\in\calS}
\underbrace{\sum_{a\in\calA}\left|{{\policyy}}(a|s)-{{\policy}}(a|s)\right|}_{=:2D_{\text{TV}}(\policyy,\policy)[s]}d_{\pi_{\bm {\theta}}}^{\lambda}(s)
\\
\nonumber
=& (1-\gamma\lambda)\sum_{{t}=0}^{\infty}(\gamma\lambda)^{{t}} 
\|\bm{\Pi}_{\policyy}-\bm{\Pi}_{\policy}\|^{t}_{1,1}
\E_{s\sim d_{\pi_{\bm {\theta}}}^{\lambda}(\cdot)}
\Big[2D_{\text{TV}}(\policyy,\policy)[s]\Big]\\
\label{app-distri-difference-03}
=&\dfrac{1-\gamma\lambda}{1-\gamma\lambda\|\bm{\Pi}_{\policyy}-\bm{\Pi}_{\policy}\|_{1,1}}\E_{s\sim d_{\pi_{\bm {\theta}}}^{\lambda}(\cdot)}
\Big[2D_{\text{TV}}(\policyy,\policy)[s]\Big].
\end{flalign}

Finally, according to (\ref{app-distri-difference-01}), (\ref{app-distri-difference-02}) and (\ref{app-distri-difference-03}), we have 
\begin{flalign}
\|\bd_{\policyy}^{\lambda}-\bd_{\pi_{\bm {\theta}}}^{\lambda}\|_1
\leq
\dfrac{\tilde\gamma}{1-\tilde\gamma}\cdot
\dfrac{\gamma(1-\lambda)}{1-\gamma\lambda\|\bm{\Pi}_{\policyy}-\bm{\Pi}_{\policy}\|_{1,1}}\E_{s\sim d_{\pi_{\bm {\theta}}}^{\lambda}(\cdot)}
\Big[2D_{\text{TV}}(\policyy,\policy)[s]\Big].
\end{flalign}
Recall (\ref{difference-01}), we have
\begin{flalign}
\label{difference-02}
\|\bm{\Pi}_{\policyy}-\bm{\Pi}_{\policy}\|_{1,1}=
\sum_{s\in\mathcal{S}}
\sum_{a\in\calA}
\left|
{{\policyy}}(a|s)-{{\policy}}(a|s)
\right|\leq2|\calS||\calA|.
\end{flalign}
Then, we achieve the boundedness of $\|\bm{\Pi}_{\policyy}-\bm{\Pi}_{\policy}\|_{1,1}$ as follows,
\begin{flalign}
\|\bd_{\policyy}^{\lambda}-\bd_{\pi_{\bm {\theta}}}^{\lambda}\|_1
\leq
\dfrac{\tilde\gamma}{1-\tilde\gamma}\cdot
\dfrac{1-\gamma\lambda}{\left|1-2\gamma\lambda|\calS||\calA|\right|}\E_{s\sim d_{\pi_{\bm {\theta}}}^{\lambda}(\cdot)}
\Big[2D_{\text{TV}}(\policyy,\policy)[s]\Big].
\end{flalign}
Similarly, we obtain
\begin{flalign}
\nonumber
\|\bd_{\policyy}^{\lambda}-\bd_{\pi_{\bm {\theta}}}^{\lambda}\|_1
\leq
\dfrac{1}{1-\tilde\gamma}\cdot
\dfrac{1-\gamma\lambda}{\left|1-2\gamma\lambda|\calS||\calA|\right|}\E_{s\sim d_{\policyy}^{\lambda}(\cdot)}
\Big[2D_{\emph{TV}}(\policyy,\policy)[s]\Big].
\end{flalign}
\end{proof}

\clearpage

\section{Proof of Theorem \ref{them-re-cost}}
\label{sec:app-them2}

\textbf{Theorem \ref{them-re-cost}}
\emph{
Let $\chi_k=\E_{s\sim{d}_{{\pi_{\bm{\theta}_k}}}^{\lambda}(\cdot)}\left[\emph{KL}\left(\pi_{\bm{\theta}_k},\pi_{\bm{\theta}_{k+\frac{1}{2}}}\right)[s]\right]$, if $\pi_{\bm{\theta}_k}$ and 
$\pi_{\bm{\theta}_{k+1}}$ are related to (\ref{performance-improvement})-(\ref{projection}),
then the lower bound on policy improvement, and upper bound on constraint violation are
\[
J(\pi_{\bm{\theta}_{k+1}})-J(\pi_{\bm{\theta}_{k}})\ge-\frac{\gamma(1-\lambda)\alpha_k\sqrt{2\chi_k}\epsilon^{V}_{\policy}(\policyy)}{(1-\gamma)\left|1-2\gamma\lambda|\calS||\calA|\right|},
J^{c}(\pi_{\bm{\theta}_{k+1}})\leq b+\frac{\gamma(1-\lambda)\beta_k\sqrt{2\chi_k}\epsilon^{C}_{\policy}(\policyy)}{(1-\gamma)\left|1-2\gamma\lambda|\calS||\calA|\right|}.
\]
}

\begin{proof}(of Theorem \ref{them-re-cost})

According to Bregman divergence, if policy $\pi_{\bm{\theta}_{k}}$ is feasible, policy $\pi_{\bm{\theta}_{k+1}}$ is generated according to (\ref{projection}), then
the following 
\[
\text{KL}\left(\pi_{\bm{\theta}_{k}},\pi_{\bm{\theta}_{k+\frac{1}{2}}}\right)
\ge 
\text{KL}\left(\pi_{\bm{\theta}_{k}},\pi_{\bm{\theta}_{k+1}}\right)
+
\text{KL}\left(\pi_{\bm{\theta}_{k+1}},\pi_{\bm{\theta}_{k+\frac{1}{2}}}\right)
\]
implies
\[
\chi_k=\E_{s\sim{d}_{{\pi_{\bm{\theta}_k}}}^{\lambda}(\cdot)}\left[\text{KL}\left(\pi_{\bm{\theta}_k},\pi_{\bm{\theta}_{k+\frac{1}{2}}}\right)[s]\right]
\ge
\E_{s\sim{d}_{{\pi_{\bm{\theta}_k}}}^{\lambda}(\cdot)}\left[\text{KL}\left(\pi_{\bm{\theta}_{k+1}},\pi_{\bm{\theta}_{k}}\right)[s]\right].
\]
 According to the asymptotically symmetry of KL divergence if we update the policy within a local region, then, we have
 \[
 \chi_k\ge\E_{s\sim{d}_{{\pi_{\bm{\theta}_k}}}^{\lambda}(\cdot)}\left[\text{KL}\left(\pi_{\bm{\theta}_{k+\frac{1}{2}}},\pi_{\bm{\theta}_{k}}\right)[s]\right]\ge
\E_{s\sim{d}_{{\pi_{\bm{\theta}_k}}}^{\lambda}(\cdot)}\left[\text{KL}\left(\pi_{\bm{\theta}_{k+1}},\pi_{\bm{\theta}_{k}}\right)[s]\right].
 \]

Furthermore,  according to Proposition \ref{propo-01} and Proposition \ref{propo-03}, we have
\begin{flalign}
\nonumber
&J(\pi_{\bm{\theta}_{k+1}})-J(\pi_{\bm{\theta}_{k}})\\
\nonumber
\ge&
\frac{1}{1-\tilde\gamma}\E_{s\sim{d}_{\pi_{\bm{\theta}_{k}}}^{\lambda}(\cdot),a\sim\pi_{\bm{\theta}_{k+1}}(\cdot|s)}
\left[
A^{\text{GAE}(\gamma,\lambda)}_{\pi_{\bm{\theta}_{k}}}(s,a)
-
\frac{2\gamma(1-\lambda)\epsilon^{V}_{\pi_{\bm{\theta}_{k+1}}}(\pi_{\bm{\theta}_{k}})}{(1-\gamma\lambda)\left|1-2\gamma\lambda|\calS||\calA|\right|}
D_{\text{TV}}(\pi_{\bm{\theta}_{k}},\pi_{\bm{\theta}_{k+1}})[s]
\right]\\
\nonumber
\ge&
\frac{1}{1-\tilde\gamma}\E_{s\sim{d}_{\pi_{\bm{\theta}_{k}}}^{\lambda}(\cdot),a\sim\pi_{\bm{\theta}_{k+1}}(\cdot|s)}
\left[
-
\frac{2\gamma(1-\lambda)\alpha_k\epsilon^{V}_{\pi_{\bm{\theta}_{k+1}}}(\pi_{\bm{\theta}_{k}})}{(1-\gamma\lambda)\left|1-2\gamma\lambda|\calS||\calA|\right|}
\sqrt{\dfrac{1}{2}\E_{s\sim{d}_{\pi_{\bm{\theta}_{k}}}^{\lambda}(\cdot)}\left[\text{KL}(\pi_{\bm{\theta}_{k}},\pi_{\bm{\theta}_{k+1}})[s]\right]}
\right]\\
\nonumber
\ge&
\frac{1}{1-\tilde\gamma}\E_{s\sim{d}_{\pi_{\bm{\theta}_{k}}}^{\lambda}(\cdot),a\sim\pi_{\bm{\theta}_{k+1}}(\cdot|s)}
\left[
-
\frac{\gamma(1-\lambda)\alpha_k\sqrt{2\chi_k}\epsilon^{V}_{\pi_{\bm{\theta}_{k+1}}}(\pi_{\bm{\theta}_{k}})}{(1-\gamma\lambda)\left|1-2\gamma\lambda|\calS||\calA|\right|}
\right].
\end{flalign}

Similarly,  according to Proposition \ref{propo-01} and Proposition \ref{pro-02}, and since policy $\pi_{\bm{\theta}_{k+1}}$ satisfies
\begin{flalign}
\label{app-c-01}
J^{c}(\pi_{{\bm{\theta}}_k})+\dfrac{1}{1-\tilde\gamma}\E_{s\sim{d}_{\pi_{\bm{\theta}_k}}^{\lambda}(\cdot),a\sim\pi_{\bm{\theta}_{k+1}}(\cdot|s)}
\left[
A^{\text{GAE}(\gamma,\lambda)}_{{\pi_{\bm{\theta}_k}},C}(s,a)\right]+\beta_k
\sqrt{
\E_{s\sim{d}_{{\pi_{\bm{\theta}_k}}}^{\lambda}(\cdot)}\left[\text{KL}(\pi_{\bm{\theta}_k},\pi_{\bm{\theta}_{k+1}})[s]\right]}\leq b,
\end{flalign}
and 
\begin{flalign}
\label{app-c-02}
&J^{c}(\pi_{\bm{\theta}_{k+1}})-J^{c}(\pi_{\bm{\theta}_{k}})\\
\nonumber
\leq&
\frac{1}{1-\tilde\gamma}\E_{s\sim{d}_{\pi_{\bm{\theta}_{k}}}^{\lambda}(\cdot),a\sim\pi_{\bm{\theta}_{k+1}}(\cdot|s)}
\left[
A^{\text{GAE}(\gamma,\lambda)}_{\pi_{\bm{\theta}_{k}},C}(s,a)
+
\frac{2\gamma(1-\lambda)\beta_k\epsilon^{C}_{\pi_{\bm{\theta}_{k+1}}}(\pi_{\bm{\theta}_{k}})}{(1-\gamma\lambda)\left|1-2\gamma\lambda|\calS||\calA|\right|}
D_{\text{TV}}(\pi_{\bm{\theta}_{k}},\pi_{\bm{\theta}_{k+1}})[s]
\right].
\end{flalign}
Combining (\ref{app-c-01})- (\ref{app-c-02}), we have
\begin{flalign}
\label{app-c-02}
&J^{c}(\pi_{\bm{\theta}_{k+1}})-J^{c}(\pi_{\bm{\theta}_{k}})\\
\nonumber
\leq&b+
\frac{1}{1-\tilde\gamma}\E_{s\sim{d}_{\pi_{\bm{\theta}_{k}}}^{\lambda}(\cdot),a\sim\pi_{\bm{\theta}_{k+1}}(\cdot|s)}
\left[
\frac{2\gamma(1-\lambda)\beta_k\epsilon^{C}_{\pi_{\bm{\theta}_{k+1}}}(\pi_{\bm{\theta}_{k}})}{(1-\gamma\lambda)\left|1-2\gamma\lambda|\calS||\calA|\right|}
\sqrt{\dfrac{1}{2}\E_{s\sim{d}_{\pi_{\bm{\theta}_{k}}}^{\lambda}(\cdot)}\left[\text{KL}(\pi_{\bm{\theta}_{k}},\pi_{\bm{\theta}_{k+1}})[s]\right]}
\right]\\
\leq&b+
\frac{1}{1-\tilde\gamma}\E_{s\sim{d}_{\pi_{\bm{\theta}_{k}}}^{\lambda}(\cdot),a\sim\pi_{\bm{\theta}_{k+1}}(\cdot|s)}
\left[
\frac{\gamma(1-\lambda)\beta_k\sqrt{2\chi_k}\epsilon^{C}_{\pi_{\bm{\theta}_{k+1}}}(\pi_{\bm{\theta}_{k}})}{(1-\gamma\lambda)\left|1-2\gamma\lambda|\calS||\calA|\right|}
\right].
\end{flalign}

\end{proof}

\clearpage
\section{More Details for Experiments}
\label{sec:app-ex}
All experiments were implemented in Pytorch 1.7.0 with CUDA 11.0 and conducted on an Ubuntu 20.04.2 LTS (GNU/Linux 5.8.0-59-generic x86 64) with 40 CPU cores (Intel(R)
Xeon(R) Silver 4210R CPU @ 2.40GHz), 251G memory and 4 GPU cards (GeForce RTX 3080).
The baseline algorithm FOCOPS based on \url{https://github.com/ymzhang01/focops}, which were offical code library. The other baseline algorithms include CPO, TRPO-L, PPO-L based on \url{https://github.com/openai/safety-starter-agents}, which published by OpenAI.
\subsection{Robots with Speed limit}
We used the MuJoCo environments provided by OpenAI Gym \cite{brockman2016openai} for this set of experiments. For agents manuvering on a two-dimensional plane, the cost is calculated as
$$
C(s,a) = \sqrt{v_x^2 + v_y^2}
$$
For agents moving along a straight line, the cost is calculated as
$$
C(s, a) = |v_x|
$$
where $v_x,v_y$ are the velocities of the agent in the $x$ and $y$ directions respectively.
\subsection{Robots with Circle}
\begin{figure}[H]
    \centering
    \includegraphics[width=0.22\textwidth]{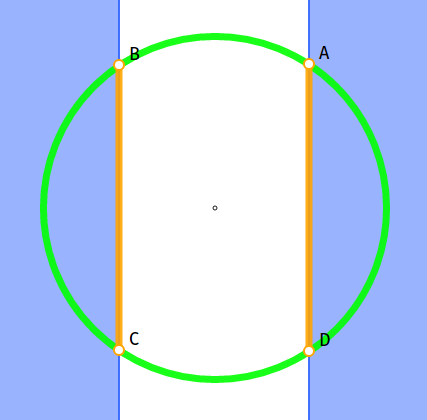}
    \label{fig:circle}
    \caption{In the Circle task, reward is maximized by moving along the green circle. The agent is not allowed to enter the blue regions, so its optimal constrained path follows the line segments $AD$ and $BC$.}
\end{figure}
The environment is inspired by \cite{AchiamHTA17}. Reward is maximized by moving along a circle of radius $d$:
\begin{align*}
R(s) &= \frac{v^T [-y, x]}{1+ \left| \|[x,y]\|_2 - d \right|}, \\
C(s) &= \pmb{1} \left[ |x| > x_{lim}\right],
\end{align*}
where $x,y$ are the coordinates in the plane, $v$ is the velocity, and $d, x_{lim}$ are environmental parameters. 
\newpage
\subsection{Algorithm Parameters}
\begin{table}[h]
    \centering
    
    \vskip 0.15in
    \begin{tabular}{l l l l l l}
    \toprule
       Hyperparameter & CUP & PPO-L & TRPO-L & CPO & FOCOPS \\
        \midrule
    No. of hidden layers & 2 & 2 & 2 & 2 & 2 \\
      No. of hidden nodes  & 64 & 64 & 64 & 64 & 64 \\
      Activation & $\tanh$ & $\tanh$ & $\tanh$ & $\tanh$ & $\tanh$\\
      Initial log std & -0.5 & -0.5 & -1 & -0.5 & -0.5\\
      Discount for reward $\gamma$ & 0.99 & 0.99 & 0.99 & 0.99 & 0.99 \\
      Discount for cost $\gamma_{C}$ & 0.99 & 0.99 & 0.99 & 0.99 & 0.99 \\
      Batch size & 5000 & 5000 & 5000 & 5000 & 5000 \\
      Minibatch size & 64 & 64 & N/A & N/A & 64 \\
      No. of optimization epochs & 10 & 10 & N/A & N/A & 10 \\
      Maximum episode length & 1000 & 1000 & 1000 & 1000 & 1000 \\
      GAE parameter (reward) & 0.95 & 0.95 & 0.95 & 0.95 & 0.95 \\ 
      GAE parameter (cost) & 0.95 & 0.95 & 0.95 & 0.95 & 0.95 \\
      Learning rate for policy & $3\times 10^{-4}$ & $3\times 10^{-4}$ & N/A & N/A & $3\times 10^{-4}$\\
      Learning rate for reward value net & $3\times 10^{-4}$ & $3\times 10^{-4}$ & $3\times 10^{-4}$ & $3\times 10^{-4}$ & $3\times 10^{-4}$ \\
      Learning rate for cost value net & $3\times 10^{-4}$ & $3\times 10^{-4}$ & $3\times 10^{-4}$ & $3\times 10^{-4}$ & $3\times 10^{-4}$ \\
      Learning rate for $\nu$ & 0.01 & 0.01 & 0.01 & N/A & 0.01 \\
      $L2$-regularization coeff. for value net & $10^{-3}$ & $3\times 10^{-3}$ & $3\times 10^{-3}$ & $3\times 10^{-3}$ & $10^{-3}$ \\
      Clipping coefficient & N/A & 0.2 & N/A & N/A & N/A \\
      Damping coeff. & N/A & N/A & 0.01 & 0.01 & N/A \\
       Backtracking coeff. & N/A & N/A & 0.8 & 0.8 & N/A \\
       Max backtracking iterations & N/A & N/A & 10 & 10 & N/A \\
       Max conjugate gradient iterations & N/A & N/A & 10 & 10 & N/A\\
       Iterations for training value net & 1 & 1 & 80 & 80 & 1 \\
     Temperature $\lambda$ & 1.5 & N/A & N/A & N/A & 1.5 \\
      Trust region bound $\delta$ & 0.02 & N/A & 0.01 & 0.01 & 0.02 \\
      Initial $\nu$, $\nu_{\max}$ & 0, 2 & 0, 1 & 0, 2 & N/A & 0, 2 \\
       \bottomrule
    \end{tabular}
    \caption{Hyper-parameters for robots.}
    \label{tab:hyperparameters}
\end{table}

\end{document}